\documentclass[12pt,letterpaper]{article}

\usepackage{microtype}
\usepackage{graphicx}
\usepackage{caption}
\usepackage{subcaption}
\usepackage{booktabs} 
\usepackage[margin=3cm]{geometry}
\setlength {\marginparwidth }{2cm}
\usepackage{hyperref}




\usepackage{amsmath}
\usepackage{amssymb}
\usepackage{mathtools}
\usepackage{amsthm}

\usepackage[capitalize,noabbrev]{cleveref}

\theoremstyle{plain}
\newtheorem{theorem}{Theorem}[section]
\newtheorem{proposition}[theorem]{Proposition}
\newtheorem{lemma}[theorem]{Lemma}

\theoremstyle{definition}
\newtheorem{definition}[theorem]{Definition}

\theoremstyle{remark}

\newcommand{\T}{\mathcal{T}}
\DeclareMathOperator{\supp}{supp}
\DeclareMathOperator{\diag}{diag}
\usepackage[textsize=tiny]{todonotes}


\usepackage{authblk}

\begin{document}
\author[1]{Yikun Bai}
\author[1]{Ivan Medri}
\author[2]{Rocio Diaz Martin}
\author[1]{Rana Muhammad Shahroz Khan}
\author[1]{Soheil Kolouri}
\affil[1]{Department of Computer Science, Vanderbilt University}
\affil[2]{Department of Mathematics, Vanderbilt University}

\affil[1]{yikun.bai@vanderbilt.edu}
\affil[1]{ivan.v.medri@vanderbilt.edu}
\affil[2]{rocio.p.diaz.martin@vanderbilt.edu}
\affil[1]{rana.muhammad.shahroz.khan@vanderbilt.edu}
\affil[1]{soheil.kolouri@vanderbilt.edu}
\date{}
\title{Linear optimal partial transport embedding}

\maketitle

\begin{abstract}
Optimal transport (OT) has gained popularity due to its various applications in fields such as machine learning, statistics, and signal processing. However, the balanced mass requirement limits its performance in practical problems. To address these limitations, variants of the OT problem, including unbalanced OT, Optimal partial transport (OPT), and Hellinger Kantorovich (HK), have been proposed. In this paper, we propose the Linear optimal partial transport (LOPT) embedding, which extends the (local) linearization technique on OT and HK to the OPT problem. The proposed embedding allows for faster computation of OPT distance between pairs of positive measures. Besides our theoretical contributions, we demonstrate the LOPT embedding technique in point-cloud interpolation and PCA analysis. Our code is available at \url{https://github.com/Baio0/LinearOPT}.
\end{abstract}

\section{Introduction}
\label{sec:intro}

The Optimal Transport (OT) problem has found numerous applications in machine learning (ML), computer vision, and graphics. The probability metrics and dissimilarity measures emerging from the OT theory, e.g., Wasserstein distances and their variations,  are used in diverse applications, including training generative models \cite{arjovsky2017wasserstein,genevay2017gan,liu2019wasserstein}, domain adaptation \cite{courty2014domain,courty2017joint}, bayesian inference \cite{kim2013efficient}, regression \cite{janati2019wasserstein}, clustering \cite{ye2017fast}, learning from graphs \cite{kolouri2020wasserstein} and point sets \cite{naderializadeh2021pooling,nguyen2023self}, to name a few.  These metrics define a powerful geometry for comparing probability measures with numerous desirable properties, for instance, parameterized geodesics \cite{ambrosio2005gradient}, barycenters \cite{cuturi2014fast}, and a weak Riemannian structure \cite{Villani2003Topics}. 

In large-scale machine learning applications, optimal transport approaches face two main challenges. First, the OT problem is computationally expensive. This has motivated many approximations that lead to significant speedups \cite{cuturi2013sinkhorn,chizat2020faster,scetbon2022lowrank}.  Second, while OT is designed for comparing probability measures, many ML problems require comparing non-negative measures with varying total amounts of mass. This has led to the recent advances in unbalanced optimal transport \cite{Chizat2015Interpolating,chizat2018unbalanced,Liero2018Optimal} and optimal partial transport \cite{caffarelli2010free,figalli2010optimal,figalli2010new}. Such unbalanced/partial optimal transport formulations have been recently used to improve minibatch optimal transport \cite{nguyen2022improving} and for point-cloud registration \cite{bai2022sliced}.

Comparing $K$ (probability) measures requires the pairwise calculation of transport-based distances, which, despite the significant recent computational speed-ups, remains to be relatively expensive. To address this problem, \cite{wang2013linear} proposed the Linear Optimal Transport (LOT) framework, which linearizes the 2-Wasserstein distance utilizing its weak Riemannian structure. In short, the probability measures are embedded into the tangent space at a fixed reference measure (e.g., the measures' Wasserstein barycenter) through a logarithmic map. The Euclidean distances between the embedded measures then approximate the 2-Wasserstein distance between the probability measures. 
 The LOT framework is computationally attractive as it only requires the computation of one optimal transport problem per input measure, reducing the otherwise quadratic cost to linear. Moreover, the framework provides theoretical guarantees on convexifying certain sets of probability measures \cite{moosmuller2020linear,aldroubi2021partitioning}, which is critical in supervised and unsupervised learning from sets of probability measures. The LOT embedding has recently found diverse applications, from comparing collider events in physics \cite{cai2020linearized} and comparing medical images \cite{basu2014detecting,kundu2018discovery} to permutation invariant pooling for comparing graphs \cite{kolouri2020wasserstein} and point sets \cite{naderializadeh2021pooling}.

Many applications in ML involve comparing non-negative measures (often empirical measures) with varying total amounts of mass, e.g., domain adaptation \cite{fatras2021unbalanced}. Moreover, OT distances (or dissimilarity measures) are often not robust against outliers and noise, resulting in potentially high transportation costs for outliers. Many recent publications have focused on variants of the OT problem that allow for comparing non-negative measures with unequal mass. For instance, the optimal partial transport (OPT) problem \cite{caffarelli2010free,figalli2010optimal,figalli2010new}, 
 Kantorovich--Rubinstein norm \cite{guittet2002extended,lellmann2014imaging}, and the Hellinger--Kantorovich distance \cite{chizat2018interpolating,Liero2018Optimal}. These methods fall under the broad category of ``unbalanced optimal transport'' \cite{chizat2018unbalanced,Liero2018Optimal}. The existing solvers for ``unbalanced optimal transport'' problems are generally as expensive or more expensive than the OT solvers. Hence, computation time remains a main bottleneck of these approaches.

 To reduce the computational burden for comparing unbalanced measures, \cite{cai2022linearized} proposed a clever extension for the LOT framework to unbalanced nonnegative measures by linearizing the Hellinger-Kantorovich, denoted as Linearized Hellinger-Kantorovich (LHK), distance, with many desirable theoretical properties. However, an unintuitive caveat about HK and LHK formulation is that the geodesic for the transported portion of the mass does not resemble the OT geodesic. In particular, the transported mass does not maintain a constant mass as it is transported (please see Figure \ref{fig: HK and OPT}). In contrast, OPT behaves exactly like OT for the transported mass with the trade-off of losing the Riemannian structure of HK.

\textbf{Contributions:} In this paper, inspired by OT geodesics, we provide an OPT interpolation technique using its dynamic formulation and explain how to compute it for empirical distributions using barycentric projections. We use this interpolation to embed the space of measures in a euclidean space using optimal partial transport concerning a reference measure. This allows us to extend the LOT framework to LOPT, a linearized version of OPT. Thus, we reduce the computational burden of OPT while maintaining the decoupling properties between noise (created and destroyed mass) and signal (transported mass) of OPT. We propose a LOPT discrepancy measure and a LOPT interpolating curve and contrast them with their OPT counterparts. Finally, we demonstrate applications of the new framework in point cloud interpolation and PCA analysis, showing that the new technique is more robust to noise.

\textbf{Organization:} In section \ref{sec: background}, we review Optimal Transport Theory and the Linear Optimal Transport framework to set the basis and intuitions on which we build our new techniques. In Section \ref{sec: LOPT} we review Optimal Partial Transport Theory and present an explicit solution to its Dynamic formulation that we use to introduce the Linear Optimal Partial Transport framework (LOPT). We define LOPT Embedding, LOPT Discrepancy, LOPT interpolation and give explicit ways to work with empirical data. In Section \ref{sec: applications} we show  applications of the LOPT framework to approximate OPT distances, to interpolate between point cloud datasets, and to preprocess data for PCA analysis. In the appendix, we provide proofs for all the results, new or old, for which we could not find a proof in the literature.




\begin{figure}
\centering
\includegraphics[width=\textwidth]{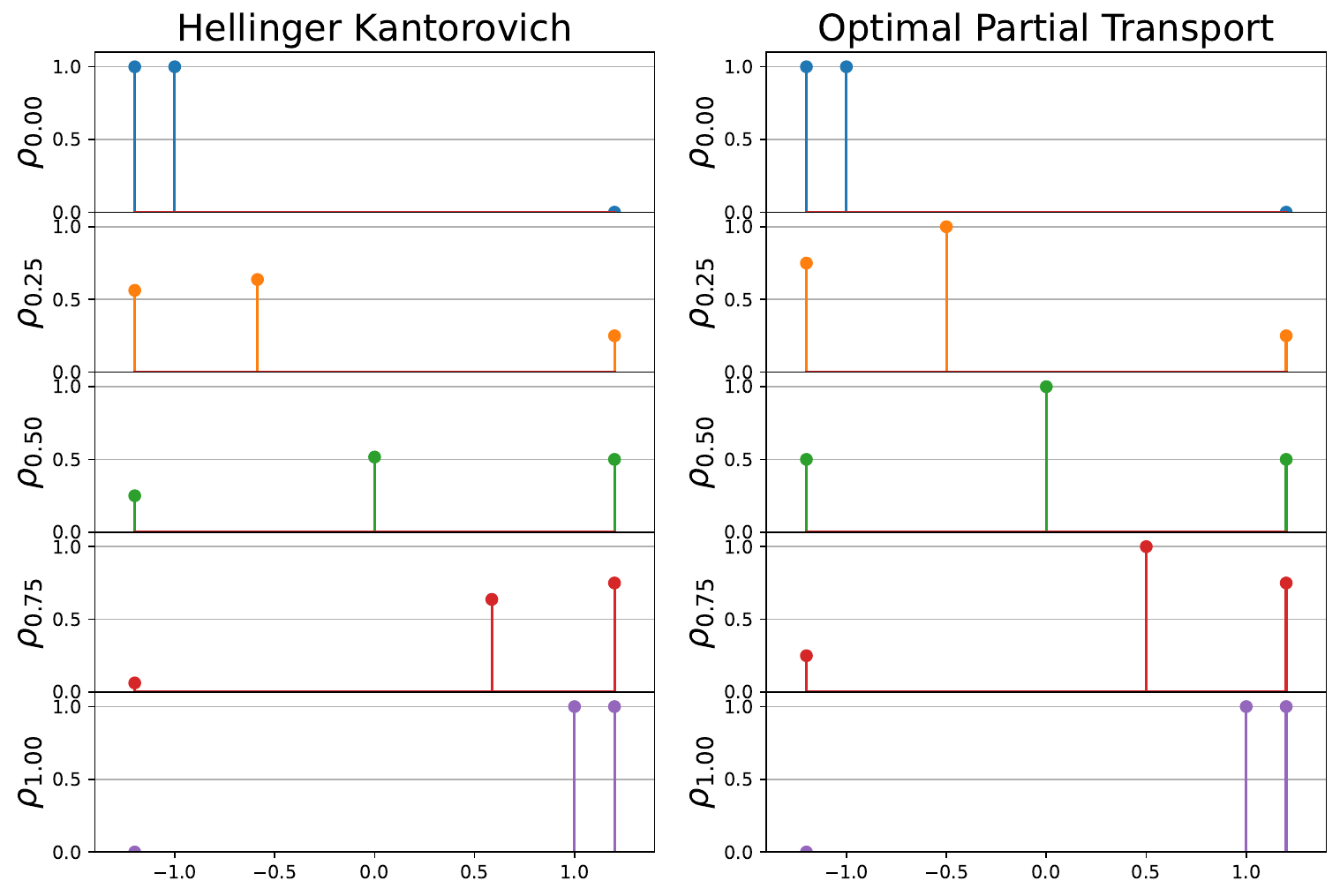}
\caption{The depiction of the HK and OPT geodesics between two measures, at times $t\in\{0,0.25,0.5,0.75,1\}$. The top row (Blue) represents two initial deltas of mass one located at positions -1.2 and -1. The bottom row (Purple) shows two final deltas of mass one located at 1 and 1.2. At intermediate time steps $t=0.25,0.5,0.75$, the transported part (middle delta moving from -1 to 1) changes mass for HK while its mass remains constant for OPT. Outer masses (located at -1.2 for initial time $t=0$, and at 1.2 for final time $t=1$) are being destroyed and created, so mass changes are expected. Notably, mass is created/destroyed with a linear rate for OPT and a nonlinear rate for HK.
See Appendix \ref{sec: HK vs OPT} for further analysis.}
\label{fig: HK and OPT}
\end{figure}

\section{Background: OT and LOT}\label{sec: background}

\subsection{Static Formulation of Optimal Transport}
Let $\mathcal{P}(\Omega)$ be the set of Borel probability measures defined in a convex compact subset $\Omega$ of $\mathbb{R}^d$, and consider $\mu^0,\mu^j\in \mathcal{P}(\Omega)$. 
The Optimal Transport (OT) problem between $\mu^0$ and $\mu^j$ is that of finding the cheapest way to transport $\textit{all}$ the mass distributed according to the \textit{reference} measure $\mu^0$ onto a new distribution of mass determined by the \textit{target} measure $\mu^j$.
Mathematically, it was stated by Kantorovich as the minimization problem
\begin{align}
    &OT(\mu^0,\mu^j):= \inf_{\gamma\in\Gamma(\mu^0,\mu^j)}C(\gamma;\mu^0,\mu^j) \label{eq: OT} \\
    &\text{for } \quad C(\gamma;\mu^0,\mu^j) := \int_{\Omega^2} \|x^0-x^j\|^2 d\gamma(x^0,x^j)\label{eq: OT cost},
\end{align}
where $\Gamma(\mu^0,\mu^j)$ is the set of all joint probability measures in $\Omega^2$ with marginals $\mu^0$ and $\mu^j$. A measure $\gamma\in\Gamma(\mu^0,\mu^j)$ is called a \textit{transportation plan}, and given
measurable sets $A,B \in\Omega$, the coupling $\gamma(A\times B)$ describes how much
mass originally in the set $A$ is transported into
the set $B$. The squared of the Euclidean distance\footnote{More general cost functions might be used, but they are beyond the scope of this article.} $\|x^0-x^j\|^2$ is interpreted as the cost of transporting a unit mass located at $x^0$ to $x^j$.  Therefore,   $C(\gamma;\mu^0,\mu^j)$ represents the total cost of moving $\mu^0$ to $\mu^j$ according to $\gamma$. Finally, we will denote the set of all plans that achieve the infimum in \eqref{eq: OT}, which is non-empty \cite{Villani2003Topics},  as $\Gamma^*(\mu^0,\mu^j)$.

Under certain conditions (e.g. when $\mu^0$ has continuous density), an optimal plan $\gamma$ can be induced by a rule/map $T$ that takes all the mass at each position $x$ to a unique point $T(x)$. 
If that is the case, we say that $\gamma$ 
does not split mass and that it is \textbf{induced by a map T}. In fact, it is concentrated on the graph of $T$ in the sense that for all measurable sets $A,B\subset\Omega$,
$ \, \gamma(A\times B)=\mu^0(\{x\in A: \, T(x)\in B\})$, and we will write it as the \textit{pushforward} $\gamma=(\mathrm{id}\times T)_\#\mu^0$.
Hence,  \eqref{eq: OT} reads as
\begin{equation}
    OT(\mu^0,\mu^j) = \int_{\Omega} \|x-T(x)\|^2 d\mu^0(x) \label{eq: OT map cost}
\end{equation}
The function $T:\Omega\to\Omega$ is called a \textit{Monge map}, and when $\mu^0$ is absolutely continuous it is unique \cite{brenier1991polar}. 

Finally, the square root of the optimal value $OT(\cdot,\cdot)$ is exactly the so-called \textbf{Wasserstein distance}, $W_2$, in $\mathcal{P}(\Omega)$ \cite[Th.7.3]{Villani2003Topics}, and we will call it also as \textbf{OT squared distance}. In addition, with this distance,  
$\mathcal{P}(\Omega)$ is not only a metric space but also a Riemannian manifold \cite{Villani2003Topics}. In particular, the tangent space of any
$\mu\in \mathcal{P}(\Omega)$ is $\T_{\mu}=L^2(\Omega;\mathbb{R}^d,\mu)=\{u:\Omega\to \mathbb{R}^d: \|u\|_{\mu}^2<\infty\}$, where 
\begin{equation}\label{eq: norm tangent space}
  \|u\|_{\mu}^2:=\int_\Omega \|u(x)\|^2d\mu(x).  
\end{equation}

\subsection{Dynamic Formulation of Optimal Transport}\label{sec: ot dynamic}
To understand the framework of Linear Optimal Transport (LOT) we will use the dynamic formulation of the OT problem. Optimal plans and maps can be viewed as a static way of matching two distributions. They tell us where each mass in the initial distribution should end, but they do not tell the full story of how the system evolves from initial to final configurations. 


In the dynamic formulation, we consider $\rho\in\mathcal{P}([0,1]\times\Omega)$ a curve of measures parametrized in time that describes the distribution of mass $\rho_t:=\rho(t,\cdot)\in\mathcal{P}(\Omega)$ at each instant $0 \leq t\leq 1$. We will require the curve to be sufficiently smooth, to have boundary conditions $\rho_0 = \mu^0$, $\rho_1 = \mu^j$, and to satisfy the conservation of mass law. Then, it is well known that there exists a velocity vector field $v_t:=v(t,\cdot)$ such that $\rho_t$ satisfies the continuity equation\footnote{The continuity equation is satisfied weakly or in the sense of distributions. See \cite{Villani2003Topics,sant}.} with boundary conditions 
\begin{equation} \label{eq: continuity eq}
    \partial_t\rho+\nabla\cdot \rho v=0, \qquad 
    \rho_0=\mu^0,\quad \rho_1=\mu^j.    
\end{equation} 
The length\footnote{Precisely, the length of the curve $\rho$, with respect to the Wasserstein distance, should be $\int_0^1 \|v_t\|_{\rho_t}dt$, but this will make no difference in the solutions of \eqref{eq: OT dynamic} since they are constant speed geodesics. 
} of the curve can be stated as
$\int_{[0, 1]\times\Omega}\|v\|^2 d  \rho:=\int_0^1 \|v_t\|_{\rho_t}^2dt$, 
for $\|\cdot\|_{\rho_t}$ as in \eqref{eq: norm tangent space}, and 
$OT(\mu^0,\mu^j)$ coincides with the length of the shortest curve between $\mu^0$ and $\mu^j$ \cite{benamou2000computational}. Hence, the dynamical formulation of the OT problem \eqref{eq: OT} reads as
\begin{align}
    OT(\mu^0,\mu^j)=\inf_{(\rho,v)\in\mathcal{CE}(\mu^0,\mu^j)}\int_{[0, 1]\times\Omega}\|v\|^2d\rho,\label{eq: OT dynamic}    
\end{align}
where $\mathcal{CE}(\mu^0,\mu^j)$ is the set of pairs $(\rho,v)$, where $\rho \in \mathcal{P}([0,1]\times\Omega)$, and $v:[0,1]\times\Omega\to\mathbb{R}^d$,  satisfying \eqref{eq: continuity eq}.

Under the assumption of existence of an optimal Monge map $T$, an optimal solution for \eqref{eq: OT dynamic} can be given explicitly and is pretty intuitive. If a particle starts at position $x$ and finishes at position $T(x)$, then for $0<t<1$ it will
be at the point
\begin{equation}\label{eq: ot X}
    T_t(x):=(1-t)x +tT(x).
\end{equation}
Then, varying both the time $t$ and $x\in\Omega$, the mapping \eqref{eq: ot X} can be interpreted as a flow whose time velocity\footnote{For each $(t,x)\in(0,1)\times \Omega$, the vector $v_t(x)$ is well defined as $T_t$ is invertible. See \cite[Lemma 5.29]{sant}.} is
\begin{equation}
    v_t(x) = T(x_0)-x_0,  \qquad  \text{ for } x=T_t(x_0).\label{eq: ot v}  
\end{equation}
To obtain the curve of probability measures $\rho_t$, one can evolve $\mu^0$ through the flow $T_t$ using the formula $\rho_t(A) = \mu_0(T_t^{-1}(A))$ for any measurable set $A$. 
That is, $\rho_t$ is the \textit{push-forward} of $\mu^0$ by $T_t$
\begin{equation}\label{eq: ot rho} 
     \rho_t=(T_t)_\#\mu^0, \qquad 0\leq t\leq 1.    
\end{equation}
The pair $(\rho,v)$ defined by \eqref{eq: ot rho} and \eqref{eq: ot v} satisfies the continuity equation \eqref{eq: continuity eq} and solves \eqref{eq: OT dynamic}. Moreover, the curve $\rho_t$ is a \textit{constant speed geodesic} in $\mathcal{P}(\Omega)$ between $\mu^0$ and $\mu^j$   \cite{figalli2021invitation}, i.e., it satisfies that for all $0\leq s\leq t\leq 1$
\begin{equation} \label{eq: OT geodesic cond}
    \sqrt{OT(\rho_s,\rho_t)}=(t-s)\sqrt{OT(\rho_0,\rho_1)}. 
\end{equation} 
A confirmation of this comes from comparing  the OT cost \eqref{eq: OT map cost} with \eqref{eq: ot v} obtaining
\begin{align}
    OT(\mu^0,\mu^j)&=\int_\Omega \|v_0(x)\|^2d\mu^0(x)\label{eq: ot dynamic solution} 
\end{align}
which tells us that we only need the speed at the initial time to compute the total length of the curve. Moreover, $OT(\mu^0,\mu^j)$ coincides with the squared norm of the tangent vector $v_0$ in the tangent space $\mathcal{T}_{\mu^0}$ of $\mathcal{P}(\Omega)$ at $\mu^0$. 

\subsection{Linear Optimal Transport Embedding}\label{sec: LOT embedding}
Inspired by the induced Riemannian geometry of the $OT$ squared distance, \cite{wang2013linear} proposed the so-called \textbf{Linear Optimal Transportation} (LOT) framework. Given two target measures $\mu^i,\mu^j$, the main idea relies on considering a reference measure $\mu^0$ 
and  embed these target measures into the tangent space $\T_{\mu^0}$. This is done by identifying each measure $\mu^j$ with the curve \eqref{eq: ot rho} minimizing $OT(\mu^0,\mu^j)$ and computing its velocity (tangent vector) at $t=0$ using \eqref{eq: ot v}. 

Formally, let us fix a continuous probability reference measure $\mu^0$. Then, the \textbf{LOT embedding} \cite{moosmuller2020linear} is defined as
\begin{align}
\mu^j\mapsto u^j:=T^j-\mathrm{id}  \qquad \forall \  \mu^j\in \mathcal{P}(\Omega)\label{eq: linear OT embedding continuous}
\end{align}
where $T^j$ is the optimal Monge map between $\mu^0$ and $\mu^j$. Notice that by \eqref{eq: OT map cost},  \eqref{eq: norm tangent space} and \eqref{eq: ot dynamic solution} we have
\begin{equation}\label{eq: LOT norm}
    \|u^j\|^2_{\mu^0}=OT(\mu^0,\mu^j).
\end{equation}
After this embedding, one can use the distance in $\T_{\mu^0}$ 
between the projected measures to define a new distance in $\mathcal{P}(\Omega)$ that can be used to approximate $OT(\mu^i,\mu^j)$.
The \textbf{LOT squared distance}  is defined as
\begin{align}
    LOT_{\mu^0}(\mu^i,\mu^j):=\|u^i-u^j\|_{\mu^0}^2. \label{eq: LOT continuous} 
\end{align}

\subsection{LOT in the Discrete Setting}\label{subsec: LOT Discrete}
For discrete probability measures $\mu^0,\mu^j$ of the form
\begin{equation}\label{eq: discrete measures}
    \mu^0=\sum_{n=1}^{N_{0}}p^0_n\delta_{x^0_n}, \qquad \mu^j=\sum_{n=1}^{N_{j}}p^j_n\delta_{x^j_n},
\end{equation}
a Monge map $T^j$  for $OT(\mu^0,\mu^j)$ may not exist.
Following \cite{wang2013linear}, in this setting, the target measure $\mu^j$ can be replaced by a new measure $\hat\mu^j$ for which an optimal transport Monge map exists.
For that, given an optimal plan $\gamma^j\in\Gamma^*(\mu^0,\mu^j)$, it can be viewed as a $N_0\times N_i$ matrix whose value at position $(n,m)$ represents how much mass from $x^0_n$  should be taken to $x^j_m$.
Then, we define the \textbf{OT barycentric projection}\footnote{We refer to \cite{ambrosio2005gradient} for the rigorous definition.} of $\mu^j$ \textbf{with respect to $\mu^0$} as 
\begin{equation}
    \hat{\mu}^j:=\sum_{n=1}^{N_0}p^0_n\delta_{\hat{x}_n^j}, \, \text{where} \quad 
    \hat{x}_n^j:=\frac{1}{p_n^0}\sum_{m=1}^{N_j}\gamma^j_{n,m}x_m^j. \label{eq: discrete barycenter}
\end{equation}
The new measure $\hat\mu^j$ is regarded as an $N_0$-point representation of the target measure $\mu^j$.
The following lemma guarantees the existence of a Monge map between $\mu^0$ and $\hat{\mu}^j$.
\begin{lemma}\label{lem: ot barycentric projection} 
    Let  $\mu^0$ and $\mu^j$ be two discrete probability measures as in \eqref{eq: discrete measures}, and consider an OT barycentric projection $\hat{\mu}^j$ of $\mu^j$ with respect to $\mu^0$ as in \eqref{eq: discrete barycenter}. Then, the map
    $x_n^0\mapsto \hat{x}_n^j$ given by \eqref{eq: discrete barycenter} solves the OT problem $OT(\mu^0,\hat\mu^j)$.
\end{lemma}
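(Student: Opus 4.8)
The plan is to show that the transport plan $\hat\gamma:=(\mathrm{id}\times T)_\#\mu^0$ induced by the map $T:x_n^0\mapsto\hat x_n^j$ is optimal for $OT(\mu^0,\hat\mu^j)$. First I would verify admissibility: since $T$ sends the atom $x_n^0$ of mass $p_n^0$ to $\hat x_n^j$, the pushforward $T_\#\mu^0$ equals $\sum_n p_n^0\delta_{\hat x_n^j}=\hat\mu^j$, so $\hat\gamma\in\Gamma(\mu^0,\hat\mu^j)$ and the map does not split mass; the cost it realizes is $\sum_{n}p_n^0\|x_n^0-\hat x_n^j\|^2$. It then remains to prove this plan is a minimizer.

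For optimality I would rely on the fact that, for the quadratic cost and finitely supported measures, a plan is optimal if and only if its support is cyclically monotone, equivalently is contained in the graph of the subdifferential $\partial\varphi$ of some convex function $\varphi$ (Rockafellar's theorem together with the Brenier/quadratic characterization). Since $\gamma^j\in\Gamma^*(\mu^0,\mu^j)$ is optimal, its support $\{(x_n^0,x_m^j):\gamma^j_{n,m}>0\}$ enjoys this property: there is a convex $\varphi$ with $x_m^j\in\partial\varphi(x_n^0)$ whenever $\gamma^j_{n,m}>0$.

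The crux is then to transfer this structure to the projected graph. Writing the normalized weights $w_{n,m}:=\gamma^j_{n,m}/p_n^0$, which satisfy $w_{n,m}\ge 0$ and $\sum_m w_{n,m}=1$ because $\sum_m\gamma^j_{n,m}=p_n^0$, the barycentric image $\hat x_n^j=\sum_m w_{n,m}\,x_m^j$ is a convex combination of points that all lie in $\partial\varphi(x_n^0)$. Since the subdifferential of a convex function is a convex set, I conclude $\hat x_n^j\in\partial\varphi(x_n^0)$, so the graph $\{(x_n^0,\hat x_n^j)\}_n$ is again contained in $\partial\varphi$, hence cyclically monotone, hence $\hat\gamma$ is optimal and $T$ solves $OT(\mu^0,\hat\mu^j)$.

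The main obstacle I anticipate is invoking the right form of the optimality, cyclical-monotonicity, and subdifferential equivalences, and checking the convention bookkeeping that converts $c$-cyclical monotonicity for $c(x,y)=\|x-y\|^2$ into ordinary monotonicity for the inner product (the $\|x\|^2$ and $\|y\|^2$ terms cancel in the cyclic sums, so the two notions coincide). Everything else is routine for discrete measures, where no measure-theoretic regularity is needed; as a consistency check one can note that Jensen's inequality applied as $\|x_n^0-\hat x_n^j\|^2\le\sum_m w_{n,m}\|x_n^0-x_m^j\|^2$ already yields $\sum_n p_n^0\|x_n^0-\hat x_n^j\|^2\le OT(\mu^0,\mu^j)$, confirming that the projection does not increase transport cost.
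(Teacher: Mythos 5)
Your proposal is correct, and it takes a genuinely different route from the paper's proof. You argue via the convex-analytic characterization of optimality for the quadratic cost: the support of the optimal plan $\gamma^j$ lies in the graph of $\partial\varphi$ for some convex $\varphi$ (Rockafellar's theorem, after the $\|x\|^2$, $\|y\|^2$ terms cancel in cyclic sums), and since each subdifferential $\partial\varphi(x_n^0)$ is a convex set, the barycenter $\hat x_n^j=\sum_m(\gamma^j_{n,m}/p_n^0)\,x_m^j$ remains in $\partial\varphi(x_n^0)$; cyclical monotonicity of the projected graph then yields optimality, and for finitely supported measures the equivalence between optimality and cyclical monotonicity of the support is unproblematic, so there is no gap. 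The paper instead gives an elementary comparison argument: for an arbitrary competitor $\hat\gamma\in\Gamma(\mu^0,\hat\mu^j)$ it builds the glued plan $\gamma:=\hat\gamma\,\diag(1/p_1^0,\ldots,1/p_{N_0}^0)\,\gamma^j\in\Gamma(\mu^0,\mu^j)$ and expands the quadratic costs to show that passing from the pair $(\gamma^j,\gamma)$ to the pair $(\hat\gamma^*,\hat\gamma)$ shifts both costs by the same constant $K_2-K_1$, so optimality of $\gamma^j$ for $OT(\mu^0,\mu^j)$ transfers to the diagonal plan for $OT(\mu^0,\hat\mu^j)$. Your route is shorter and isolates the conceptual reason the projection preserves optimality (convexity of the subdifferential); the paper's route is self-contained linear algebra, and its gluing-and-cost-expansion template is what gets adapted almost verbatim in the proof of Theorem~\ref{thm: opt barycentric projection} for the partial-transport setting, where a subdifferential characterization is not readily available. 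Your closing Jensen inequality is a correct consistency check on the cost but is not needed for the optimality claim.
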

It is easy to show that if the optimal transport plan $\gamma^j$ is induced by a Monge map, then $\hat{\mu}^j = \mu^j$. As a consequence, the OT barycentric projection is an actual projection in the sense that it is idempotent. 

Similar to the continuous case \eqref{eq: linear OT embedding continuous}, given a discrete reference measure $\mu^0$, we can define the \textbf{LOT embedding} for 
a discrete measure $\mu^j$ as the rule
\begin{equation}
    \mu^j\mapsto u^{j}:=[(\hat x_1^j-x^0_1),\ldots ,(\hat x_{N_0}^j- x^0_{N_0})]
    \label{eq: LOT embedding}.\footnote{
    In \cite{wang2013linear}, the embedded vector is defined as the the element-wise multiplication $u^j\bigodot \sqrt{p^0}$,  In addition, $\hat{\mu}^j, u^j$ are determined by $\gamma^j$, but we ignore the subscript `$\gamma^j$' for convenience.
    }
\end{equation}
The range $\mathcal{T}_{\mu_0}$ of this application is identified with $\mathbb{R}^{d\times N_0}$ with the norm $\|u\|_{\mu^0} := \sum_{n=1}^{N_0}\|u(n)\|^2p_n^0$, where $u(n)\in\mathbb{R}^d$ denotes the $n$th entry of $u$. We call $(\mathbb{R}^{d\times N_0}, \|\cdot\|_{\mu^0})$ the embedding space.

By the discussion above, if the optimal plan $\gamma^j$ for problem $\text{OT}(\mu^0,\mu^j)$ is induced by a Monge map, then the discrete embedding is consistent with \eqref{eq: LOT norm} in the sense that
\begin{align}
    \|u^j\|_{\mu^0}^2= OT(\mu^0,\hat\mu^j) = OT(\mu^0,\mu^j) \label{eq: OT and embedding}.
\end{align}
Hence, as in section \ref{sec: LOT embedding}, we can use the distance between embedded measures in ($\mathbb{R}^{d\times{N_0}},\|\cdot\|_{\mu_0})$  to define a \textit{discrepancy} in the space of discrete probabilities that can be used to approximate $OT(\mu^i,\mu^j)$. The 
\textbf{LOT discrepancy}\footnote{In \cite{wang2013linear}, LOT is defined by the infimum over all possible optimal pairs $(\gamma^i,\gamma^j)$. We do not distinguish these two formulations for convenience in this paper. Additionally, \eqref{eq: discrete LOT} is determined by the choice of $(\gamma^i,\gamma^j)$.} is defined as 
\begin{align}
    LOT_{\mu^0}(\mu^i,\mu^j):=\|u^{i}-u^{j}\|_{\mu^0}^2 \label{eq: discrete LOT}.
\end{align}
We call it a \textit{discrepancy} because it is not a 
squared metric between discrete measures. It does not necessarily satisfy that $LOT(\mu^i,\mu^j)\not=0$ for every distinct $\mu^i,\mu^j$. Nevertheless, $\|u^i-u^j\|_{\mu^0}$ is a metric in the embedding space.

\subsection{OT and LOT Geodesics in Discrete Settings}\label{sec: OT LOT geogesic}
Let $\mu^i$, $\mu^j$ be discrete probability measures as in \eqref{eq: discrete measures} (with `$i$' in place of $0$). If an optimal Monge map $T$ for $OT(\mu^i,\mu^j)$ exists, a constant speed geodesic $\rho_t$ between $\mu^i$ and $\mu^j$, for the OT squared distance, can be found by mimicking \eqref{eq: ot rho}. Explicitly, with $T_t$ as in \eqref{eq: ot X},
\begin{equation}
 \rho_t=(T_t)_\#\mu^i=\sum_{n=1}^{N_i}p_n^i\delta_{(1-t)x_n^i+t T(x_n^i)}. \label{eq: ot geodesic discrete general}
\end{equation}
In practice, one replaces $\mu^j$ by its OT barycentric projection with respect to $\mu^i$ (and so, the existence of an optimal Monge map is guaranteed by Lemma \ref{lem: ot barycentric projection}).

Now, given a discrete reference $\mu^0$, the LOT discrepancy provides a new structure to the space of discrete probability densities. Therefore, we can provide a substitute for the OT geodesic \eqref{eq: ot geodesic discrete general} between $\mu^i$ and $\mu^j$. 
Assume we have the embeddings $\mu^i\mapsto u^i$, $\mu^j\mapsto u^j$ as in \eqref{eq: LOT embedding}. The geodesic between  $u^i$ and $u^j$ in  the LOT embedding space $\mathbb{R}^{d\times N_0}$ has the simple form $u_t = (1-t)u^i + tu^j$. This correlates with the curve $\hat\rho_t$ in $\mathcal{P}(\Omega)$  induced by the map $\hat T:\hat x_n^i\mapsto \hat x_n^j$\ \footnote{This map can be understood as the one that transports $\hat\mu^i$ onto $\hat\mu^j$ pivoting on the reference:  $\hat\mu^i\mapsto \mu^0\mapsto \hat\mu^j$.} as
\begin{equation}
    \hat\rho_t:=(\hat T_t)_\#\hat{\mu}^i=\sum_{n=1}^{N_0} p_n^0\delta_{{x}_n^0 +u_t(n)}.\label{eq: lot geodesic}
\end{equation}
By abuse of notation, we call this curve the \textbf{LOT geodesic} between $\mu^i$ and $\mu^j$. Nevertheless, it is a \textit{geodesic between their barycentric projections} since it satisfies the following.
\begin{proposition}\label{thm: lot geodesic}
    Let $\hat\rho_t$ be defined as \eqref{eq: lot geodesic}, and $\hat\rho_0=\hat\mu^i,\hat\rho_1=\hat\mu^j$, then for all $0\leq s\leq t\leq 1$
    \begin{equation}
        \sqrt{LOT_{\mu^0}(\hat\rho_s,\hat\rho_t)}=(t-s)\sqrt{LOT_{\mu^0}(\hat\rho_0,\hat\rho_1)} \label{eq: lot geodesic cond}.
    \end{equation}
\end{proposition}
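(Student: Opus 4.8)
The plan is to reduce the geodesic identity to a one-line algebraic computation by first identifying the LOT embedding of the intermediate measure $\hat\rho_t$ with respect to the reference $\mu^0$. Concretely, I would show that this embedding is exactly $u_t=(1-t)u^i+tu^j$, the point on the straight segment in $(\mathbb{R}^{d\times N_0},\|\cdot\|_{\mu^0})$ joining $u^i$ and $u^j$. Once this is established, the statement becomes the elementary fact that the weighted $\ell^2$ distance scales linearly along a line segment, together with the consistency relation $LOT_{\mu^0}(\hat\rho_0,\hat\rho_1)=\|u^i-u^j\|_{\mu^0}^2$.

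First I would compute the embedding of $\hat\rho_t$. By construction $\hat\rho_t=\sum_{n=1}^{N_0}p_n^0\delta_{x_n^0+u_t(n)}$ carries exactly the same mass distribution $(p_n^0)_{n=1}^{N_0}$ as $\mu^0$, so the index-preserving assignment $x_n^0\mapsto x_n^0+u_t(n)$ induces a transport plan $\gamma^t$ from $\mu^0$ to $\hat\rho_t$ with $\gamma^t_{n,m}=p_n^0$ when $m=n$ and $\gamma^t_{n,m}=0$ otherwise. Since this plan does not split mass, its OT barycentric projection as in \eqref{eq: discrete barycenter} returns $\hat\rho_t$ unchanged, and the associated embedding vector \eqref{eq: LOT embedding} has $n$-th entry $(x_n^0+u_t(n))-x_n^0=u_t(n)$; hence the embedding of $\hat\rho_t$ equals $u_t$. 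In particular, at the endpoints idempotency of the barycentric projection gives $\hat\rho_0=\hat\mu^i\mapsto u^i$ and $\hat\rho_1=\hat\mu^j\mapsto u^j$, so $LOT_{\mu^0}(\hat\rho_0,\hat\rho_1)=\|u^i-u^j\|_{\mu^0}^2$.

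With the embeddings in hand, I would finish by algebra. By definition \eqref{eq: discrete LOT}, $LOT_{\mu^0}(\hat\rho_s,\hat\rho_t)=\|u_s-u_t\|_{\mu^0}^2$, and from $u_t=(1-t)u^i+tu^j$ one gets $u_s-u_t=(t-s)(u^i-u^j)$. Homogeneity of the embedding norm then yields $LOT_{\mu^0}(\hat\rho_s,\hat\rho_t)=(t-s)^2\|u^i-u^j\|_{\mu^0}^2=(t-s)^2\,LOT_{\mu^0}(\hat\rho_0,\hat\rho_1)$, and taking square roots (using $t-s\ge 0$) gives \eqref{eq: lot geodesic cond}.

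The main obstacle is the first step: justifying that the embedding of $\hat\rho_t$ is $u_t$ rather than some other vector coming from a genuinely optimal plan. The index-preserving plan $\gamma^t$ is the natural one, and because $\hat\rho_t$ and $\mu^0$ share weights it is induced by a map and returns $\hat\rho_t$ under barycentric projection; but a priori it need not be the optimal plan for $OT(\mu^0,\hat\rho_t)$, so one must either argue optimality (e.g.\ via cyclical monotonicity of the interpolated assignment $x_n^0\mapsto(1-t)\hat x_n^i+t\hat x_n^j$) or, as the footnotes to \eqref{eq: LOT embedding} and \eqref{eq: discrete LOT} permit, simply fix $\gamma^t$ as the chosen plan defining both the embedding and the discrepancy. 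Under the latter reading the computation above is exactly the proof; the former reading reduces to checking that the interpolated assignment inherits optimality from the two endpoint maps, which is the only genuinely nontrivial point.
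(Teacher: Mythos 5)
Your reduction is the same one the paper uses: identify the LOT embedding of $\hat\rho_t$ with $u_t=(1-t)u^i+tu^j$ and then conclude by homogeneity of $\|\cdot\|_{\mu^0}$. The algebra in your last paragraph is fine. The problem is that you leave the crux unproven and then offer an escape that does not work. The LOT embedding \eqref{eq: LOT embedding} and the discrepancy \eqref{eq: discrete LOT} are defined through the barycentric projection \eqref{eq: discrete barycenter}, which is built from a plan $\gamma^j\in\Gamma^*(\mu^0,\mu^j)$, i.e.\ an \emph{optimal} plan; the footnotes only say that the construction may depend on \emph{which} optimal plan is chosen, not that you may substitute an arbitrary feasible plan. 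So ``simply fix $\gamma^t$ as the chosen plan'' is not available: until you show that the index-preserving plan $\diag(p_1^0,\dots,p_{N_0}^0)$ (equivalently, the map $x_n^0\mapsto x_n^0+u_t(n)$) is optimal for $OT(\mu^0,\hat\rho_s)$ and $OT(\mu^0,\hat\rho_t)$, the identity $LOT_{\mu^0}(\hat\rho_s,\hat\rho_t)=\|u_s-u_t\|_{\mu^0}^2$ is unjustified. This is precisely the content of the paper's Lemma \ref{lem: lot geodesic}, which your proposal names as ``the only genuinely nontrivial point'' but does not carry out.

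Your first alternative is the right one, and it does go through. The paper proves it by expanding the cost of an arbitrary competitor $\gamma\in\Gamma(\mu^0,\mu_t)$ as a constant minus
$2\bigl[(1-t)\sum_{k,k'}x_k^0\cdot x_{k'}^1\,\gamma_{k,k'}+t\sum_{k,k'}x_k^0\cdot x_{k'}^2\,\gamma_{k,k'}\bigr]$:
since the target points of $\mu_t$ depend affinely on $t$, the inner-product term is a convex combination of the two endpoint inner-product terms, each of which is maximized by the diagonal plan (by optimality of the endpoint maps, which here comes from Lemma \ref{lem: ot barycentric projection}). Hence the diagonal plan is optimal for $OT(\mu^0,\mu_t)$ for every $t$. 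Your cyclical-monotonicity phrasing is the same argument restricted to permutation competitors; either version works, but one of them must actually be written down. With that lemma in place, your computation is exactly the paper's proof.
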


\section{Linear Optimal Partial Transport Embedding}\label{sec: LOPT}

\subsection{Static Formulation of Optimal Partial Transport}
In addition to mass transportation, the OPT problem allows mass destruction at the source and mass creation at the target.
 Let $\mathcal{M}_+(\Omega)$ denote the set of all positive finite Borel measures defined on $\Omega$. 
 For $\lambda\geq 0$ the OPT problem between $\mu^0,\mu^j\in\mathcal{M}_+(\Omega)$ can be formulated as
\begin{align}
    &OPT_{\lambda}(\mu^0,\mu^j):=\inf_{\gamma\in\Gamma_{\leq}(\mu^0,\mu^j)} 
    C(\gamma;\mu^0,\mu^j,\lambda) \label{eq: OPT}\\
    &\text{for} \quad      
    C(\gamma;\mu^0,\mu^j,\lambda):=\int_{\Omega^2} \|x^0-x^j\|^2 d\gamma(x^0,x^j)+\lambda(|\mu^0-\gamma_0|+|\mu^j-\gamma_1|) \label{eq: OPT cost}
\end{align}
where $|\mu^0-\gamma_0|$ is the total mass of $\mu^0-\gamma_0$ (resp. $|\mu^j-\gamma_1|$), and  $\Gamma_{\leq}(\mu^0,\mu^j)$ denotes the set of all measures in $\Omega^2$ with marginals $\gamma_0$ and $\gamma_1$ satisfying  $\gamma_0\leq \mu^0$ (i.e.,  $\gamma_0(E)\leq\mu^0(E)$ for all measurable set $E$), and $\gamma_1\leq \mu^j$.
Here, the mass destruction and creation penalty is linear, parametrized by $\lambda$. The set of minimizers $\Gamma_\leq^*(\mu^0,\mu^j)$ of  \eqref{eq: OPT} is non-empty \cite{figalli2010optimal}. One can further restrict $\Gamma_\leq(\mu^0,\mu^j)$ to the set of partial transport plans $\gamma$ such that $ \|x^0-x^j\|^2<2\lambda$ for all $ (x^0,x^j)\in \supp(\gamma)$  \cite[Lemma 3.2]{bai2022sliced}. This means that if the usual transportation cost is greater than $2\lambda$, it is better to create/destroy mass.

\subsection{Dynamic Formulation of Optimal Partial Transport}
Adding a forcing term $\zeta$ to the continuity equation \eqref{eq: OT dynamic}, one can take into account curves that allow creation and destruction of mass. That is, those who break the conservation of mass law.  Thus, it is natural that the minimization problem \eqref{eq: OPT} can be rewritten \cite[Th. 5.2]{chizat2018unbalanced} into a dynamic formulation as
\begin{equation}
    OPT_\lambda(\mu^0,\mu^j)=\inf_{(\rho,v,\zeta)\in\mathcal{FCE}(\mu^0,\mu^j)}\int_{[0,1]\times\Omega}\|v\|^2d\rho +\lambda |\zeta|
    \label{eq: opt dynamic}
\end{equation}
where $\mathcal{FCE}(\mu^0,\mu^j)$ is the set of  tuples $(\rho,v,\zeta)$ such that $\rho \in \mathcal{M}_+([0,1]\times\Omega)$, $\zeta\in \mathcal{M}([0,1]\times\Omega)$ (where $\mathcal{M}$ stands for signed measures) and $v:[0,1]\times\Omega\to\mathbb{R}^d$,  satisfying
\begin{equation}\label{eq: continuity equation opt}
    \partial_t\rho+\nabla\cdot \rho v=\zeta , \qquad
    \rho_0=\mu^0,\quad \rho_1=\mu^j.   
\end{equation}
As in the case of OT, under certain conditions on the minimizers $\gamma$ of \eqref{eq: OPT}, one curve $\rho_t$ that minimizes the dynamic formulation \eqref{eq: opt dynamic} is quite intuitive. We show in the next proposition that it consists of three parts $\gamma_t$, $(1-t)\nu_0$ and $t\nu^j$ (see \eqref{eq: nu}, \eqref{eq: gamma_t}, and \eqref{eq: opt rho} below). The first is a curve that only transports mass, and the second and third destroy and create mass at constant rates $|\nu_0|$, $|\nu^j|$, respectively. 
\begin{proposition}\label{pro: opt dynamic}
    Let $\gamma^*\in\Gamma_\leq^*(\mu^0,\mu^j)$ be of the form
    $\gamma^*=(\mathrm{id}\times T)_\#\gamma_0^* \, $ for $T: \Omega\to\Omega$ a (measurable) map. 
    Let 
    \begin{align}
        &\nu_0 := \mu^0-\gamma_0^*, \quad \nu^j := \mu^j-\gamma_1^*, \label{eq: nu}\\
        &T_t(x):=(1-t)x+tT(x), \quad \gamma_t:= (T_t)_\#\gamma_0^*. \label{eq: gamma_t}
    \end{align}
    Then, an optimal solution $(\rho,v,\zeta)$ for \eqref{eq: opt dynamic} is given by  
    {\begin{align}       
        &\rho_t:=\gamma_t+(1-t)\nu_0+t\nu^j \label{eq: opt rho}, \\
        &v_t(x):= T(x_0)-x_0,
            \qquad \text{ if } x=T_t(x_0).
        \label{eq: opt v}\\         &\zeta_t:=\nu^j-\nu_0 \label{eq: opt zeta}
    \end{align}}
     Moreover,  plugging in $(\rho,v,\zeta)$ into \eqref{eq: opt dynamic},         
        it holds that
        \begin{equation}
        {OPT}_{\lambda}(\mu^0,\mu^j)=\|v_0\|^2_{\gamma_0^*,2\lambda}+\lambda(|\nu_0|+|\nu^j|), \label{eq: opt dynamic solution}    
        \end{equation}
        where $v_0(x)=T(x)-x$  (i.e., $v_t$ at time $t=0$), and 
$$\|v\|_{\mu,2\lambda}^2:=\int_\Omega \min(\|v\|^2,2\lambda)d\mu, \qquad \text{for } v:\Omega\to\mathbb{R}^d.$$
\end{proposition}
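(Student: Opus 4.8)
The plan is to present $(\rho,v,\zeta)$ as an admissible competitor for the dynamic problem \eqref{eq: opt dynamic} whose cost equals the static value $OPT_\lambda(\mu^0,\mu^j)$; optimality will then follow for free from the static--dynamic equivalence \eqref{eq: opt dynamic}, which I treat as given. So the work splits into three tasks: checking admissibility, computing the candidate's cost and matching it to the static cost of $\gamma^*$, and finally reconciling the untruncated value with the truncated norm in \eqref{eq: opt dynamic solution}.

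First I would verify admissibility. The boundary conditions are immediate: since $T_0=\mathrm{id}$ and $T_1=T$ we have $\gamma_0=\gamma_0^*$ and $\gamma_1=\gamma_1^*$, so $\rho_0=\gamma_0^*+\nu_0=\mu^0$ and $\rho_1=\gamma_1^*+\nu^j=\mu^j$. For the forced continuity equation \eqref{eq: continuity equation opt} I would check the weak form piece by piece: the transported part $\gamma_t=(T_t)_\#\gamma_0^*$ is exactly the OT displacement interpolation of Section \ref{sec: ot dynamic}, so with the velocity \eqref{eq: opt v} it already satisfies $\partial_t\gamma_t+\nabla\cdot(\gamma_t v_t)=0$; meanwhile the pieces $(1-t)\nu_0$ and $t\nu^j$ carry no momentum (created and destroyed mass does not move), so that $\partial_t[(1-t)\nu_0]=-\nu_0$ and $\partial_t[t\nu^j]=\nu^j$ supply precisely the source $\nu^j-\nu_0=\zeta_t$. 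The point requiring care is that $v$ must be read as the transport velocity on $\gamma_t$ and extended by $0$ on the static pieces, so the momentum measure $\rho v$ sees only $\gamma_t$ even where the three supports overlap.

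Next I would compute the candidate's cost. Since $v\equiv 0$ off $\gamma_t$ and $\|v_t(T_t(x_0))\|=\|T(x_0)-x_0\|$ is independent of $t$, the kinetic term collapses to $\int_0^1\|v_0\|^2_{\gamma_0^*}\,dt=\|v_0\|^2_{\gamma_0^*}=\int_\Omega\|T(x)-x\|^2\,d\gamma_0^*$, which matches the transport part of the static cost $C(\gamma^*;\mu^0,\mu^j,\lambda)$ under $\gamma^*=(\mathrm{id}\times T)_\#\gamma_0^*$. For the forcing term, writing $\zeta=dt\otimes(\nu^j-\nu_0)$ gives $|\zeta|=|\nu^j-\nu_0|$, and here is the key step: optimality of $\gamma^*$ forces $\nu_0\perp\nu^j$. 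Indeed, if they shared a common sub-measure $\sigma>0$, then adding the identity coupling $(\mathrm{id}\times\mathrm{id})_\#\sigma$ to $\gamma^*$ keeps it in $\Gamma_\leq(\mu^0,\mu^j)$ (since $\sigma\le\nu_0$ and $\sigma\le\nu^j$) while adding zero transport cost and reducing the destroyed and created masses each by $|\sigma|$, lowering the total cost by $2\lambda|\sigma|$ and contradicting $\gamma^*\in\Gamma_\leq^*$. Hence $|\zeta|=|\nu_0|+|\nu^j|$, so the candidate's dynamic cost equals $\|v_0\|^2_{\gamma_0^*}+\lambda(|\nu_0|+|\nu^j|)=C(\gamma^*;\mu^0,\mu^j,\lambda)=OPT_\lambda(\mu^0,\mu^j)$, giving both optimality and the untruncated identity.

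Finally, to obtain the truncated form in \eqref{eq: opt dynamic solution}, I would invoke the support restriction \cite[Lemma 3.2]{bai2022sliced}: an optimal plan is supported where $\|x^0-x^j\|^2\le 2\lambda$, so $\|v_0(x)\|^2=\|T(x)-x\|^2\le 2\lambda$ for $\gamma_0^*$-a.e.\ $x$, whence $\min(\|v_0\|^2,2\lambda)=\|v_0\|^2$ pointwise and $\|v_0\|^2_{\gamma_0^*,2\lambda}=\|v_0\|^2_{\gamma_0^*}$. I expect the main obstacle to be the two measure-theoretic subtleties in the middle, rather than the bookkeeping: making the weak continuity equation rigorous when $\supp\gamma_t$, $\supp\nu_0$, and $\supp\nu^j$ overlap (which forces one to argue with the momentum measure $\rho v$ instead of a pointwise velocity), and pinning down $\nu_0\perp\nu^j$ from optimality so that the total variation of the source splits additively as $|\nu_0|+|\nu^j|$.
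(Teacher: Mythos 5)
Your overall strategy coincides with the paper's: exhibit $(\rho,v,\zeta)$ as an admissible competitor, match its dynamic cost to the static cost of $\gamma^*$, invoke the static--dynamic equivalence \eqref{eq: opt dynamic} for optimality, and use the support bound $\|x^0-x^j\|^2<2\lambda$ to pass to the truncated norm. Your argument that $\nu_0\perp\nu^j$ (adding the identity coupling $(\mathrm{id}\times\mathrm{id})_\#\sigma$ of a common sub-measure $\sigma$ to lower the cost by $2\lambda|\sigma|$) is correct and somewhat more elementary than the paper's Lemma \ref{lem: disjoint nu_0,nu_1}, which derives the stronger statement $d(\supp\nu_0,\supp\nu^j)\geq\sqrt{2\lambda}$ by a similar competitor construction; either suffices to get $|\zeta|=|\nu_0|+|\nu^j|$.

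The genuine gap is the step you flag but do not close: the claim that ``the momentum measure $\rho v$ sees only $\gamma_t$.'' In the formulation \eqref{eq: opt dynamic} there is a single velocity field $v_t:\Omega\to\mathbb{R}^d$, so the momentum is $v_t\rho_t=v_t\gamma_t+(1-t)v_t\nu_0+t\,v_t\nu^j$; you cannot simply ``extend $v$ by zero on the static pieces,'' because $\nu_0$ could a priori charge the moving set $D_t=\{v_t\neq 0\}$ --- destroyed mass sitting in the middle of a transport path. If it did, then $\nabla\cdot(\overline{\gamma}_t v_t)\neq 0$ for $\overline{\gamma}_t=(1-t)\nu_0+t\nu^j$, the source would not reduce to $\nu^j-\nu_0$, and the kinetic term $\int\|v\|^2d\rho$ would pick up a contribution beyond $\|v_0\|^2_{\gamma_0^*}$; so both your admissibility check and your cost computation rest on this unproved assertion. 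The paper closes it with Lemmas \ref{lem: gamma_t wedge nu} and \ref{lem: null vector field}: if $\widetilde{\nu}^0:=\gamma_t\wedge\nu_0$ were nonzero on $D_t$, one builds a cheaper plan by transporting to $T(x_0)$ the coinciding mass of $\nu_0$ already located at $T_t(x_0)$ instead of the mass at $x_0$, which is strictly cheaper since $\|T_t(x_0)-T(x_0)\|=(1-t)\|x_0-T(x_0)\|$ for $t\in(0,1)$; the resulting mutual singularity then allows redefining $v_t:=0$ on a $\gamma_t$-null set carrying $\nu_0$, whence $v_t\cdot\nu_0\equiv 0$. Some optimality argument of this kind is indispensable, and your proposal supplies no mechanism for it.
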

 In analogy to the OT squared distance, we also call the optimal partial cost  \eqref{eq: opt dynamic solution} as the \textbf{OPT squared distance}.

\subsection{Linear Optimal Partial Transport Embedding}
\begin{definition}
    Let $\mu^0$, $\mu^j\in\mathcal{M}_+(\Omega)$ such that $OPT_\lambda(\mu^0,\mu^j)$ is solved by a plan induced by a map. The \textbf{LOPT embedding} of $\mu^j$ with respect to $\mu^0$ is defined as
    \begin{equation}
        \mu^j\mapsto (u^j,\bar\mu^j,\nu^j):=(v_0,\gamma_0,\nu^j) \label{eq: lopt embedding general}  
    \end{equation}
    where $v_0,\gamma_0,\nu^j$ are defined as in Proposition \ref{pro: opt dynamic}. 
\end{definition}

Let us compare the LOPT \eqref{eq: lopt embedding general} and LOT \eqref{eq: linear OT embedding continuous} embeddings. The first component $v_0$ represents the tangent of the curve that transports mass from the reference to the target. This is exactly the same as the LOT embedding. In contrast to LOT, the second component $\gamma_0$ is necessary since we need to specify what part of the reference is being transported. The third component $\nu^j$ can be thought of as the tangent vector of the part that creates mass. There is no need to save the destroyed mass because it can be inferred from the other quantities.

Now, let $\mu^0\wedge \mu^j$ be the \textit{minimum measure}\footnote{Formally,    ${\mu}^0\wedge {\mu}^j(B):=\inf \left\{{\mu}^0\left(B_1\right)+{\mu}^j\left(B_2\right)\right\}$ for every Borel set $B$, where the infimum is taken over all partitions of $B$., i.e. $B=B_1 \cup B_2$, $B_1 \cap B_2=\emptyset$, given by Borel sets $B_1$, $B_2$.} between $\mu^0$ and $\mu^j$.
By the above definition, $\mu^0\mapsto (u^0,\bar{\mu}^0,\nu^0)=(0,\mu^0,0)$. 
Therefore, \eqref{eq: opt dynamic solution} can be rewritten 
\begin{align}
OPT_\lambda(\mu^0,\mu^j)&=\|u^0-u^j\|^2_{\bar{\mu}^0\wedge \bar{\mu}^j,2\lambda} 
+\lambda(|\bar\mu^0-\bar\mu^j|+|\nu_0|+|\nu^j|)\label{eq: opt and opt embedding} 
\end{align}
This motivates the definition of the \textbf{LOPT discrepancy}.\footnote{$LOPT_\lambda$ is not a rigorous metric.}
\begin{definition}\label{def: discrete opt embedding} 
    Consider a reference $\mu^0\in \mathcal{M}_+(\Omega)$ and target measures $\mu^i,\mu^j\in \mathcal{M}_+(\Omega)$ such that  $OPT_\lambda(\mu^0,\mu^i)$ and $ OPT_\lambda(\mu^0,\mu^j)$ can be solved by plans induced by mappings as in the hypothesis of Proposition \ref{pro: opt dynamic}. Let $(u^i,\bar\mu^i,\nu^i)$ and $(u^j,\bar\mu^j,\nu^j)$ be the LOPT embeddings of $\mu^i$ and $\mu^j$ with respect to $\mu^0$. The  \textbf{LOPT discrepancy} between $\mu^i$ and $\mu^j$ with respect to $\mu^0$ is defined as 
    \begin{align}
        LOPT_{\mu^0,\lambda}(\mu^i,\mu^j)&:=\|u^i-u^j\|^2_{\bar\mu^i\wedge \bar\mu^j,2\lambda}+\lambda(|\bar{\mu}^i-\bar{\mu}^j|+|\nu^i|+|\nu^j|)\label{eq: LOPT general}
    \end{align}
\end{definition}
Similar to the LOT framework, by equation \eqref{eq: opt and opt embedding}, LOPT can recover OPT when $\mu^i=\mu^0$. That is, 
\begin{equation*}
    LOPT_{\mu^0,\lambda}(\mu^0,\mu^j)=OPT_\lambda(\mu^0,\mu^j).
\end{equation*}

\subsection{LOPT in the Discrete Setting} 
If $\mu^0, \mu^j$ are $N_0, N_j-$size discrete non-negative measures as in \eqref{eq: discrete measures} (but not necessarily with total mass 1), the OPT problem \eqref{eq: OPT} can be written as
    \begin{equation*}
        \min_{\gamma\in\Gamma_{\leq}(\mu^0,\mu^j)} \sum_{n,m} \|x_{n}^0-x_{m}^j\|^2\gamma_{n,m}+\lambda(|p^0|+|p^j|-2|\gamma|) \label{eq: OPT empirical}
    \end{equation*}
where the set $\Gamma_{\leq}(\mu^0,\mu^j)$ can be viewed as the subset of $N_0\times N_j$ matrices with non-negative entries
\[ \Gamma_{\leq}(\mu^0,\mu^j):= \{\gamma\in \mathbb{R}_+^{N_0\times {N_j}}: \gamma 1_{N_j}\leq p^0, \gamma^T1_{N_0}\leq p^j\}, \] where $1_{N_0}$ denotes the $N_0\times 1$ vector whose entries are $1$ (resp. $1_{N_j}$), $p^0=[p_1^0,\ldots,p_{N_0}^0]$ is the vector of weights of $\mu^0$ (resp. $p^j$), $\gamma 1_{N_j}\leq p^0$ means that component-wise holds the `$\leq$' (resp. $\gamma^T1_{N_0}\leq p^j$, where $\gamma^T$ is the transpose of $\gamma$),
and  $|p^0|=\sum_{n=1}^{N_0}|p^0_n|$  is the total mass of $\mu^0$ (resp. $|p^j|,|\gamma|$). The marginals are $\gamma_0:=\gamma 1_{N_j}$, and $\gamma_1:=\gamma^T1_{N_0}$.

Similar to OT, when an optimal plan $\gamma^j$ for $OPT_{\lambda}(\mu^0,\hat\mu^j)$ is not induced by a map, we can replace the target measure $\mu^j$ by an \textbf{OPT barycentric projection} $\hat{\mu}^j$ for which a  map exists. Therefore, allowing us to apply the LOPT embedding (see \eqref{eq: lopt embedding general} and \eqref{eq: lopt embedding discrete} below).
\begin{definition}\label{def: opt barycentric}
Let  $\mu^0$ and $\mu^j$ be positive discrete measures, and $\gamma^j\in\Gamma^*_\leq(\mu^0,\mu^j)$. The \textbf{OPT barycentric projection}\footnote{Notice that in \eqref{eq: discrete barycenter} we had $p_n^0=\sum_{m=1}^{N_j}\gamma^j_{n,m}$. This leads to introducing $\hat p_n^j$ as in \eqref{eq: p_n hat}. That is, $\hat p_n^j$ plays the role of $p_n^0$ in the OPT framework. However, here $\hat p_n^j$ depends on $\gamma^j$ (on its first marginal $\gamma_0^j$) and not only on $\mu^0$, and so we add a superscript `$j$'.} of $\mu^j$ \textbf{with respect to} $\mu^0$ is defined as 
\begin{align}
    &\hat{\mu}^j:=\sum_{n=1}^{N_0}\hat p_n^j\delta_{\hat{x}^j_n}, \qquad \text{ where}\label{eq: opt barycenter} \\
    &\hat p_n^{j}:=\sum_{m=1}^{N_j}\gamma^j_{n,m}, \qquad  1\leq n\leq N_0, \label{eq: p_n hat}\\
    &\hat{x}_n^j:=\begin{cases}
        \frac{1}{\hat p_n^j}\sum_{m=1}^{N_j}\gamma^j_{n,m}x_m^j &\text{if } \, {\hat p_n^j}>0\\
        x_n^0 & \text{if } \, {\hat p_n^j}=0. \label{eq: opt barycentric x}
    \end{cases} 
\end{align}
\end{definition}

\begin{theorem}\label{thm: opt barycentric projection}
In the same setting of Definition \ref{def: opt barycentric}, the map $x_n^0\mapsto \hat x_n^j$ given by \eqref{eq: opt barycentric x} solves the problem $OPT_\lambda(\mu^0,\hat\mu^j),$
in the sense that induces the partial optimal plan $\hat\gamma^j=\diag(\hat p_1^j,\ldots,\hat p_{N_0}^j)$. 
\end{theorem}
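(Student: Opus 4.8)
The plan is to show that the diagonal plan $\hat\gamma^j=\diag(\hat p_1^j,\dots,\hat p_{N_0}^j)$ is feasible for $OPT_\lambda(\mu^0,\hat\mu^j)$ and then that it is optimal, mirroring the structure of Lemma~\ref{lem: ot barycentric projection} but now accounting for the creation and destruction of mass. Feasibility is immediate: the row marginal of $\hat\gamma^j$ is $(\hat p_n^j)_n$, and since $\hat p_n^j=\sum_m\gamma^j_{n,m}=(\gamma^j_0)_n\le p_n^0$ (because $\gamma^j\in\Gamma^*_\leq(\mu^0,\mu^j)$) we get $\hat\gamma^j 1_{N_0}\le p^0$; the column marginal equals the weight vector of $\hat\mu^j$ exactly, so the target is fully matched and $(\hat\gamma^j)^T1_{N_0}\le \hat p^j$ holds with equality. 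Consequently the cost \eqref{eq: OPT cost} of $\hat\gamma^j$ collapses to $\sum_n \hat p_n^j\|x_n^0-\hat x_n^j\|^2+\lambda(|p^0|-|\gamma^j|)$, with no penalty arising from the target side.

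Next I would record the one analytic inequality coming from the geometry of the barycenter. Using the support restriction $\|x_n^0-x_m^j\|^2<2\lambda$ on $\supp(\gamma^j)$ together with the convexity of $\|x_n^0-\cdot\|^2$ (Jensen), for every $n$ with $\hat p_n^j>0$ we obtain $\|x_n^0-\hat x_n^j\|^2\le \tfrac{1}{\hat p_n^j}\sum_m\gamma^j_{n,m}\|x_n^0-x_m^j\|^2<2\lambda$. Hence every matched pair of $\hat\gamma^j$ is strictly cheaper to transport than to create-and-destroy, which is exactly the condition that makes \emph{unmatching} a diagonal pair never profitable.

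The heart of the proof is transferring the optimality of $\gamma^j$ to the projected problem. Rather than comparing cost values directly (a Jensen bound on $\|x_n^0-\hat x_{n'}^j\|^2$ points the wrong way for a \emph{lower} bound), I would exploit the bilinear structure of the quadratic cost. Since $\gamma^j$, restricted to the mass it moves, is an OT-optimal coupling between its own marginals, $\supp(\gamma^j)$ is $c$-cyclically monotone; in particular $\langle x_n^0-x_{n'}^0,\,x_m^j-x_{m'}^j\rangle\ge 0$ for any two support pairs $(x_n^0,x_m^j),(x_{n'}^0,x_{m'}^j)$. Writing $\hat x_n^j-\hat x_{n'}^j=\sum_{m,m'}\tfrac{\gamma^j_{n,m}\gamma^j_{n',m'}}{\hat p_n^j\hat p_{n'}^j}(x_m^j-x_{m'}^j)$ and pairing with $x_n^0-x_{n'}^0$ exhibits $\langle x_n^0-x_{n'}^0,\,\hat x_n^j-\hat x_{n'}^j\rangle$ as a nonnegative combination of these monotone inner products, so it is $\ge 0$. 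Thus the barycentric map $x_n^0\mapsto\hat x_n^j$ is monotone, which is equivalent to the submodularity $\|x_n^0-\hat x_n^j\|^2+\|x_{n'}^0-\hat x_{n'}^j\|^2\le \|x_n^0-\hat x_{n'}^j\|^2+\|x_{n'}^0-\hat x_n^j\|^2$, i.e.\ the diagonal beats every pairwise reassignment.

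Finally I would assemble these facts into optimality. Because all target mass is saturated and each diagonal entry is strictly profitable ($\|x_n^0-\hat x_n^j\|^2<2\lambda$), no feasible plan can improve by leaving target mass unmatched; by the monotonicity/submodularity above, no rerouting of columns to other rows can lower the transport cost. Concretely this is the complementary-slackness (KKT) optimality condition for the discrete OPT linear program: I would produce dual potentials $\hat f,\hat g\ge 0$ satisfying $\hat f_n+\hat g_{n'}\ge 2\lambda-\|x_n^0-\hat x_{n'}^j\|^2$, taking $\hat f_n=0$ on rows with destroyed mass ($\hat p_n^j<p_n^0$) and reading off $\hat g_n=2\lambda-\|x_n^0-\hat x_n^j\|^2\ge 0$ on the diagonal, and verify that the monotonicity just established guarantees the remaining off-diagonal constraints. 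The main obstacle is precisely this transfer step: the naive primal comparison via Jensen runs in the wrong direction, so the argument must go through the exact bilinear/monotonicity structure rather than through cost values, together with a careful treatment of the zero-mass rows $\hat p_n^j=0$ (where $\hat x_n^j:=x_n^0$ by \eqref{eq: opt barycentric x}, so the corresponding column contributes no mass).
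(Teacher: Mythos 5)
Your feasibility step and your diagnosis of the main difficulty (a direct primal comparison via Jensen points the wrong way) are both correct, and the monotonicity computation $\langle x_n^0-x_{n'}^0,\hat x_n^j-\hat x_{n'}^j\rangle\ge 0$ is a valid consequence of the two-point cyclical monotonicity of $\supp(\gamma^j)$. But the argument does not close. First, pairwise monotonicity (equivalently, the two-point swap inequality you call submodularity) is \emph{not} equivalent to optimality of the diagonal coupling when $d\ge 2$: optimality of a discrete quadratic-cost plan requires cyclical monotonicity over cycles of arbitrary length, and you only establish the length-two case. Second, and more seriously, the dual step is asserted rather than proved. You propose potentials with $\hat f_n=0$ on rows with destroyed mass and $\hat g_{n'}=2\lambda-\|x_{n'}^0-\hat x_{n'}^j\|^2$ on the diagonal; the off-diagonal constraint then demands $\|x_n^0-\hat x_{n'}^j\|^2\ge\|x_{n'}^0-\hat x_{n'}^j\|^2$ for every row $n$ with leftover mass, and the claim that ``the monotonicity just established'' yields this is unsubstantiated. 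The natural alternative of averaging the dual potentials of the original problem, $\hat g_{n'}:=\frac{1}{\hat p_{n'}^j}\sum_m\gamma^j_{n',m}g_m$, fails for exactly the Jensen-direction reason you flag in the primal: on the diagonal it gives $\hat f_n+\hat g_n\le 2\lambda-\|x_n^0-\hat x_n^j\|^2$ where complementary slackness needs equality (indeed $C(\hat\gamma^j;\mu^0,\hat\mu^j,\lambda)$ is genuinely \emph{smaller} than $C(\gamma^j;\mu^0,\mu^j,\lambda)$ minus the mass-penalty correction, so the old dual certificate cannot certify the new optimum). Finally, your treatment of competitors that transport \emph{less} mass than $\hat\gamma^j$, and of the rows with $\hat p_n^j=0$ where $\hat x_n^j:=x_n^0$ and no support pairs of $\gamma^j$ exist, is only gestured at.

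For contrast, the paper avoids duality and cyclical monotonicity altogether: given an arbitrary competitor $\hat\gamma\in\Gamma_\le(\mu^0,\hat\mu^j)$ it builds the pivot plan $\gamma:=\hat\gamma\,(\hat\gamma^j)^{-1}\gamma^j\in\Gamma_\le(\mu^0,\gamma_1^j)$, expands all four quadratic costs, and reduces the inequality $C(\hat\gamma^j;\mu^0,\hat\mu^j,\lambda)\le C(\hat\gamma;\mu^0,\hat\mu^j,\lambda)$ to the optimality of $\gamma^j$ for $OPT_\lambda(\mu^0,\gamma_1^j)$ (its Lemma on restriction of optimal partial plans) plus the sign of a correction term $F(\bar p)$ depending on the competitor's second marginal $\bar p=\pi_{1\#}\hat\gamma\le\hat p^j$; the inequality $\partial F/\partial\bar p_k\ge 0$ is exactly where Jensen enters with the correct orientation, and $F(\hat p^j)=0$ finishes the proof. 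If you want to salvage your route, you would need to actually exhibit admissible dual potentials for the projected LP (or prove full cyclical monotonicity of $\{(x_n^0,\hat x_n^j)\}_{n:\hat p_n^j>0}$ together with a separate argument excluding under-transporting plans), neither of which is done here.
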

It is worth noting that when we take a barycentric projection of a measure, some information is lost. Specifically, the information about the part of $\mu^j$ that is not transported from the reference $\mu^0$. This has some minor consequences.

First, unlike \eqref{eq: OT and embedding}, the optimal partial transport cost  $OPT_{\lambda}(\mu^0,\mu^j)$  changes when we replace $\mu^j$ by $\hat \mu^j$. Nevertheless, the following relation holds.
\begin{theorem}\label{thm: barycentric projection recover opt}
In the same setting of Definition \ref{def: opt barycentric}, if $\gamma^j$ is induced by a map, then 
    \begin{align}
        &OPT_{\lambda}(\mu^0,\mu^j)=OPT_\lambda(\mu^0,\hat\mu^j)+\lambda(|\mu^j|-|\hat \mu^j|) \label{eq: barycentric projection and opt}
    \end{align}
\end{theorem}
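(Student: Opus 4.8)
The plan is to evaluate both optimal partial transport costs explicitly through their optimal plans and then take the difference. Since $\gamma^j\in\Gamma^*_\leq(\mu^0,\mu^j)$ is optimal by hypothesis, I would first write, using the cost \eqref{eq: OPT cost} in its discrete form,
\[
OPT_\lambda(\mu^0,\mu^j)=\sum_{n,m}\|x_n^0-x_m^j\|^2\gamma^j_{n,m}+\lambda\bigl(|p^0|+|p^j|-2|\gamma^j|\bigr),
\]
where the penalty comes from $|\mu^0-\gamma^j_0|=|p^0|-|\gamma^j|$ and $|\mu^j-\gamma^j_1|=|p^j|-|\gamma^j|$, both marginals having total mass $|\gamma^j|$. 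The crucial observation is that because $\gamma^j$ is induced by a map, each row $n$ with $\hat p_n^j>0$ carries a single nonzero entry $\gamma^j_{n,m(n)}=\hat p_n^j$, so by \eqref{eq: p_n hat} and \eqref{eq: opt barycentric x} the barycentric point collapses to $\hat x_n^j=x_{m(n)}^j$. Consequently the transport cost is preserved \emph{exactly}: $\sum_{n,m}\|x_n^0-x_m^j\|^2\gamma^j_{n,m}=\sum_n\|x_n^0-\hat x_n^j\|^2\hat p_n^j$.

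Next I would compute $OPT_\lambda(\mu^0,\hat\mu^j)$. By Theorem \ref{thm: opt barycentric projection}, the optimal plan here is the diagonal $\hat\gamma^j=\diag(\hat p_1^j,\dots,\hat p_{N_0}^j)$, whose transport cost is precisely the preserved quantity $\sum_n\|x_n^0-\hat x_n^j\|^2\hat p_n^j$. Its first marginal is $\hat p^j\leq p^0$, contributing a destruction penalty $\lambda(|p^0|-|\hat\mu^j|)$, while its second marginal equals $\hat\mu^j$ exactly, so there is \emph{no} creation term. Hence
\[
OPT_\lambda(\mu^0,\hat\mu^j)=\sum_n\|x_n^0-\hat x_n^j\|^2\hat p_n^j+\lambda\bigl(|p^0|-|\hat\mu^j|\bigr).
\]

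Finally I would subtract the two expressions: the transport costs cancel, and using $|\hat\mu^j|=\sum_n\hat p_n^j=|\gamma^j|$ together with $|\mu^j|=|p^j|$, the difference of penalties collapses to $\lambda\bigl((|p^0|+|p^j|-2|\gamma^j|)-(|p^0|-|\gamma^j|)\bigr)=\lambda(|p^j|-|\gamma^j|)=\lambda(|\mu^j|-|\hat\mu^j|)$, which is exactly \eqref{eq: barycentric projection and opt}. The main obstacle — and the sole place the hypothesis enters — is establishing that the transport cost is preserved under projection: in general the barycentric projection strictly lowers the transport cost by Jensen's inequality, and only the map assumption forces $\hat x_n^j$ to coincide with the unique image $x_{m(n)}^j$, making the two transport costs equal. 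The rest is careful bookkeeping of the four marginal masses and the invocation of Theorem \ref{thm: opt barycentric projection} to confirm that the projected problem incurs destruction but no creation.
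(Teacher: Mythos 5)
Your proposal is correct and follows essentially the same route as the paper: exploit the map hypothesis to show the transport cost is preserved exactly under projection (each $\hat x_n^j$ collapsing to the unique image point), invoke Theorem \ref{thm: opt barycentric projection} to identify $\diag(\hat p_1^j,\dots,\hat p_{N_0}^j)$ as the optimal plan for $OPT_\lambda(\mu^0,\hat\mu^j)$, and bookkeep the penalty terms using $|\hat\mu^j|=|\gamma^j|$. If anything, your version is cleaner: the paper reduces WLOG to an injective map via a relabeling argument, whereas your row-wise observation shows this is unnecessary, and your closing remark correctly pinpoints that without the map hypothesis Jensen's inequality would make the projected transport cost strictly smaller, which is exactly where the assumption is indispensable.
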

The second consequence\footnote{This is indeed an advantage since it allows the range of the embedding to always have the same dimension $N_0\times (d+1)$.} is that the LOPT embedding of $\hat{\mu}^j$ will always have a null third component. That is,
\begin{equation}
    \hat \mu^j\mapsto
    ([\hat{x}_1^j-x_1^0,\ldots,\hat{x}_{N_0}^j-x_{N_0}^0], \, \sum_{n=1}^{N_0}\hat{p}_n^j\delta_{x_n^0}, \, 0). \label{eq: lopt embedding discrete}
\end{equation}
Therefore, we represent  this embedding  as $\hat \mu^j\mapsto(u^j,\hat{p}^j)$, for $u^j=[\hat{x}_1^j-x_1^0,\ldots,\hat{x}_{N_0}^j-x_{N_0}^0]$ and  $\hat p^j=[\hat p^j_1,\ldots,\hat p^j_{N_0}]$.
The last consequence is given in the next result.
\begin{proposition} \label{pro: lopt dist barycentric}
    If $\mu^0,\mu^i,\mu^j$ are discrete and satisfy the conditions of Definition \ref{def: discrete opt embedding}, then  
    \begin{equation}
        LOPT_{\mu^0,\lambda}(\mu^i,\mu^j)=LOPT_{\mu^0,\lambda}(\hat\mu^i,\hat\mu^j)+\lambda C_{i,j}  \label{eq: lopt discrete general}
    \end{equation}
    where $C_{i,j}=|\mu^i|-|\hat \mu^i|+|\mu^j|-|\hat \mu^j|$.
\end{proposition}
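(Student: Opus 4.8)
The plan is to compare the LOPT embedding of each target measure with that of its barycentric projection and to observe that the two embeddings differ only in their third (created-mass) component, so that the discrepancy formula \eqref{eq: LOPT general} changes in a single, explicitly computable term. First I would record the embedding of $\mu^i$ with respect to $\mu^0$ induced by an optimal plan $\gamma^i\in\Gamma^*_\leq(\mu^0,\mu^i)$: its first two components are $u^i(n)=\hat x^i_n-x^0_n$ and $\bar\mu^i=\gamma^i_0=\sum_{n}\hat p^i_n\delta_{x^0_n}$ (using $\hat p^i_n=\sum_m\gamma^i_{n,m}$ from \eqref{eq: p_n hat}), while its third component is $\nu^i=\mu^i-\gamma^i_1$. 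By Theorem \ref{thm: opt barycentric projection}, the plan solving $OPT_\lambda(\mu^0,\hat\mu^i)$ is $\hat\gamma^i=\diag(\hat p^i_1,\ldots,\hat p^i_{N_0})$, which is induced by the very same map $x^0_n\mapsto \hat x^i_n$. Hence the embedding of $\hat\mu^i$ has identical first two components $u^i$ and $\bar\mu^i$, whereas its second marginal is now all of $\hat\mu^i$, so its third component vanishes, exactly as recorded in \eqref{eq: lopt embedding discrete}. The same statements hold with $j$ in place of $i$.

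Next I would substitute both pairs of embeddings into \eqref{eq: LOPT general}. Since the displacement vectors $u^i,u^j$ and the transported-source marginals $\bar\mu^i,\bar\mu^j$ are left unchanged by the projection, the quadratic term $\|u^i-u^j\|^2_{\bar\mu^i\wedge\bar\mu^j,2\lambda}$ and the mass term $\lambda|\bar\mu^i-\bar\mu^j|$ coincide for $LOPT_{\mu^0,\lambda}(\mu^i,\mu^j)$ and $LOPT_{\mu^0,\lambda}(\hat\mu^i,\hat\mu^j)$. Because the third components of the two projected embeddings are both zero, the entire difference collapses to the created-mass term, reducing the claim to proving $|\nu^i-\nu^j|=C_{i,j}$.

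Finally I would evaluate $|\nu^i-\nu^j|$. From $\gamma^i_1\leq\mu^i$ we have $\nu^i\geq 0$ with $|\nu^i|=|\mu^i|-|\gamma^i_1|$, and since $|\gamma^i_1|=|\gamma^i|=\sum_n\hat p^i_n=|\hat\mu^i|$ this yields $|\nu^i|=|\mu^i|-|\hat\mu^i|$, and likewise $|\nu^j|=|\mu^j|-|\hat\mu^j|$. The decisive step is that $\nu^i$ and $\nu^j$ are mutually singular: each is supported on the support of its target, $\supp\nu^i\subseteq\{x^i_m\}$ and $\supp\nu^j\subseteq\{x^j_m\}$, which are disjoint for distinct point clouds. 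Then the total variation of the signed measure $\nu^i-\nu^j$ splits additively as $|\nu^i-\nu^j|=|\nu^i|+|\nu^j|=(|\mu^i|-|\hat\mu^i|)+(|\mu^j|-|\hat\mu^j|)=C_{i,j}$, which finishes the argument. I expect this last point to be the main thing to get right: one must justify the \emph{additivity} of the created-mass term through the disjointness of the two excess measures, rather than settle for the mere subadditivity of total variation, and this is precisely where the hypothesis on the supports enters.
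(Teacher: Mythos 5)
Your proof is correct and follows essentially the same route as the paper's: both reduce the identity to the observation that passing to $\hat\mu^i,\hat\mu^j$ leaves the first two embedding components unchanged and annihilates the third, so the two discrepancies in \eqref{eq: LOPT general} differ exactly by $\lambda|\nu^i-\nu^j|$. You are in fact more careful than the paper at the last step, since the identity $|\nu^i-\nu^j|=|\nu^i|+|\nu^j|=C_{i,j}$ genuinely requires the mutual singularity of the two leftover measures (disjointness of the supports of $\nu^i$ and $\nu^j$), a point the paper's two-line proof leaves implicit.
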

As a byproduct we can define the \textbf{LOPT discrepancy} for \textbf{any} pair of discrete measures $\mu^i,\mu^j$ as the right-hand side of \eqref{eq: lopt discrete general}. In practice, unless to approximate $OPT_{\lambda}(\mu^i,\mu^j)$, we set $C_{i,j}=0$ in \eqref{eq: lopt discrete general}. That is, 
\begin{equation}\label{eq: LOPT discrete no Cij}
    LOPT_{\mu^0,\lambda}(\mu^i,\mu^j):=LOPT_{\mu^0,\lambda}(\hat\mu^i,\hat\mu^j).
\end{equation}

\subsection{OPT and LOPT Interpolation}\label{sec: interpolation}

Inspired by OT and LOT geodesics as defined in section \ref{sec: OT LOT geogesic}, but lacking the Riemannian structure provided by the OT squared norm,  we propose an OPT interpolation curve and its LOPT  approximation.

For the OPT interpolation between two measures $\mu^i$, $\mu^j$ for which exists $\gamma\in\Gamma_\leq^*(\mu^i,\mu^j)$ of the form
$\gamma=(\mathrm{id}\times T)_\#\gamma_0 \, $, a natural candidate is the solution $\rho_t$ of the dynamic formulation of $OPT_\lambda(\mu^i,\mu^j)$. The exact expression is given by Proposition \ref{pro: opt dynamic}. When working with general discrete measures $\mu^i$, $\mu^j$ (as in \eqref{eq: discrete measures}, with `$i$' in place of $0$) such $\gamma$ is not guaranteed to exist. Then,  we replace the latter with its OPT barycentric projection with respect to $\mu^i$. And by Theorem \ref{thm: opt barycentric projection} the map $T:x_n^i\mapsto \hat x_n^j$ solves $OPT_\lambda(\mu^i,\hat{\mu}^j)$  and the \textbf{OPT interpolating curve} is\footnote{$\hat p_n^j$ are the coefficients of $\hat\mu^j$ with respect to $\mu^i$ analogous to \eqref{eq: opt barycenter}.}
\begin{equation*}        
    t\mapsto\sum_{n=1}^{N_i}\hat{p}_n^j\delta_{(1-t)x_n^i+t T(x_n^i)}
    +(1-t)\sum_{n=1}^{N_i}(p_n^i - \hat{p}_n^j)\delta_{x_n^i}. 
\end{equation*}
When working with a multitude of measures, it is convenient to consider a reference $\mu^0$ and embed the measures in $\mathbb{R}^{(d+1)\times N_0}$ using LOPT. Hence, doing computations in a simpler space. Below we provide the LOPT interpolation.
\begin{definition}
    Given discrete measures $\mu^0,\mu^i,\mu^j$, with $\mu^0$ as the reference, let 
    $(u^i,\hat p^i),(u^j,\hat p^i)$ be the LOPT embeddings of $\mu^i,\mu^j$. Let $\hat{p}^{ij}:=\hat{p}^i\wedge \hat{p}^j$, and $u_t:=(1-t)u^i+tu^j \label{eq: lopt u_t}$. We define the \textbf{LOPT interpolating curve} between $\mu^i$ and $\mu^j$ by     
    \begin{align}
        t\mapsto&\sum_{k\in D_{T}}\hat p^{ij}_k\delta_{x_k^0+u_t(k)}+(1-t)\sum_{k\in D_D}(\hat p_k^i-\hat p^{ij}_k)\delta_{x_k^0+u_k^i} +t\sum_{k\in D_C}(\hat p_k^j-\hat p_k^{ij})\delta_{x^0_k+u_k^j
        }\nonumber 
    \end{align}
    where $D_T = \{k: \hat{p}_k^{ij}>0\}$, $D_D = \{k: \hat{p}_k^i>\hat{p}_k^{ij})\}$ and $D_C = \{k: \hat{p}_k^{ij}<\hat{p}_k^j)\}$ are respectively the sets where we transport, destroy and create mass. 
\end{definition}

\section{Applications}\label{sec: applications}

\textbf{Approximation of OPT Distance:}
Similar to LOT \cite{wang2013linear}, and Linear Hellinger Kantorovich (LHK) \cite{cai2022linearized}, we test the approximation performance of OPT using LOPT. Given $K$ empirical measures $\{\mu^i\}_{i=1}^K$, for each pair $(\mu^i,\mu^j)$, we compute  $OPT_{\lambda}(\mu^i,\mu^j)$ and
$LOPT_{\mu^0,\lambda}(\mu^i,\mu^j)$ and the mean or median of all pairs $(\mu^i,\mu^j)$ of relative error defined as 
$$ \frac{|OPT_{\lambda}(\mu^i,\mu^j)-LOPT_{\mu^0,\lambda}(\mu^i,\mu^j)|}{OPT_\lambda(\mu^i,\mu^j)}.$$
Similar to LOT and LHK, the choice of $\mu^0$ is critical for the accurate approximation of OPT. If $\mu^0$ is far away from $\{\mu^i\}_{i=1}^K$, the linearization is a poor approximation because the mass in $\mu^i$ and $\mu^0$ would only be destroyed or created. In practice, one candidate for $\mu^0$ is the barycenter of the set of measures $\{\mu^i\}$. The OPT can be converted into OT problem \cite{caffarelli2010free},
~and one can use OT barycenter \cite{cuturi2014fast} to find $\mu^0$. 

For our experiments, we created $K$ point sets of size $N=500$
for $K$ different Gaussian distributions in $\mathbb{R}^2$. In particular, $\mu^i\sim \mathcal{N}(m^i,I)$, where $m^i$ is randomly selected such that $\|m ^i\| = \sqrt{3}$ for $i=1,...,K$. For the reference, we picked an $N$ point representation of $\mu^0\sim \mathcal{N}(\overline{m},I)$ with $\overline{m}=\sum m^i/K$. We repeated each experiment $10$ times.
To exhibit the effect of the parameter $\lambda$ in the approximation, the relative errors are shown in Figure \ref{fig: error_vs_lambda}. For the histogram of the relative errors for each value of $\lambda$ and each number of measures $K$, we refer to Figure \ref{fig: hist} in the Appendix \ref{sec: appendix app}.
For large $\lambda$, most mass is transported and $OT(\mu^i,\mu^j)\approx OPT_{\lambda}(\mu^i,\mu^j)$, the performance of LOPT is close to that of LOT, and the relative error is small. 

\begin{figure}
\centering
\begin{subfigure}[b]{0.49\textwidth}
        \centering
    \includegraphics[width=1\textwidth]{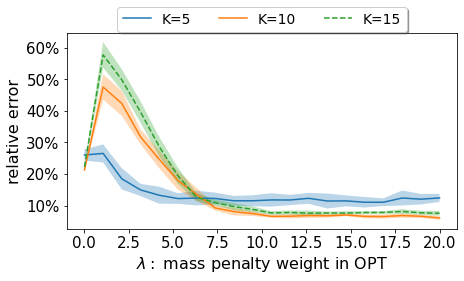} 
        \caption{Mean of the error}
        \label{fig: mean_error}
    \end{subfigure}
    \begin{subfigure}[b]{0.49\textwidth}
        \centering
        \includegraphics[width=1\textwidth]{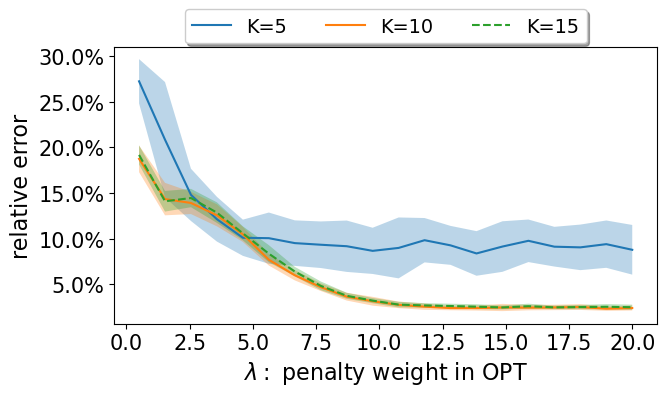}
        \caption{Medean of the error}
        \label{fig: medean_error}
    \end{subfigure}
    
    \caption{Graphs of the mean and median relative errors between $OPT_\lambda$ and $LOPT_{\lambda,\mu_0}$ as a function of the parameter $\lambda$.}
    \label{fig: error_vs_lambda}
\end{figure}

In Figure \ref{fig: wall clock time} we report wall clock times of OPT vs LOPT for $\lambda =5$. We use linear programming \cite{karmarkar1984new} to solve each OPT problem with a cost of $\mathcal{O}(N^3\text{log}(N))$ each. Thus, computing the OPT distance pair-wisely for $\{\mu^i\}_{i=1}^K$ requires $\mathcal{O}(K^2 N^3\text{log}(N))$. In contrast, to compute $LOPT$, we only need to solve $K$ optimal partial transport problems for the embeddings (see \eqref{eq: lopt embedding general} or \eqref{eq: lopt embedding discrete}). Computing LOPT discrepancies after the embeddings is linear. Thus, the total computational cost is $\mathcal{O}(K N^3\text{log}(N)+K^2N)$. The experiment was conducted on a Linux computer with AMD EPYC 7702P CPU with 64 cores and 256GB DDR4 RAM.
\begin{figure}[t!]
    \centering
    \includegraphics[width=0.7\textwidth]{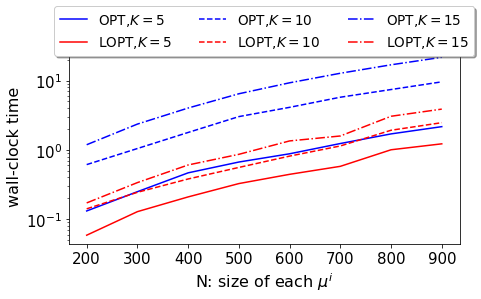}
    \caption{Wall-clock time between OPT and LOPT.  
    The LP solver in PythonOT \cite{flamary2021pot} is applied to each individual OPT problem, with $100N$ maximum number of iterations.}
    \label{fig: wall clock time}
\end{figure}

\textbf{Point Cloud Interpolation: }
We test OT geodesic, LOT geodesic, OPT interpolation, and LOPT interpolation on the  \hyperlink{https://www.kaggle.com/datasets/cristiangarcia/pointcloudmnist2d}{point cloud MNIST} dataset. We compute different transport curves between point sets of the digits $0$ and $9$. Each digit is a weighted point set $\{x_n^j,p_n^j\}_{n=1}^{N_j}$, $j=1,2$, that we consider as a discrete measure of the form $\mu^j =\sum_{n=1}^{N_j}p_n^j\delta_{x^j_n}+1/N_j\sum_{m=1}^{\eta N_j}\delta_{y_m^j}$, where the first sum corresponds to the clean data normalized to have total mass 1, and the second sum is constructed with samples from a uniform distribution acting as noise with total mass $\eta$. For HK, OPT, LHK and LOPT, we use the distributions $\mu^j$ without re-normalization, while for OT and LOT, we re-normalize them. The reference in LOT, LHK and LOPT is taken as the OT barycenter of a sample of the digits 0, 1, and 9 not including the ones used for interpolation, and normalized to have unit total mass. We test for $\eta=0,0.5,0.75$ (see Figure \ref{fig: interpolation_09_01} in the Appendix \ref{sec: appendix app}). The results for $\eta=0.5$ are shown in  Figure \ref{fig: geodesic1}. We can see that OT and LOT do not eliminate noise points. HK, OPT still retains much of the noise because interpolation is essentially between $\mu^1$ and $\hat\mu^2$ (with respect to $\mu^1$). So $\mu^1$ acts as a reference that still has a lot of noise. In LHK, LOPT, by selecting the same reference as LOT we see that the noise significantly decreases. 
In the HK and LHK cases, we notice not only how the masses vary, but also how their relative positions change
obtaining a very different configuration at the end of the interpolation. OPT and LOPT instead returns a more natural interpolation
because of the mass preservation of the transported portion and the decoupling between transport and destruction/creation of
mass.



\begin{figure}[t!]
    \centering
    \hspace{-1.5em}\includegraphics[width=\textwidth]{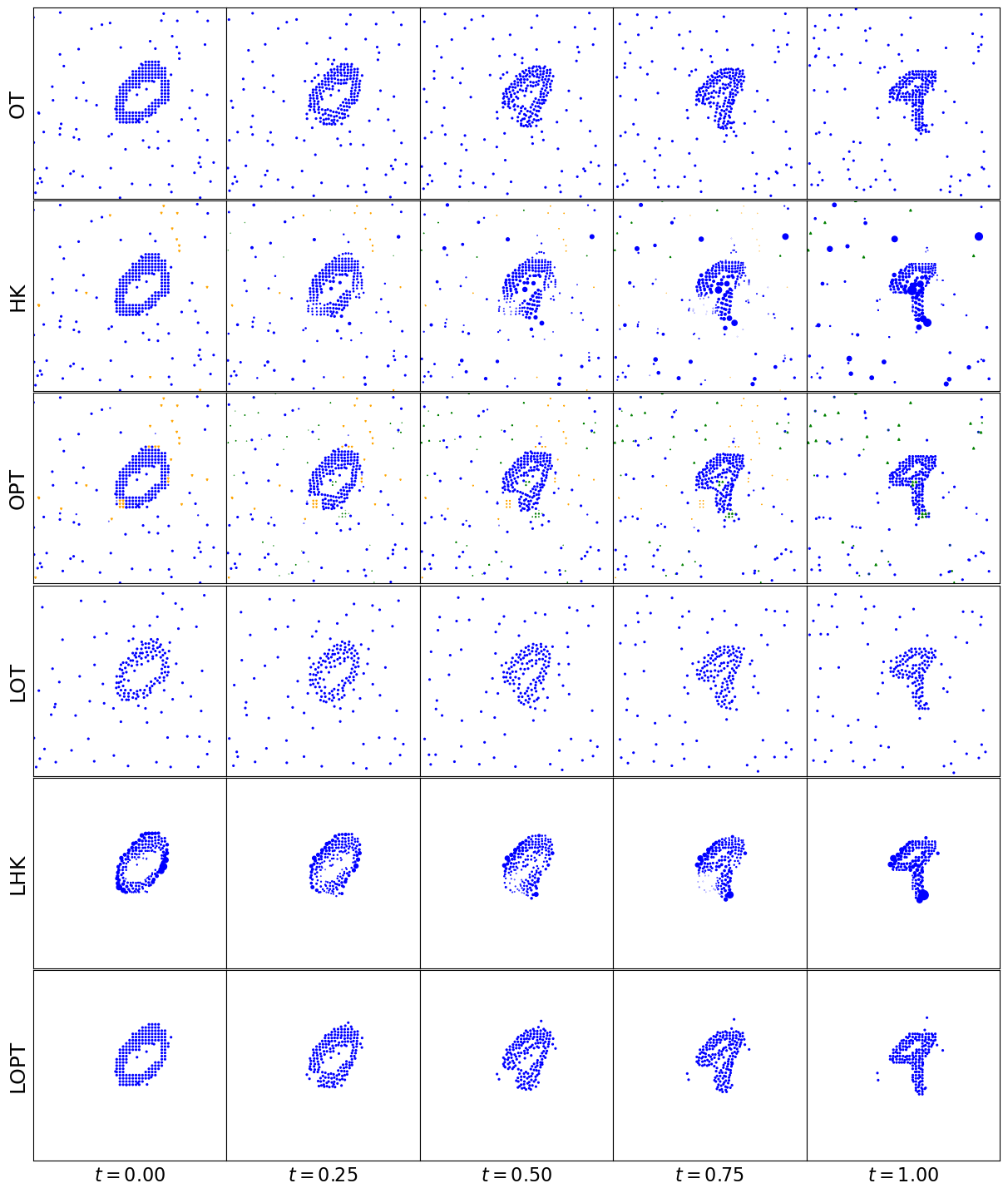}
    \caption{We demonstrate the OT geodesic, OPT interpolation, LOT geodesic and LOPT interpolation in \href{https://www.kaggle.com/datasets/cristiangarcia/pointcloudmnist2d}{MNIST} dataset. In LOT geodesic and LOPT interpolation, we use the same reference measure. The percentage of noise $\eta$ is set to $0.5$. In OPT and LOPT interpolation, we set $\lambda=20$; in HK and LHK, we set the scaling to be $2.5$.}
    \label{fig: geodesic1}
\end{figure}

\textbf{PCA analysis:}
We compare the results of performing PCA on the embedding space of LOT, LHK and LOPT for \hyperlink{https://www.kaggle.com/datasets/cristiangarcia/pointcloudmnist2d}{point cloud MNIST}.  
We take 900 digits from the dataset corresponding to digits $0,1$ and $3$ in equal proportions. Each element is a point set $\{x^j_n\}_{n=1}^{N_j}$ that we consider as a discrete measure with added noise. The reference, $\mu^0$, is set to the OT barycenter of 30 samples from the clean data. For LOT we re-normalize each $\mu^j$ to have a total mass of 1, while we do not re-normalize for LOPT. Let $S_\eta :=\{\mu^j: \text{noise level} = \eta\}_{j=1}^{900}$.  We embed $S_\eta$ using LOT, LHK and LOPT and apply PCA on the embedded vectors $\{u^j\}$. In Figure \ref{fig: pca result} we show the first two principal components of the set of embedded vectors based on LOT, LHK and LOPT for noise levels $\eta=0, 0.75$. It can be seen that when there is no noise, the PCA dimension reduction technique works well for all three embedding methods. When $\eta = 0.75$, the method fails for LOT embedding, but the dimension-reduced data is still separable for LOPT and LHK. For the running time, LOT, LOPT requires 60-80 seconds and LHK requires about 300-350 seconds. The experiments are conducted on a Linux computer with AMD EPYC 7702P CPU with 64 cores and 256GB DDR4 RAM. 
\begin{figure}[t!]
\centering
\includegraphics[width=\textwidth]{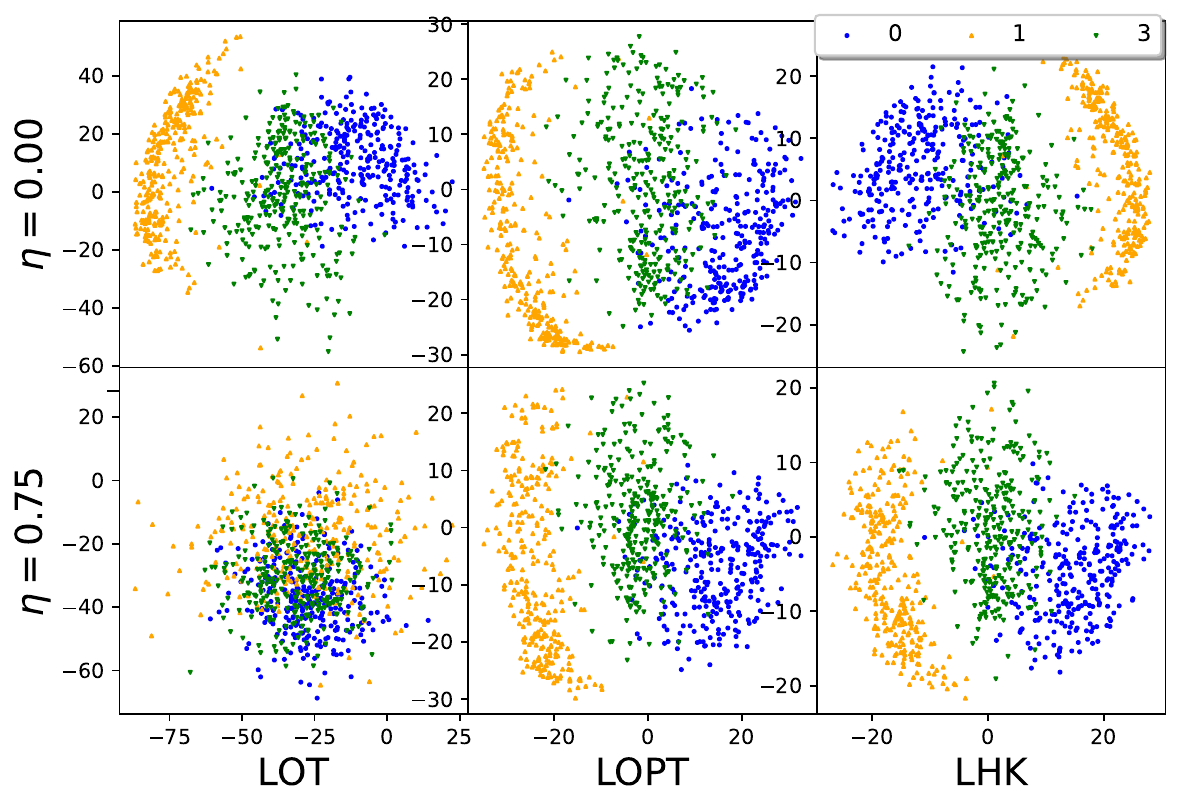}
\caption{We plot the first two principal components of each $u^j$ based on LOT and LOPT. For LOPT, we set $\lambda=20.0$, and for LHK, we set the scaling to be $2.5$.}
\label{fig: pca result}
\end{figure}

We refer the reader to Appendix \ref{sec: appendix app} for further details and analysis. 

\section{Summary}
We proposed a Linear Optimal Partial Transport (LOPT) technique that allows us to embed distributions with different masses into a fixed dimensional space
in which several calculations are significantly simplified. We show how to implement this for real data distributions allowing us to reduce the computational cost in applications that would benefit from the use of optimal (partial) transport. We finally provide comparisons with previous techniques and show some concrete applications. In particular, we show that LOPT is more robust and computationally efficient in the presence of noise than previous methods. For future work, we will continue to investigate the comparison of LHK and LOPT, and the potential applications of LOPT in other machine learning and data science tasks, such as Barycenter problems, graph embedding, task similarity measurement in transfer learning, and so on. 

\section*{Acknowledgements}
This work was partially supported by the Defense Advanced Research Projects Agency (DARPA) under Contracts No. HR00112190132 and No. HR00112190135. Any opinions, findings, conclusions, or recommendations expressed in this material are those of the authors and do not necessarily reflect the views of the United States Air Force, DARPA, or other funding agencies.

\bibliography{documents/references}

\begin{thebibliography}{10}\itemsep=-1pt

\bibitem{aldroubi2021partitioning}
Akram Aldroubi, Shiying Li, and Gustavo~K Rohde.
\newblock Partitioning signal classes using transport transforms for data
  analysis and machine learning.
\newblock {\em Sampling Theory, Signal Processing, and Data Analysis},
  19(1):1--25, 2021.

\bibitem{ambrosio2005gradient}
Luigi Ambrosio, Nicola Gigli, and Giuseppe Savar{\'e}.
\newblock {\em Gradient flows: in metric spaces and in the space of probability
  measures}.
\newblock Springer Science \& Business Media, 2005.

\bibitem{arjovsky2017wasserstein}
Martin Arjovsky, Soumith Chintala, and L{\'e}on Bottou.
\newblock Wasserstein generative adversarial networks.
\newblock In {\em International conference on machine learning}, pages
  214--223. PMLR, 2017.

\bibitem{bai2022sliced}
Yikun Bai, Bernard Schmitzer, Mathew Thorpe, and Soheil Kolouri.
\newblock Sliced optimal partial transport.
\newblock {\em arXiv preprint arXiv:2212.08049}, 2022.

\bibitem{basu2014detecting}
Saurav Basu, Soheil Kolouri, and Gustavo~K Rohde.
\newblock Detecting and visualizing cell phenotype differences from microscopy
  images using transport-based morphometry.
\newblock {\em Proceedings of the National Academy of Sciences},
  111(9):3448--3453, 2014.

\bibitem{benamou2000computational}
Jean-David Benamou and Yann Brenier.
\newblock A computational fluid mechanics solution to the monge-kantorovich
  mass transfer problem.
\newblock {\em Numerische Mathematik}, 84(3):375--393, 2000.

\bibitem{brenier1991polar}
Yann Brenier.
\newblock Polar factorization and monotone rearrangement of vector-valued
  functions.
\newblock {\em Communications on pure and applied mathematics}, 44(4):375--417,
  1991.

\bibitem{caffarelli2010free}
Luis~A Caffarelli and Robert~J McCann.
\newblock Free boundaries in optimal transport and monge-ampere obstacle
  problems.
\newblock {\em Annals of mathematics}, pages 673--730, 2010.

\bibitem{cai2020linearized}
Tianji Cai, Junyi Cheng, Nathaniel Craig, and Katy Craig.
\newblock Linearized optimal transport for collider events.
\newblock {\em Physical Review D}, 102(11):116019, 2020.

\bibitem{cai2022linearized}
Tianji Cai, Junyi Cheng, Bernhard Schmitzer, and Matthew Thorpe.
\newblock The linearized hellinger--kantorovich distance.
\newblock {\em SIAM Journal on Imaging Sciences}, 15(1):45--83, 2022.

\bibitem{chizat2018interpolating}
Lenaic Chizat, Gabriel Peyr{\'e}, Bernhard Schmitzer, and Fran{\c{c}}ois-Xavier
  Vialard.
\newblock An interpolating distance between optimal transport and fisher--rao
  metrics.
\newblock {\em Foundations of Computational Mathematics}, 18(1):1--44, 2018.

\bibitem{chizat2018unbalanced}
Lenaic Chizat, Gabriel Peyr{\'e}, Bernhard Schmitzer, and Fran{\c{c}}ois-Xavier
  Vialard.
\newblock Unbalanced optimal transport: Dynamic and kantorovich formulations.
\newblock {\em Journal of Functional Analysis}, 274(11):3090--3123, 2018.

\bibitem{chizat2020faster}
Lenaic Chizat, Pierre Roussillon, Flavien L{\'e}ger, Fran{\c{c}}ois-Xavier
  Vialard, and Gabriel Peyr{\'e}.
\newblock Faster wasserstein distance estimation with the sinkhorn divergence.
\newblock {\em Advances in Neural Information Processing Systems},
  33:2257--2269, 2020.

\bibitem{Chizat2015Interpolating}
Lenaic Chizat, Bernhard Schmitzer, Gabriel Peyre, and Francois-Xavier Vialard.
\newblock An interpolating distance between optimal transport and fisher-rao.
\newblock {\em arXiv:1506.06430 [math]}, 2015.

\bibitem{courty2017joint}
Nicolas Courty, R{\'e}mi Flamary, Amaury Habrard, and Alain Rakotomamonjy.
\newblock Joint distribution optimal transportation for domain adaptation.
\newblock {\em Advances in Neural Information Processing Systems}, 30, 2017.

\bibitem{courty2014domain}
Nicolas Courty, R{\'e}mi Flamary, and Devis Tuia.
\newblock Domain adaptation with regularized optimal transport.
\newblock In {\em Joint European Conference on Machine Learning and Knowledge
  Discovery in Databases}, pages 274--289. Springer, 2014.

\bibitem{cuturi2013sinkhorn}
Marco Cuturi.
\newblock Sinkhorn distances: Lightspeed computation of optimal transport.
\newblock {\em Advances in neural information processing systems}, 26, 2013.

\bibitem{cuturi2014fast}
Marco Cuturi and Arnaud Doucet.
\newblock Fast computation of wasserstein barycenters.
\newblock In {\em International conference on machine learning}, pages
  685--693. PMLR, 2014.

\bibitem{fatras2021unbalanced}
Kilian Fatras, Thibault S{\'e}journ{\'e}, R{\'e}mi Flamary, and Nicolas Courty.
\newblock Unbalanced minibatch optimal transport; applications to domain
  adaptation.
\newblock In {\em International Conference on Machine Learning}, pages
  3186--3197. PMLR, 2021.

\bibitem{figalli2010optimal}
Alessio Figalli.
\newblock The optimal partial transport problem.
\newblock {\em Archive for rational mechanics and analysis}, 195(2):533--560,
  2010.

\bibitem{figalli2010new}
Alessio Figalli and Nicola Gigli.
\newblock A new transportation distance between non-negative measures, with
  applications to gradients flows with dirichlet boundary conditions.
\newblock {\em Journal de math{\'e}matiques pures et appliqu{\'e}es},
  94(2):107--130, 2010.

\bibitem{figalli2021invitation}
Alessio Figalli and Federico Glaudo.
\newblock {\em An Invitation to Optimal Transport, Wasserstein Distances, and
  Gradient Flows}.
\newblock European Mathematical Society, 2021.

\bibitem{flamary2021pot}
R{\'e}mi Flamary, Nicolas Courty, Alexandre Gramfort, Mokhtar~Z. Alaya,
  Aur{\'e}lie Boisbunon, Stanislas Chambon, Laetitia Chapel, Adrien Corenflos,
  Kilian Fatras, Nemo Fournier, L{\'e}o Gautheron, Nathalie~T.H. Gayraud,
  Hicham Janati, Alain Rakotomamonjy, Ievgen Redko, Antoine Rolet, Antony
  Schutz, Vivien Seguy, Danica~J. Sutherland, Romain Tavenard, Alexander Tong,
  and Titouan Vayer.
\newblock Pot: Python optimal transport.
\newblock {\em Journal of Machine Learning Research}, 22(78):1--8, 2021.

\bibitem{genevay2017gan}
Aude Genevay, Gabriel Peyr{\'e}, and Marco Cuturi.
\newblock Gan and vae from an optimal transport point of view.
\newblock {\em arXiv preprint arXiv:1706.01807}, 2017.

\bibitem{guittet2002extended}
Kevin Guittet.
\newblock {\em Extended Kantorovich norms: a tool for optimization}.
\newblock PhD thesis, INRIA, 2002.

\bibitem{janati2019wasserstein}
Hicham Janati, Marco Cuturi, and Alexandre Gramfort.
\newblock Wasserstein regularization for sparse multi-task regression.
\newblock In {\em The 22nd International Conference on Artificial Intelligence
  and Statistics}, pages 1407--1416. PMLR, 2019.

\bibitem{karmarkar1984new}
Narendra Karmarkar.
\newblock A new polynomial-time algorithm for linear programming.
\newblock In {\em Proceedings of the sixteenth annual ACM symposium on Theory
  of computing}, pages 302--311, 1984.

\bibitem{kim2013efficient}
Sanggyun Kim, Rui Ma, Diego Mesa, and Todd~P Coleman.
\newblock Efficient bayesian inference methods via convex optimization and
  optimal transport.
\newblock In {\em 2013 IEEE International Symposium on Information Theory},
  pages 2259--2263. IEEE, 2013.

\bibitem{kolouri2020wasserstein}
Soheil Kolouri, Navid Naderializadeh, Gustavo~K Rohde, and Heiko Hoffmann.
\newblock Wasserstein embedding for graph learning.
\newblock In {\em International Conference on Learning Representations}, 2020.

\bibitem{kundu2018discovery}
Shinjini Kundu, Soheil Kolouri, Kirk~I Erickson, Arthur~F Kramer, Edward
  McAuley, and Gustavo~K Rohde.
\newblock Discovery and visualization of structural biomarkers from mri using
  transport-based morphometry.
\newblock {\em NeuroImage}, 167:256--275, 2018.

\bibitem{lellmann2014imaging}
Jan Lellmann, Dirk~A Lorenz, Carola Schonlieb, and Tuomo Valkonen.
\newblock Imaging with kantorovich--rubinstein discrepancy.
\newblock {\em SIAM Journal on Imaging Sciences}, 7(4):2833--2859, 2014.

\bibitem{Liero2018Optimal}
Matthias Liero, Alexander Mielke, and Giuseppe Savare.
\newblock Optimal entropy-transport problems and a new hellinger–kantorovich
  distance between positive measures.
\newblock {\em Inventiones mathematicae}, 211(3):969--1117, 2018.

\bibitem{liu2019wasserstein}
Huidong Liu, Xianfeng Gu, and Dimitris Samaras.
\newblock Wasserstein gan with quadratic transport cost.
\newblock In {\em Proceedings of the IEEE/CVF international conference on
  computer vision}, pages 4832--4841, 2019.

\bibitem{moosmuller2020linear}
Caroline Moosm{\"u}ller and Alexander Cloninger.
\newblock Linear optimal transport embedding: Provable wasserstein
  classification for certain rigid transformations and perturbations.
\newblock {\em Information and Inference: A Journal of the IMA},
  12(1):363--389, 2023.

\bibitem{naderializadeh2021pooling}
Navid Naderializadeh, Joseph~F Comer, Reed Andrews, Heiko Hoffmann, and Soheil
  Kolouri.
\newblock Pooling by sliced-wasserstein embedding.
\newblock {\em Advances in Neural Information Processing Systems},
  34:3389--3400, 2021.

\bibitem{nguyen2023self}
Khai Nguyen, Dang Nguyen, and Nhat Ho.
\newblock Self-attention amortized distributional projection optimization for
  sliced wasserstein point-cloud reconstruction.
\newblock {\em arXiv preprint arXiv:2301.04791}, 2023.

\bibitem{nguyen2022improving}
Khai Nguyen, Dang Nguyen, Tung Pham, Nhat Ho, et~al.
\newblock Improving mini-batch optimal transport via partial transportation.
\newblock In {\em International Conference on Machine Learning}, pages
  16656--16690. PMLR, 2022.

\bibitem{sant}
F. Santambrogio.
\newblock {\em Optimal Transport for Applied Mathematicians. Calculus of
  Variations, PDEs and Modeling}.
\newblock Birkhäuser, 2015.

\bibitem{scetbon2022lowrank}
Meyer Scetbon and marco cuturi.
\newblock Low-rank optimal transport: Approximation, statistics and debiasing.
\newblock In Alice~H. Oh, Alekh Agarwal, Danielle Belgrave, and Kyunghyun Cho,
  editors, {\em Advances in Neural Information Processing Systems}, 2022.

\bibitem{Villani2003Topics}
Cedric Villani.
\newblock {\em Topics in Optimal Transportation}, volume~58.
\newblock American Mathematical Society, 2003.

\bibitem{wang2013linear}
Wei Wang, Dejan Slep{\v{c}}ev, Saurav Basu, John~A Ozolek, and Gustavo~K Rohde.
\newblock A linear optimal transportation framework for quantifying and
  visualizing variations in sets of images.
\newblock {\em International journal of computer vision}, 101(2):254--269,
  2013.

\bibitem{ye2017fast}
Jianbo Ye, Panruo Wu, James~Z Wang, and Jia Li.
\newblock Fast discrete distribution clustering using wasserstein barycenter
  with sparse support.
\newblock {\em IEEE Transactions on Signal Processing}, 65(9):2317--2332, 2017.

\end{thebibliography}
\bibliographystyle{documents/ieee_fullname}

\newpage
\appendix
\onecolumn



\begin{center}
\textbf{Appendix}
\end{center}
We refer to the main text for references.
\section{Notation}
\begin{itemize}
    \item $(\mathbb{R}^d,\|\cdot\|)$, $d$-dimensional Euclidean space endowed with the standard Euclidean norm $\|x\|=\sqrt{\sum_{k=1}^d|x(k)^2|}$.  $d(\cdot,\cdot)$ is the associated distance, i.e., $d(x,y)=\|x-y\|$. 
    \item $x\cdot y$ canonical inner product in $\mathbb{R}^d$.
    \item $1_{N}$, column vector of size $N\times 1$ with all entries equal to 1.
    \item $\diag(p_1,\ldots,p_N)$, diagonal matrix.
    \item $w^T$, transpose of a matrix $w$.  
    \item $\mathbb{R}_+$, non-negative real numbers.
    \item $\Omega$, convex and compact subset of $\mathbb{R}^d$. $\Omega^2=\Omega\times \Omega$.
    \item $\mathcal{P}(\Omega)$, set of probability Borel measures defined in $\Omega$.
    \item $\mathcal{M}_+(\Omega)$, set of positive finite Borel measures defined in $\Omega$.
    \item $\mathcal{M}$, set of signed measures.
    \item $\pi_0:\Omega^2\to\Omega$, $\pi_0(x^0,x^1):=x^0$;  $\pi_1:\Omega^2\to\Omega$, $\pi_1(x^0,x^1):=x^1$, standard projections. 
    \item $F_\#\mu$, push-forward of the measure $\mu$ by the function. $F_\#\mu(B)=\mu(F^{-1}(B))$ for all measurable set $B$, where $F^{-1}(B)=\{x: \, F(x)\in B\}$. 
    \item $\delta_x$, Dirac measure concentrated on $x$.
    \item $\mu=\sum_{n=1}^{N}p_n\delta_{x_n}$, discrete measure ($x_n\in\Omega$, $p_n\in\mathbb{R}_+$). The coefficients $p_n$ are called the weights of $\mu$.
    \item $\supp(\mu)$, support of the measure $\mu$.
    \item $\mu^i\wedge\mu^j$ minimum measure between $\mu^i$ and $\mu^j$; $p^i\wedge p^j$ vector having at each $n$ entry  the minimum value $p^i_n$ and $p^j_n$. 
    \item $\mu^0$ reference measure. $\mu^i,\mu^j$ target measures. In the OT framework, they are in $\mathcal{P}(\Omega)$. In the OPT framework, they are in $\mathcal{M}_+(\Omega)$. 
    
    \item $\Gamma(\mu^0,\mu^j)=\{\gamma\in\mathcal{P}(\Omega^2): \, \pi_{0\#}\gamma=\mu^0, \,  \pi_{1\#}\gamma=\mu^j\}$, set of Kantorovich transport plans.
    \item $C(\gamma;\mu^0,\mu^j)=\int_{\Omega^2} \|x^0-x^j\|^2 d\gamma(x^0,x^j)$, Kantorovich cost given by the transportation plan $\gamma$ between $\mu^0$ and $\mu^j$.
    \item $\Gamma^*(\mu^0,\mu^j)$, set of optimal Kantorovich transport plans.
    \item $T$, optimal transport Monge map.
    \item $\mathrm{id}$, identity map $\mathrm{id}(x)=x$.    
    \item $T_t(x)=(1-t)x+tT(x)$ for $T:\Omega\to\Omega$. Viewed as a function of $(t,x)$, it is a flow. 
    \item In section \ref{sec: background}: $\rho\in\mathcal{P}([0,1]\times\Omega)$ curve of measures. At each $0\leq t\leq$, $\rho_t\in\mathcal{P}(\Omega)$. In section \ref{sec: LOPT}: analogous, replacing $\mathcal{P}$ by $\mathcal{M}_+$.
    \item $v:[0,1]\times\Omega\to\mathbb{R}^d$. at each time $v_t:\Omega\to\mathbb{R}^d$ is a vector field. $v_0$, initial velocity.
    \item $\nabla\cdot V$, divergence of the vector field $V$ with respect to the spatial variable $x$. 
    \item $\partial_t\rho+\nabla\cdot \rho v=0$, continuity equation.
    \item $\mathcal{CE}(\mu^0,\mu^j)$, set of solutions of the continuity equation with boundary conditions $\mu^0$ and $\mu^j$.
    \item $\partial_t\rho+\nabla\cdot \rho v=\zeta$, continuity equation with forcing term $\zeta$.
    \item $\mathcal{FCE}(\mu^0,\mu^j)$, set   solutions $(\rho,v,\zeta)$ of the continuity equation with forcing term with boundary conditions $\mu^0$ and $\mu^j$. 
    \item $OT(\mu^0,\mu^j)$, optimal transport minimization problem. See \eqref{eq: OT} for the Kantorovich formulation, and \eqref{eq: OT dynamic} for the dynamic formulation.
    \item $W_2(\mu^0,\mu^j)=\sqrt{OT(\mu^0,\mu^j)}$, $2$-Wasserstein distance.
    \item $\T_{\mu}=L^2(\Omega;\mathbb{R}^d,\mu)$, where $
  \|u\|_{\mu}^2=\int_\Omega \|u(x)\|^2d\mu(x)$  (if $\mu$ is discrete, $\T_{\mu}$ is identified with $\mathbb{R}^{d\times N}$ and $\|u\|_{\mu}^2=\sum_{n=1}^N\|u(n)\|^2p_n$).
    \item $LOT_{\mu^0}(\mu^i,\mu^j)$, see \eqref{eq: LOT continuous} for continuous densities, and \eqref{eq: discrete LOT} for discrete measures. 
    \item $\lambda>0$, penalization in OPT. 
    \item $\mu\leq\nu$ if $\mu(B)\leq \nu(E)$ for all measurable set $E$ and we say that $\mu$ is dominated by $\nu$.
    \item $\Gamma_{\leq}(\mu^0,\mu^j)=\{\gamma\in\mathcal{M}_+(\Omega^2): \,  \pi_{0\#}\gamma\leq\mu^0, \,  \pi_{1\#}\gamma\leq\mu^j\}$, set of partial transport plans.
    \item $\gamma_0:=\pi_{0\#}\gamma$, $\gamma_1:=\pi_{1\#}\gamma$, marginals of $\gamma\in\mathcal{M}_+(\Omega^2)$.
    \item $\nu_0=\mu^0-\gamma_0$ (for the reference $\mu^0$), $\nu^j=\mu^j-\gamma_1$ (for the target $\mu^j$). 
    \item $C(\gamma;\mu^0,\mu^j,\lambda)=\int \|x^0-x^j\|^2 d\gamma(x^0,x^j)  +\lambda(|\mu^0-\gamma_0|+|\mu^j-\gamma_1|) $, partial transport cost given by the partial transportation plan $\gamma$ between $\mu^0$ and $\mu^j$ with penalization $\lambda$.
    \item $\Gamma_{\leq}^*(\mu^0,\mu^j)$, set of optimal partial transport plans.

    \item $OPT(\mu^0,\mu^j)$, partial optimal transport minimization problem between $\mu^0$ and $\mu^j$. See \eqref{eq: OPT} for the static formulation, and \eqref{eq: opt dynamic} for the dynamic formulation.
    
    \item $\|v\|_{\mu,2\lambda}^2:=\int_\Omega \min(\|v\|^2,2\lambda)d\mu$
    
    \item $LOPT_\lambda(\mu^i,\mu^j)$, see \eqref{eq: LOPT general}, and \eqref{eq: LOPT discrete no Cij} and the discussion above.

    \item $\mu^j\mapsto u^j$, LOT embedding (fixing first a reference $\mu^0\in\mathcal{P}(\Omega)$). If $\mu^0$ has continuous density, $u^j:=v_0^j=T^j-\mathrm{id}$, and so it is a map from measures  $\mu^j\in \mathcal{P}(\Omega)$ to vector fields $u^i$ defined in $\Omega$, see \eqref{eq: linear OT embedding continuous}.  For discrete measures ($\mu^0$, $\mu^j$ discrete), LOT embedding is needed first, and in this case, the embedding is a map from discrete probability to $\mathbb{R}^{d\times N_0}$, see \eqref{eq: LOT embedding}.

     \item $\mu$, OT and OPT barycentric projection of $\mu$ with respect to a reference $\mu^0$ (in $\mathcal{P}$ or $\mathcal{M}_+$, resp.), see \eqref{eq: discrete barycenter} and  \eqref{eq: opt barycenter}, respectively. $\hat x_n$ denote the new locations where $\hat \mu$ is concentrated. $p_n^0$ and $\hat p_n$ denote the wights of $\hat \mu$ for OT and OPT, respectively.

    \item $u_t=(1-t)u^i+tu^j$, geodesic in LOT embedding space.

    \item $\hat \rho_t$, LOT geodesic.

    \item LOPT embeding, see \eqref{eq: lopt embedding general}, and \eqref{eq: lopt embedding discrete}.
    
    \item OPT interpolating curve, and LOPT interpolating curve are defined in section \ref{sec: interpolation}.
   
\end{itemize}

\section{Proof of lemma \ref{lem: ot barycentric projection}}

\begin{proof}

We fix $\gamma^*\in \Gamma^*(\mu^0,\mu^j)$\footnote{Although in subsection \ref{subsec: LOT Discrete} we use the notation $\gamma^j\in\Gamma^*(\mu^0,\mu^j)$, here we use the notation $\gamma^*$ to emphasize that we are fixing an \textbf{optimal} plan.}, and then we compute the map $x_n^0\mapsto \hat x_n^j$ according to \eqref{eq: discrete barycenter}. This map induces a transportation plan $\hat{\gamma}^*:=\diag({p}_1^0,\ldots,p_{N_0}^0)\in \mathbb{R}_+^{N_0\times N_0}$. We will prove that $\hat \gamma^*\in\Gamma^*(\mu^0,\hat \mu^j)$. By definition, it is easy to see that its marginals are $\mu^0$ and $\hat\mu^j$. The complicated part is to prove optimality. Let $\hat \gamma\in\Gamma(\mu^0,\hat \mu^j)$ be an arbitrary transportation plan between $\mu^0$ and $\mu^j$ (i.e.
$\gamma 1_{N_0}=\gamma^T 1_{N_0}=p^0$). We need to show  
\begin{align}
    C(\hat\gamma^*;\mu^0,\hat\mu^j)\leq C(\hat\gamma;\mu^0,\hat\mu^j) \label{pf: gamma_hat is optimal}. 
\end{align}
Since $\hat\gamma^*$ is a diagonal matrix with positive diagonal, its inverse matrix is $(\hat\gamma^*)^{-1}=\diag(1/p_1^0,\ldots ,1/p_{N_0}^0)$.
We define 
$$\gamma:=\hat\gamma(\hat{\gamma}^*)^{-1}\gamma^*\in \mathbb{R}_+^{N_0\times N_j}.$$ 
We claim $\gamma\in \Gamma(\mu^0,\mu^j)$. Indeed, 
\begin{align}
\gamma 1_{N_j}&=\hat{\gamma} (\hat{\gamma}^*)^{-1}\gamma^*1_{N_j}=\hat\gamma(\hat{\gamma}^*)^{-1}p^0=\hat\gamma 1_{N_0}=p^0 \label{pf: gamma margin 1}\\ 
\gamma^T 1_{N_0}&=(\gamma^*)^T (\hat{\gamma}^*)^{-1}\hat\gamma^T 1_{N_0}=(\gamma^*)^T (\hat{\gamma}^*)^{-1}p^0=(\gamma^*)^T 1_{N_0}=p^j, \label{pf: gamma margin 2}
\end{align}
where the second equality in \eqref{pf: gamma margin 1} and the third equality in \eqref{pf: gamma margin 2} follow from the fact $\gamma^*\in \Gamma(\mu^0,\mu^1)$, and the fourth equality in \eqref{pf: gamma margin 1} and the second equality in \eqref{pf: gamma margin 2} hold since $\hat\gamma\in \Gamma(\mu^0,\hat\mu^j)$. 

Since $\gamma^*$ is optimal for $OT(\mu,\mu^j)$,we have $C(\gamma^*;\mu^0,\mu^j)\leq C(\gamma;\mu^0,\mu^j)$ to denote the transportation cost for $\gamma^*$ and $\gamma$, respectively. We write 
\begin{align}    C(\gamma^*;\mu^0,\mu^j)&=\sum_{n=1}^{N_0}\sum_{m=1}^{N_j}\|x_n^0-x_m^j\|^2\gamma^*_{n,m}\nonumber \\
&=\sum_{n=1}^{N_0}\|x_n^0\|^2p_n^0+\sum_{m=1}^{N_j}\|x_m^j\|^2p_m^j-2\sum_{n=1}^{N_0}\sum_{m=1}^{N_j}x_n\cdot x_m^j \, \gamma_{n,m}^* \nonumber \\
    &=K_1-2\sum_{n=1}^{N_0}\sum_{m=1}^{N_j}\hat{x}_n\cdot x_m^j \, \gamma_{n,m}^*, \nonumber 
\end{align}
where $K_1$ is a constant (which only depends on $\mu^0,\mu^j$). Similarly,
\begin{align}
    C(\gamma;\mu^0,\mu^j)&=K_1-2\sum_{n=1}^{N_0}\sum_{m=1}^{N_j}x_n^0\cdot x_m^j \, \gamma_{n,m} \nonumber \\
    &=K_1-2\sum_{n=1}^{N_0}\sum_{m=1}^{N_j}\sum_{\ell=1}^{N_0}\frac{\hat\gamma_{n,\ell}\gamma^*_{\ell,m}}{p^0_\ell} \, x_n^0\cdot x_m^j\nonumber 
\end{align}
Analogously, we have 
\begin{align}
    C(\hat\gamma^*;\mu^0,\hat\mu^j)&=\sum_{n=1}^{N_0}\sum_{m=1}^{N_0} \|\hat{x}_n-\hat x_{m}^j\|^2\hat\gamma_{n,m}^* \nonumber \\
&=\sum_{n=1}^{N_0}\|x_n^0\|^2p_n^0+\sum_{m=1}^{N_0}\|\hat{x}_{m}^j\|^2p_{m}^0-2\sum_{n=1}^{N_0}\sum_{m=1}^{N_0}x_n^0\cdot \hat{x}_{m}^j \, \hat\gamma^*_{n,m}\nonumber\\
    &=K_2-2\sum_{n=1}^{N_0}\sum_{m=1}^{N_j}   x^0_n\cdot x_m^j \, \gamma_{n,m}^*\nonumber\\
    &=K_2 -K_1+C(\gamma^*;\mu^0,\mu^j)
    \label{pf: C(gamma*_hat)}
\end{align}
where $K_2$ is constant (which only depends on $\mu^0$ and $\hat\mu^j$) and similarly 
\begin{align}
    C(\hat\gamma;\mu^0,\hat\mu^j)&=K_2-2\sum_{n=1}^{N_0}\sum_{m=1}^{N_j}\sum_{\ell=1}^{N_0}\frac{\hat\gamma_{n,\ell}\gamma^*_{\ell,m}}{p^0_\ell} \, x_n^0\cdot x_m^j\nonumber\\
    &= K_2-K_1 + C(\gamma;\mu^0,\mu^1)
    \label{pf: C(gamma_hat)}
\end{align}
Therefore, by \eqref{pf: C(gamma*_hat)} and \eqref{pf: C(gamma_hat)}, we have that
$C(\gamma^*;\mu^0,\mu^j)\leq C(\gamma;\mu^0,\mu^j)$ if and only if
$C(\hat\gamma^*;\mu^0,\hat\mu^j)\leq C(\hat\gamma;\mu^0,\hat\mu^j)$. So, we conclude the proof by the fact $\gamma^*$ is optimal for $OT(\mu_0,\mu^j)$.
\end{proof}
      
    


\section{Proof of Proposition \ref{thm: lot geodesic}}

\begin{lemma}\label{lem: lot geodesic}
Given discrete measures $\mu^0=\sum_{k=1}^{N_0}p^0_k \delta_{x^0_k},\mu^1=\sum_{k=1}^{N_0}p^0_k \delta_{x^1_k},\mu^2=\sum_{k=1}^{N_0}p^0_k\delta_{x^2_k}$, suppose that the maps $x_k^0\mapsto x_k^1$ and $x^0_k\mapsto x_k^2$ solve $OT(\mu^0,\mu^1)$ and $OT(\mu^0,\mu^2)$, respectively. 
For $t\in[0,1]$, define $\mu_t:=\sum_{k=1}^{N_0}p_k^0 \delta_{(1-t)x_k^1+tx_k^2}$. Then, the mapping $T_t: x_k^0\mapsto (1-t)x_k^1+tx_k^2$ solves the problem $OT(\mu^0,\mu_t)$. 
\end{lemma}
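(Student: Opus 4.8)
The plan is to use the characterization of optimality for the quadratic cost in terms of cyclical monotonicity of the support, and then to observe that the interpolating matching inherits cyclical monotonicity as a convex combination of the two given ones. First I would reduce the cost to a correlation: since $\|x-y\|^2=\|x\|^2-2\,x\cdot y+\|y\|^2$ and every competing plan has the prescribed marginals $\mu^0$ and the relevant target, minimizing $\int\|x-y\|^2\,d\gamma$ is equivalent to maximizing $\int x\cdot y\,d\gamma$. Because $\mu^0,\mu^1,\mu^2$ (and hence $\mu_t$) all carry the same weight vector $p^0=(p_1^0,\dots,p_{N_0}^0)$, each of the three maps in the statement induces the diagonal plan $\diag(p_1^0,\dots,p_{N_0}^0)$, whose support is the matching $\{(x_k^0,x_k^\ell)\}_{k=1}^{N_0}$ for $\ell\in\{1,2\}$, and $\{(x_k^0,T_t(x_k^0))\}_{k=1}^{N_0}$ for $\mu_t$.

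Next I would record what the hypotheses give. By the fundamental theorem of optimal transport (valid here since all measures are finitely supported and the cost is continuous), a plan is optimal if and only if its support is $c$-cyclically monotone; for the quadratic cost and a support indexed by the single source family $\{x_k^0\}$, cyclical monotonicity is equivalent to the permutation inequalities
\begin{equation*}
\sum_{k=1}^{N_0} x_k^0\cdot x_k^\ell \;\ge\; \sum_{k=1}^{N_0} x_k^0\cdot x_{\sigma(k)}^\ell,\qquad \ell\in\{1,2\},\ \text{for every permutation }\sigma,
\end{equation*}
which is just the expanded form of $\sum_k\|x_k^0-x_k^\ell\|^2\le\sum_k\|x_k^0-x_{\sigma(k)}^\ell\|^2$ (one passes between them by summing the cyclic conditions over the cycle decomposition of $\sigma$). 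These are exactly the content of the assumption that $x_k^0\mapsto x_k^1$ and $x_k^0\mapsto x_k^2$ solve the respective OT problems.

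Finally I would combine them. Writing $y_k:=T_t(x_k^0)=(1-t)x_k^1+t\,x_k^2$ and using bilinearity of the inner product, for any permutation $\sigma$,
\begin{equation*}
\sum_{k} x_k^0\cdot y_k-\sum_k x_k^0\cdot y_{\sigma(k)} = (1-t)\Big(\sum_k x_k^0\cdot x_k^1-\sum_k x_k^0\cdot x_{\sigma(k)}^1\Big)+t\Big(\sum_k x_k^0\cdot x_k^2-\sum_k x_k^0\cdot x_{\sigma(k)}^2\Big)\ge 0,
\end{equation*}
since $1-t\ge 0$, $t\ge 0$, and both grouped differences are nonnegative by the previous step. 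Hence the support of the plan induced by $T_t$ is cyclically monotone, so that plan is optimal and $T_t$ solves $OT(\mu^0,\mu_t)$. I expect the main obstacle to be not the computation — which is a one-line convex combination — but the clean justification that optimality of the discrete plans is equivalent to the permutation inequalities above; this is where one must invoke the cyclical-monotonicity characterization of optimal plans and use that the common weight vector $p^0$ forces all three induced plans to be the same diagonal matrix, so the only re-matchings relevant to cyclical monotonicity are permutations of the source atoms. As a sanity check I would remark that this is the discrete shadow of the Brenier/McCann fact that $(1-t)\nabla\phi_1+t\,\nabla\phi_2=\nabla\big((1-t)\phi_1+t\phi_2\big)$ is again the gradient of a convex potential, hence an optimal map.
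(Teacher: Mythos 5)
Your proof is correct, but it takes a different route from the paper's. The paper argues by direct comparison against an arbitrary competitor plan: it observes that, because $\mu^0$, $\mu^1$, $\mu^2$ and $\mu_t$ all share the weight vector $p^0$, the transport polytopes $\Gamma(\mu^0,\mu_t)$, $\Gamma(\mu^0,\mu^1)$ and $\Gamma(\mu^0,\mu^2)$ coincide as sets of matrices; hence any feasible $\gamma$ for $OT(\mu^0,\mu_t)$ is also feasible for the two given problems, and after expanding the quadratic cost into a constant minus a correlation term, optimality of the diagonal plan for $\ell=1,2$ gives the two correlation inequalities whose convex combination yields $C(\gamma^*;\mu^0,\mu_t)\le C(\gamma;\mu^0,\mu_t)$. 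You instead test only against permutations of the matching and then upgrade to optimality among all plans by invoking the cyclical-monotonicity characterization. The computational core — bilinearity of $x\cdot y$ plus a convex combination with coefficients $1-t$ and $t$ — is identical, but the surrounding logic differs: the paper's argument is entirely self-contained (it never needs the nontrivial implication ``cyclically monotone support $\Rightarrow$ optimal''), while yours outsources that implication to the fundamental theorem of optimal transport and in exchange avoids having to notice the coincidence of the feasible sets. One small point worth making explicit in your write-up: the permutation inequalities you extract from the hypotheses must come from cyclical monotonicity of the supports (the easy direction of the characterization), not from feasibility of the permuted plans — for non-uniform weights $p^0$ the plan induced by $x_k^0\mapsto x_{\sigma(k)}^\ell$ generally has the wrong second marginal and is not a legitimate competitor. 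As stated, your argument does go through c-cyclical monotonicity of the support set, so this is a matter of emphasis rather than a gap.
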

\begin{proof}
Let $\gamma^*=\diag(p_1^0,\ldots, p_{N_0}^0)$ be the corresponding transportation plan induced by $T_t$. Consider an arbitrary $\gamma\in \mathbb{R}_+^{N_0\times N_0}$ such that $\gamma\in\Gamma(\mu^0,\mu_t)$. 
We need to show
\begin{equation*}
  C(\gamma^*;\mu^0,\mu_t)\leq C(\gamma;\mu^0,\mu_t). \label{pf: gamma* }
\end{equation*}
We have 
\begin{align}
C(\gamma^*;\mu^0,\mu_t)&=\sum_{k,k'=1}^{N_0} \|x^0_k-(1-t)x^1_{k'}-tx_{k'}^2\|^2 \, \gamma^*_{k,k'} \nonumber\\ 
&=\sum_{k=1}^{N_0}\|x_k^0\|^2p_k^0+\sum_{k=1}^{N_0}\|(1-t)x_{k'}^1+tx_{k'}^2\|^2p_k^0
-2\sum_{k,k'=1}^{N_0}x_k^0\cdot ((1-t)x_{k'}^1+tx_{k'}^2) \nonumber\\ 
&=K-2\left[(1-t)\sum_{k,k'=1}^{N_0}\hat{x}_k\cdot x_{k'}^1 \, \gamma_{k,k'}^*+t \sum_{k,k'=1}^{N_0}\hat{x}_k\cdot x_{k'}^2 \, \gamma_{k,k'}^*\right], \label{pf: C(gamma*; mu0, mu1)}
\end{align}
where in third equation, $K$ is a constant which only depends on the marginals $\mu^0,\mu_t$. 
Similarly, 
\begin{align}
    C(\gamma;\mu^0,\mu_t)&=K-2\left[(1-t)\sum_{k,k'=1}^{N_0} \, \hat{x}_k\cdot x_{k'}^1\gamma_{k,k'}+t \sum_{k,k'=1}^{N_0}\hat{x}_k\cdot x_{k'}^2 \, \gamma_{k,k'}\right]. \label{pf: C(gamma; mu0, mu1)} 
\end{align}
By the fact that $\gamma,\gamma^* \in \Gamma(\hat\mu,\mu_t)=\Gamma(\mu^0,\mu^1)=\Gamma(\mu^0,\mu^2)$, and that $\gamma^*$ is optimal for $OT(\hat\mu,\mu^1),OT(\hat\mu,\mu^2)$, we have
\begin{align}
\sum_{k,k'=1}^{N_0}\hat{x}_k\cdot x_{k'}^1 \, \gamma_{k,k'}^*\ge
\sum_{k,k'=1}^{N_0}\hat{x}_k\cdot x_{k'}^1 \, \gamma_{k,k'} 
\qquad \text{ and } \qquad 
\sum_{k,k'=1}^{N_0}\hat{x}_k\cdot x_{k'}^2 \, \gamma_{k,k'}^*\ge
\sum_{k,k'=1}^{N_0}\hat{x}_k\cdot x_{k'}^2 \, \gamma_{k,k'}. \nonumber 
\end{align}

Thus, by \eqref{pf: C(gamma*; mu0, mu1)} and \eqref{pf: C(gamma; mu0, mu1)}, we have $C(\gamma^*;\hat\mu,\mu_t)\leq C(\gamma;\hat\mu,\mu_t)$, and this completes the proof. 
\end{proof}

\begin{proof}[Proof of Proposition \ref{thm: lot geodesic}]

Consider $0\leq s\leq t\leq 1$.  By Lemma \ref{lem: ot barycentric projection}, the maps  
$T^i: x^0_k\mapsto \hat{x}^i_k$, $T^j: x^0_k\mapsto \hat{x}^j_k$ solve $OT(\mu^0,\hat\mu^i),OT(\mu^0,\hat\mu^j)$, respectively. Moreover,  by Lemma \ref{lem: lot geodesic} (under the appropriate renaming), we have the mapping  
$$x^0_k\mapsto (1-s)\hat{x}^{i}_k+s\hat x_k^j=x_k^0+(1-s)u^i_k+su^j_k, \qquad 1\leq k\leq N_0$$
solves $OT(\mu^0,\hat\rho_s)$, and  similarly 
$$x^0_k\mapsto (1-t)\hat{x}^{i}_k+t\hat x_k^j=x_k^0+(1-t)u^i_k+tu^j_k, \qquad 1\leq k\leq N_0$$
solves $OT(\mu^0,\hat\rho_t)$.

Thus, 
\begin{align}
    LOT_{\mu^0}(\hat\rho_s,\hat\rho_t)&=\sum_{k=1}^{N_0}\|((1-s)\hat{x}^i_k+s\hat x_k^j)-((1-t)\hat{x}^i_k+t\hat x_k^j)\|^2 \, p_k^0  \nonumber \\ 
    &= \sum_{k=1}^{N_0}(t-s)^2\|\hat{x}_k^i-\hat{x}_k^j\|^2 \, p_k^0\nonumber\\ 
    &=(t-s)^2LOT_{\mu^0}(\mu^i,\mu^j), \nonumber 
\end{align}
and this finishes the proof. 
\end{proof}

\section{Proof of proposition \ref{pro: opt dynamic}}

This result relies on the definition \eqref{eq: opt rho} of the curve $\rho_t$, which is inspired by the fact that solutions of non-homogeneous equations are given by solutions of the associated homogeneous equation plus a particular solution of the non-homogeneous. We choose the first term of $\rho_t$ as a solution of a (homogeneous) continuity equation ($\gamma_t$ defined in \eqref{eq: gamma_t}), and the second term ($\overline{\gamma}_t:=(1-t)\nu_0 + t\nu^j$) as an appropriate particular solution of the equation  \eqref{eq: continuity equation opt} with forcing term $\zeta$ defined in \eqref{eq: opt zeta}. Moreover, the curve $\rho_t$ will become optimal for \eqref{eq: opt dynamic} since both $\gamma_t$ and $\overline{\gamma}_t$ are `optimal' in different senses. On the one hand, $\gamma_t$ is optimal for a classical optimal transport problem\footnote{Here we will strongly use that the fixed partial optimal transport plan $\gamma^*\in\Gamma_{\leq}^*(\mu^0,\mu^j)$ in the hypothesis of the proposition has the form $\gamma^*=(I\times T)_\#\gamma_0^*$, for a map $T:\Omega\to \Omega$}. On the other hand, $\overline{\gamma}_t$ is defined as a \textit{line} interpolating the new part introduced by the framework of partial transportation, `destruction and creation of mass'.

Although this is the core idea behind the proposition, to prove it we need several lemmas to deal with the details and subtleties.

First, we mention that, the  push-forward measure $F_\#\mu$ can be defined satisfying the formula of change of variables
\begin{equation*}
    \int_A g(x)\, dF_\#\mu (x)= \int_{F^{-1}(A)} g(F(x)) \, d\mu(x)
\end{equation*}
for all measurable set $A$, and all measurable function $g$, where $F^{-1}(A)=\{x: \, F(x)\in A\}$. This is a general fact, and as an immediate consequence, we have conservation of mass $$|F_\#\mu|=|\mu|.$$ 
Then, in our case, the second term in the cost function \eqref{eq: OPT cost} we can be written as $$\lambda(|\mu^0-\gamma_0|+|\mu^j-\gamma_1|=\lambda(|\mu^0|+|\mu^1|-2|\gamma|)$$
since $\gamma_0$ and $\gamma_1$ are \textit{dominated by} $\mu^0$ and $\mu^j$, respectively, in the sense that $\gamma_0\leq\mu^0$ and $\gamma_1\leq \mu^j$.

Finally, we would like to point out that when we say that  $(\rho,v,\zeta)$ is a solution for equation \eqref{eq: continuity equation opt}, we mean that it is a \textit{weak solution} or, equivalently, it is a \textit{solution in the distributional sense} \cite[Section 4 4]{sant}. That is,
for any test function $\psi:\Omega\to\mathbb{R}$ continuous differentiable with compact support, $(\rho,v,\zeta)$ satisfies
    \begin{align*}
        \partial_t\left(\int_{\Omega}  \psi \, d\rho_t\right) -\int_{\Omega}\nabla\psi\cdot v_t \, d\rho_t= \int_{\Omega} \psi\,  d\zeta_t, 
    \end{align*}
or,  equivalently, for any test function $\phi:[0,1]\times\Omega\to\mathbb{R}$
continuous differentiable with compact support, $(\rho,v,\zeta)$ satisfies 
    \begin{align*}
        \int_{[0,1]\times\Omega}\partial_t  \phi \, d\rho +\int_{[0,1]\times\Omega}\nabla\phi\cdot v \, d\rho +\int_{[0,1]\times\Omega} \,  d\zeta =\int_\Omega \phi(1,\cdot) \, d\mu^j-\int_\Omega \phi(0,\cdot) \, d\mu^0. 
    \end{align*}

\begin{lemma}\label{lem: opt solve solves ot}
If $\gamma^*$ is optimal for $OPT_{\lambda}(\mu^0,\mu^1)$, and  has marginals $\gamma_0^*$ and $\gamma_1^*$, then 
$\gamma^*$ is optimal for $OPT_\lambda(\mu^0,\gamma_1^*)$, $OPT_\lambda(\gamma_0^*,\mu^1)$ and $OT(\gamma_0^*,\gamma_1^*)$. 
\end{lemma}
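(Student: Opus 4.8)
The plan is to prove all three optimality claims by the same \emph{restriction} argument: in each case the relevant feasible set is a subset of $\Gamma_\leq(\mu^0,\mu^1)$ that still contains $\gamma^*$, and on that subset the objective of the new problem differs from the OPT objective $C(\cdot;\mu^0,\mu^1,\lambda)$ only by an additive constant. Since $\gamma^*$ minimizes $C(\cdot;\mu^0,\mu^1,\lambda)$ over the larger set $\Gamma_\leq(\mu^0,\mu^1)$, it a fortiori minimizes it over any subset containing it, and the additive-constant relation then transfers this minimality to the new objective.

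First I would treat $OPT_\lambda(\mu^0,\gamma_1^*)$. I would observe the feasible-set inclusion $\Gamma_\leq(\mu^0,\gamma_1^*)\subseteq\Gamma_\leq(\mu^0,\mu^1)$, which holds because any second marginal $\gamma_1\leq\gamma_1^*\leq\mu^1$ automatically satisfies the looser constraint, and note that $\gamma^*$ itself lies in the smaller set (its marginals are $\gamma_0^*\leq\mu^0$ and $\gamma_1^*\leq\gamma_1^*$). For any $\gamma\in\Gamma_\leq(\mu^0,\gamma_1^*)$, the key bookkeeping is that $\gamma_1\leq\gamma_1^*\leq\mu^1$ lets one split $|\mu^1-\gamma_1|=(|\mu^1|-|\gamma_1^*|)+|\gamma_1^*-\gamma_1|$, whence
\[ C(\gamma;\mu^0,\mu^1,\lambda)=C(\gamma;\mu^0,\gamma_1^*,\lambda)+\lambda(|\mu^1|-|\gamma_1^*|). \]
Because the correction $\lambda(|\mu^1|-|\gamma_1^*|)$ does not depend on $\gamma$, minimizing $C(\cdot;\mu^0,\mu^1,\lambda)$ and $C(\cdot;\mu^0,\gamma_1^*,\lambda)$ over $\Gamma_\leq(\mu^0,\gamma_1^*)$ are equivalent; since $\gamma^*$ minimizes the former even over the larger set, it minimizes $C(\cdot;\mu^0,\gamma_1^*,\lambda)$. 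The claim for $OPT_\lambda(\gamma_0^*,\mu^1)$ then follows verbatim by exchanging the roles of the two marginals.

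For $OT(\gamma_0^*,\gamma_1^*)$ I would use that every $\gamma\in\Gamma(\gamma_0^*,\gamma_1^*)$ has marginals exactly $\gamma_0^*\leq\mu^0$ and $\gamma_1^*\leq\mu^1$, so $\Gamma(\gamma_0^*,\gamma_1^*)\subseteq\Gamma_\leq(\mu^0,\mu^1)$ and this set contains $\gamma^*$. On it the entire penalty is frozen,
\[ C(\gamma;\mu^0,\mu^1,\lambda)=\int_{\Omega^2}\|x^0-x^1\|^2\,d\gamma+\lambda\bigl(|\mu^0|-|\gamma_0^*|+|\mu^1|-|\gamma_1^*|\bigr), \]
so the OPT objective and the plain transport cost differ by the constant second summand. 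The same minimality-transfer argument then shows $\gamma^*$ solves $OT(\gamma_0^*,\gamma_1^*)$.

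I expect the only delicate point to be the mass bookkeeping ensuring each penalty reduces to an additive constant — in particular checking that the set inclusions run in the correct direction and invoking the domination relations $\gamma_1\leq\gamma_1^*\leq\mu^1$ (and the symmetric $\gamma_0\leq\gamma_0^*\leq\mu^0$) to justify the split $|\mu^1-\gamma_1|=(|\mu^1|-|\gamma_1^*|)+|\gamma_1^*-\gamma_1|$. Everything else is an immediate consequence of the elementary fact that a minimizer over a set remains a minimizer over any subset that still contains it.
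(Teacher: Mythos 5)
Your proposal is correct and follows essentially the same route as the paper's proof: in each case you restrict to the smaller feasible set containing $\gamma^*$, show the two objectives differ there by an additive constant (the frozen part of the penalty), and transfer minimality. Your version is in fact slightly more explicit about the mass bookkeeping, e.g.\ the split $|\mu^1-\gamma_1|=(|\mu^1|-|\gamma_1^*|)+|\gamma_1^*-\gamma_1|$ justified by $\gamma_1\leq\gamma_1^*\leq\mu^1$, which the paper leaves implicit.
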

\begin{proof}
Let $C(\gamma;\mu^0,\mu^1,\lambda)$ denote the objective function of the minimization problem ${OPT}_{\lambda}(\mu^0,\mu^1)$, and similarly consider $C(\gamma;\gamma_0^*,\mu^1,\lambda),C(\gamma;\mu^0,\gamma_1^*,\lambda)$. Also, let $C(\gamma,\gamma_0^*,\gamma_1^*)$ be the objective function of ${OT}(\gamma_0^*,\gamma_1^*)$. 

First, given an arbitrary plan $\gamma\in \Gamma(\gamma_0^*,\gamma_1^*)\subset \Gamma_{\leq}(\mu^0,\mu^1)$, we have 
\begin{align}
C(\gamma;\mu^0,\mu^1,\lambda)&=\int_{\Omega^2}\|x^0-x^1\|^2d\gamma(x^0,x^1)+\lambda(|\mu^0-\gamma_0|+|\mu^1-\gamma_1|)\nonumber \\
&=C(\gamma;\gamma_0^*,\gamma_1^*)+\lambda(|\mu^0-\gamma_0|+|\mu^1-\gamma_1|).\nonumber
\end{align}
Since $C(\gamma^*;\mu^0,\mu^1,\lambda)\leq C(\gamma;\mu^0,\mu^1,\lambda)$, 
we have $C(\gamma^*;\gamma_0^*,\gamma_1^*)\leq C(\gamma;\gamma_0^*,\gamma_1^*)$. Thus,  $\gamma^*$ is optimal for ${OT}(\gamma_0^*,\gamma_1^*)$.

Similarly, for every $\gamma\in\Gamma_\leq(\gamma_0^*,\mu^1)$, we have 
$C(\gamma;\mu^0,\mu^1,\lambda)=C(\gamma;\gamma_0^*,\mu^1)+\lambda|\mu^0-\gamma^*_0|$ and thus 
$\gamma^*$ is optimal for ${OPT}_\lambda(\gamma_0^*,\mu^1)$. Analogously, 
$\gamma^*$ is optimal for ${OPT}_\lambda(\mu^0,\gamma_1^*)$. 
\end{proof}

\begin{lemma}\label{lem: disjoint nu_0,nu_1} 
    If $\gamma^*\in\Gamma_{\leq}^*(\mu^0,\mu^1)$, then $d\left({\supp}(\nu^0),{\supp}(\nu^1)\right)\geq\sqrt{2\lambda}$, where $\nu^0=\mu^0-\gamma_0$, $\nu^1=\mu^1-\gamma_1$, and $d$ is the Euclidean distance in $\mathbb{R}^d$.
\end{lemma}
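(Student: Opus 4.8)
The plan is to argue by contradiction, exploiting the simple economics of partial transport: destroying a unit of mass at the source and creating it at the target costs $\lambda+\lambda=2\lambda$, whereas transporting it directly costs $\|x^0-x^1\|^2$. So if the leftover measures $\nu^0=\mu^0-\gamma_0$ and $\nu^1=\mu^1-\gamma_1$ had mass at points closer than $\sqrt{2\lambda}$, one could strictly improve $\gamma^*$ by transporting a little mass between those points instead of destroying/creating it. Concretely, I would suppose for contradiction that there exist $x^0\in\supp(\nu^0)$ and $x^1\in\supp(\nu^1)$ with $\|x^0-x^1\|^2<2\lambda$, and construct an explicit cheaper competitor plan.

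First I would localize and build the competitor. By continuity of $(y^0,y^1)\mapsto\|y^0-y^1\|^2$ together with compactness, I can choose a radius $r>0$ so that the closed balls $\bar B^0:=\bar B(x^0,r)$ and $\bar B^1:=\bar B(x^1,r)$ satisfy $c:=\sup_{(y^0,y^1)\in \bar B^0\times \bar B^1}\|y^0-y^1\|^2<2\lambda$. Since $x^0,x^1$ lie in the respective supports, $\nu^0(B^0)>0$ and $\nu^1(B^1)>0$, so $m:=\min(\nu^0(B^0),\nu^1(B^1))>0$. I then set $\sigma_0:=\tfrac{m}{\nu^0(B^0)}\,\nu^0|_{B^0}$ and $\sigma_1:=\tfrac{m}{\nu^1(B^1)}\,\nu^1|_{B^1}$, two measures of total mass $m$ dominated by $\nu^0$ and $\nu^1$ respectively, and define the coupling $\gamma':=\tfrac1m\,\sigma_0\otimes\sigma_1$, which is supported on $B^0\times B^1$, has marginals $\sigma_0,\sigma_1$, and total mass $m$. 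The candidate is $\tilde\gamma:=\gamma^*+\gamma'$: its first marginal is $\gamma_0^*+\sigma_0\le\gamma_0^*+\nu^0=\mu^0$ and, symmetrically, its second marginal is $\le\mu^1$, so $\tilde\gamma\in\Gamma_{\leq}(\mu^0,\mu^1)$.

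Finally I would compare costs using the form $C(\gamma;\mu^0,\mu^1,\lambda)=\int_{\Omega^2}\|x^0-x^1\|^2\,d\gamma+\lambda(|\mu^0|+|\mu^1|-2|\gamma|)$ established in the proof of Proposition~\ref{pro: opt dynamic}. Passing from $\gamma^*$ to $\tilde\gamma$ raises the transport term by $\int_{B^0\times B^1}\|y^0-y^1\|^2\,d\gamma'\le c\,m$, while it lowers the penalty term by exactly $2\lambda m$, since $|\tilde\gamma|=|\gamma^*|+m$. Hence $C(\tilde\gamma;\mu^0,\mu^1,\lambda)-C(\gamma^*;\mu^0,\mu^1,\lambda)\le (c-2\lambda)\,m<0$, contradicting the optimality of $\gamma^*$. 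Therefore no such pair $x^0,x^1$ exists, which is precisely the assertion $d(\supp(\nu^0),\supp(\nu^1))\ge\sqrt{2\lambda}$.

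I expect the only delicate part to be the measure-theoretic bookkeeping in the construction step: restricting $\nu^0,\nu^1$ to the balls, forming $\gamma'$ with the prescribed marginals, and verifying that $\tilde\gamma$ remains in $\Gamma_{\leq}(\mu^0,\mu^1)$ through the domination $\sigma_i\le\nu^i$. The quantitative core—that transporting beats destroy-plus-create exactly when the squared distance is below $2\lambda$—is immediate once a valid competitor is in hand.
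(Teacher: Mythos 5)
Your proposal is correct and follows essentially the same route as the paper: argue by contradiction, add to $\gamma^*$ a small product coupling of the restrictions of $\nu^0$ and $\nu^1$ to small balls around nearby support points, and observe that the added transport cost is strictly less than the $2\lambda$ saved in creation/destruction penalties. If anything, your version is slightly more careful than the paper's in explicitly choosing the radius so that $\sup\|y^0-y^1\|^2<2\lambda$ over the product of balls, a point the paper's proof uses implicitly.
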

\begin{proof}
    We will proceed by contradiction. Assume that $d\left(\supp(\nu^0),{\supp}(\nu^1)\right)<\sqrt{2\lambda}$. Then, there exist $\widetilde{x}^0\in {\supp}(\nu^0)$ and $\widetilde{x}^1\in {\supp}(\nu^1)$ such that $\|\widetilde{x}^0-\widetilde{x}^1\|<\sqrt{2\lambda}$. Moreover, we can choose $\varepsilon>0$ such  $\nu^0(B(\widetilde{x}^0,\varepsilon))>0$ and $\nu^1(B(\widetilde{x}^1,\varepsilon))>0$, where  $B(\widetilde{x}^i,\varepsilon)$ denotes the ball in $\mathbb{R}^d$ of radius $\varepsilon$ centered at $\widetilde{x}^i$. 
    
    We will construct  $\widetilde{\gamma}\in \Gamma_{\leq}(\mu^0,\mu^1)$ with transportation cost strictly less than $OPT_\lambda(\mu^0,\mu^1)$, leading to a contradiction.
    For $i=0,1$ we denote by $\widetilde{\nu}^i$ the probability measure given by $\frac{1}{\nu^i(B(\widetilde{x}^i,\varepsilon))}\nu^i$ restricted to $B(\widetilde{x}^i,\varepsilon)$. Let $\delta=\min\{\nu^0(B(\widetilde{x}^0,\varepsilon)),\nu^1(B(\widetilde{x}^1,\varepsilon))\}$. Now, we consider    
    \begin{equation*}
        \widetilde{\gamma}:=\gamma^*+\delta (\, \widetilde{\nu}^0\times \widetilde{\nu}^1).
    \end{equation*}
    Then, $\widetilde{\gamma}\in\Gamma_\leq(\mu^0,\mu^1)$ because
    \begin{equation*}
            \pi_{i\#} \widetilde{\gamma} = \gamma_i + \delta \widetilde{\nu}^i\leq  \gamma_i + \nu^i=\mu^i \qquad \text{ for } i=0,1. 
    \end{equation*}
    The partial transport cost of $\widetilde{\gamma}$ is
    \begin{align*}                     &C(\widetilde{\gamma},\mu^0,\mu^1,\lambda)\nonumber\\
    &=
        \int_{\Omega^2}\|x^0-x^1\|^2\, d\widetilde{\gamma} + \lambda(|\mu^0| + |\mu^1| -2|\widetilde{\gamma}|)\\
        &= \int_{\Omega^2}\|x^0-x^1\|^2\, d\gamma + \delta\int_{\Omega^2}\|x^0-x^1\|^2\, d(\widetilde{\nu}^ 0\times\widetilde{\nu}^1) + \lambda(|\mu^0| + |\mu^1| -2|\gamma|-2\delta|\widetilde{\nu}^0\times\widetilde{\nu}^1|)\\
        &=OPT_\lambda(\mu^0,\mu^1)+ \delta \int_{B(\widetilde{x}^0,\varepsilon)\times B(\widetilde{x}^1,\varepsilon)}\|x^0-x^1\|^2\, d(\widetilde{\nu}^ 0\times\widetilde{\nu}^1)-2\lambda\delta\\
        &<OPT_\lambda(\mu^0,\mu^1)+ 2\lambda\delta-2\lambda\delta= OPT_\lambda(\mu^0,\mu^1),
    \end{align*}
which is a contradiction.

\end{proof}

\begin{lemma}\label{lem: gamma_t wedge nu}
    Using the notation and hypothesis of Proposition \ref{pro: opt dynamic},
    for $t\in(0,1)$ let $D_t := \{x: v_t(x) \neq 0\}$. Then, $\gamma_t \wedge \nu_0 \equiv \gamma_t \wedge \nu^j \equiv 0$ over $D_t $. 
\end{lemma}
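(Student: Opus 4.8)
The plan is to argue by contradiction against the optimality of $\gamma^*$, using a local re-routing (surgery) of mass. First I would record the structure of the objects on $D_t$: since $\gamma^*=(\mathrm{id}\times T)_\#\gamma_0^*$ and $\gamma_t=(T_t)_\#\gamma_0^*$, a point $x\in D_t$ must be of the form $x=T_t(x_0)=(1-t)x_0+tT(x_0)$ for some $x_0\in\supp(\gamma_0^*)$ with $T(x_0)\neq x_0$ (otherwise $v_t(x)=T(x_0)-x_0=0$, so $x\notin D_t$). The two elementary identities
\begin{align*}
\|x-T(x_0)\|=(1-t)\|x_0-T(x_0)\|, \qquad \|x-x_0\|=t\|x_0-T(x_0)\|
\end{align*}
then say that such an intermediate point is \emph{strictly} closer to each endpoint than the endpoints are to one another, for every $t\in(0,1)$. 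This is the only geometric input I will need; in particular, I will not have to invoke Lemma~\ref{lem: disjoint nu_0,nu_1} or the $2\lambda$ support bound.

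For the claim $\gamma_t\wedge\nu_0\equiv 0$ over $D_t$, suppose instead that there is a Borel set $E\subseteq D_t$ with $(\gamma_t\wedge\nu_0)(E)>0$. Then on $E$ a positive amount of mass $x=T_t(x_0)$ carried by the transport interpolant $\gamma_t$ (so $T(x_0)\neq x_0$) simultaneously sits in the destroyed source mass $\nu_0$. I would then build a competitor $\widetilde\gamma\in\Gamma_\leq(\mu^0,\mu^j)$ as follows: reduce the transport emanating from $x_0$ by a small amount, destroy that amount at $x_0$ instead, and compensate by transporting the same amount from $x$ (taken from the destroyed mass sitting there) to $T(x_0)$. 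The first marginal stays dominated by $\mu^0$ and the second marginal is left unchanged, and since exactly the same total mass is transported, the penalty term $\lambda(|\mu^0|+|\mu^j|-2|\gamma|)$ is unaffected. The transport cost, however, changes by $\big[(1-t)^2-1\big]\|x_0-T(x_0)\|^2<0$ per unit of re-routed mass, so $C(\widetilde\gamma;\mu^0,\mu^j,\lambda)<OPT_\lambda(\mu^0,\mu^j)$, contradicting optimality. Hence no such overlap exists and $\gamma_t\wedge\nu_0\equiv0$ on $D_t$.

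The statement $\gamma_t\wedge\nu^j\equiv 0$ over $D_t$ is entirely symmetric: if a point $x=T_t(x_0)\in D_t$ carried both interpolant mass and created target mass $\nu^j$, I would instead redirect a small amount of the transport out of $x_0$ so that it \emph{ends} at $x$ rather than at $T(x_0)$, and declare the now-uncovered target mass at $T(x_0)$ to be created. This keeps the first marginal unchanged, keeps the second marginal dominated by $\mu^j$, preserves the transported mass (hence the penalty), and lowers the transport cost by $\big[t^2-1\big]\|x_0-T(x_0)\|^2<0$ per unit of mass, again contradicting the optimality of $\gamma^*$.

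The routine part is the cost bookkeeping above; the genuine obstacle is making the surgery rigorous beyond the discrete case, where ``a small amount of mass at $x_0$ or $x$'' is literally a point mass that can be moved by hand. For general $\gamma_0^*,\nu_0,\nu^j$ I would localize exactly as in the proof of Lemma~\ref{lem: disjoint nu_0,nu_1}: pass to small balls around the overlapping locations, use normalized restrictions and product measures to define the re-routed plan $\widetilde\gamma$, and integrate the strictly negative per-unit cost change over a set of positive measure to obtain a strict decrease. This step also requires knowing that $T_t$ is injective on $\supp(\gamma_0^*)$ for $t\in(0,1)$, so that $v_t$ (and hence $D_t$) is well defined and the identification of $\gamma_t|_{D_t}$ with the pushforward of $\gamma_0^*$ restricted to $\{x_0:T(x_0)\neq x_0\}$ is justified.
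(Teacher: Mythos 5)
Your argument is correct and is essentially the paper's proof: the same contradiction against the optimality of $\gamma^*$ obtained by rerouting the overlap mass $\gamma_t\wedge\nu_0$ (resp.\ $\gamma_t\wedge\nu^j$) so that it is transported \emph{from} (resp.\ \emph{to}) the intermediate point $x=T_t(x_0)$, with the identical per-unit cost saving $\bigl((1-t)^2-1\bigr)\|x_0-T(x_0)\|^2<0$ (resp.\ $(t^2-1)\|x_0-T(x_0)\|^2<0$), unchanged transported mass, and hence unchanged penalty. The only point where you diverge is the formalization you defer: the paper does not localize with balls and product measures as in Lemma~\ref{lem: disjoint nu_0,nu_1} (a product construction would not respect the fiber structure $T_t(x_0)\mapsto T(x_0)$), but instead writes the competitor globally as $\widetilde\gamma=\gamma^*+(\mathrm{id}, T\circ T_t^{-1})_\#\widetilde\nu^0-(\mathrm{id},T)_\#(T_t^{-1})_\#\widetilde\nu^0$ with $\widetilde\nu^0=\gamma_t\wedge\nu_0$ restricted to $D_t$, relying precisely on the invertibility of $T_t$ that you correctly identify as the needed ingredient.
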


\begin{proof}
    We will prove this for $\gamma_t\wedge \nu_0$. The other case is analogous. Suppose by contradiction that $\gamma_t\wedge \nu_0$ is not the null measure. By Lemma \ref{lem: opt solve solves ot}, we know that $\gamma^*$ is optimal for $OT(\gamma_0^*,\gamma_1^*)$.
    Since we are also assuming that $\gamma$ is induced by a map $T$, i.e. $\gamma^*=(I\times T)_\#\gamma_0^*$, then $T$ is an optimal Monge map between $\gamma_0^*$ and $\gamma_1^*$.
    Therefore, the map $T_t:\Omega\to\Omega$ is invertible (see for example \cite[Lemma 5.29]{sant}). For ease of notation we will denote $\widetilde{\nu}^0 :=\gamma_t\wedge \nu_0$ as a measure in $D_t$.  
    
    The idea of the proof is to exhibit a plan $\widetilde{\gamma}$ with smaller $OPT_\lambda$ cost than $\gamma^*$, which will be a contradiction since $\gamma^*\in\Gamma_{\leq}^*(\mu^0\mu^j)$. Let us define 
    \begin{equation*}
        \widetilde{\gamma} := \gamma^* + (I,T\circ T_t^{-1})_\# \widetilde{\nu}^0 - (I,T)_\# (T_t^{-1})_\# \widetilde{\nu}^0. 
    \end{equation*}
    Then, $\widetilde{\gamma}\in\Gamma_{\leq}(\mu^0,\mu^j)$ since
    \begin{align*}
        &\pi_{0\#} \widetilde{\gamma} =
        \pi_{0\#} \gamma^* + \widetilde{\nu}^0 -  (T_t^{-1})_\#\widetilde{\nu}^0 \leq
        \gamma_0^* + \widetilde{\nu}^0\leq
        \mu^0,\\
        &\pi_{1\#} \widetilde{\gamma} = 
        \pi_{1\#} \gamma^* + (T\circ T_t^{-1})_\#\widetilde{\nu}^0 -  (T\circ T_t^{-1})_\#\widetilde{\nu}^0 = 
        \gamma_1^*. \qquad (\text{and we know } \gamma_1^*\leq \mu^j).
    \end{align*}
    From the last equation we can also conclude that $|\widetilde{\gamma}|=|\gamma_1^*|=|\gamma^*|$. Therefore, the partial transportation cost for $\widetilde{\gamma}$ is 
    \begin{align*}
        &\int_{\Omega^2} \|x-y\|^2d\widetilde{\gamma}(x,y)+\lambda(|\mu^0|+|\mu^j|-2|\widetilde{\gamma}|)\\
        &\qquad =\underbrace{\int_{\Omega^2} \|x-y\|d\gamma^*(x,y)+\lambda(|\mu^0|+|\mu^j|-2|\gamma|)}_{OPT_\lambda(\mu^0,\mu^j)}\nonumber\\
        &\qquad\qquad+\int_{D_t} \left(\|x-T(T_t^{-1}(x))\|^2- \|T_t^{-1}(x)-T(T_t^{-1}(x))\|^2 \right)d\widetilde{\nu}^0(x)\\
        &\qquad <OPT_\lambda(\mu^0,\mu^1)       \end{align*}
    where in the last inequality we used that if $y=T_t^{-1}(x)$ we have
    $$\|T_t(y)-T(y)\|=\|(1-t)y+tT(y)-T(y)\|=(1-t)\|y-T(y)\|<\|y-T(y)\| \qquad \text{ for } t\in(0,1).$$
\end{proof}

\begin{lemma} \label{lem: null vector field}
    Using the notation and hypothesis of Proposition \ref{pro: opt dynamic}, for $t\in(0,1)$ the vector-valued measures $v_t\cdot \nu_0$ and $v_t\cdot\nu^j$  are exactly zero.
\end{lemma}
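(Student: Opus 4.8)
\noindent The plan is to convert the assertion about vector-valued measures into the set-theoretic claim that $\nu_0$ and $\nu^j$ place no mass on $D_t$, and then to upgrade the mutual-singularity statement of Lemma~\ref{lem: gamma_t wedge nu} to this stronger conclusion. By construction $v_t$ vanishes off $D_t=\{x:v_t(x)\neq 0\}$, so for every Borel set $A$ one has $(v_t\cdot\nu_0)(A)=\int_{A\cap D_t}v_t\,d\nu_0$, and similarly for $\nu^j$. Hence it suffices to show $\nu_0(D_t)=0$ and $\nu^j(D_t)=0$, equivalently that $v_t=0$ holds $\nu_0$-a.e.\ and $\nu^j$-a.e., which forces both integrals to vanish.

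\noindent First I would record the absolute-continuity structure on $D_t$. Since the reference (and target) are absolutely continuous, so are $\gamma_0^*\le\mu^0$, $\nu_0=\mu^0-\gamma_0^*$, and $\nu^j$. By Lemma~\ref{lem: opt solve solves ot}, $\gamma^*=(\mathrm{id}\times T)_\#\gamma_0^*$ is optimal for $OT(\gamma_0^*,\gamma_1^*)$, so $T$ is an optimal transport map and, as already used in the proof of Lemma~\ref{lem: gamma_t wedge nu}, $T_t=(1-t)\mathrm{id}+tT$ is invertible for $t\in(0,1)$. A Jacobian/change-of-variables computation then shows $\gamma_t=(T_t)_\#\gamma_0^*$ is absolutely continuous with density strictly positive at every image $T_t(x)$ of a density point $x$ of $\gamma_0^*$; in particular $\gamma_t$ has positive density Lebesgue-a.e.\ on $D_t\subseteq T_t(\supp(\gamma_0^*))$. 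This yields the domination $\nu_0|_{D_t}\ll\gamma_t|_{D_t}$: if $E\subseteq D_t$ satisfies $\gamma_t(E)=0$ then $E$ is Lebesgue-null, whence $\nu_0(E)=0$ by absolute continuity of $\nu_0$, and the same holds for $\nu^j$.

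\noindent To finish, I would invoke the standard dichotomy. Lemma~\ref{lem: gamma_t wedge nu} gives $\gamma_t\wedge\nu_0\equiv 0$ on $D_t$, i.e.\ $\nu_0|_{D_t}$ and $\gamma_t|_{D_t}$ are mutually singular; a measure that is simultaneously absolutely continuous and singular with respect to another is null, so $\nu_0(D_t)=0$. The identical argument applied to $\gamma_t\wedge\nu^j\equiv 0$ gives $\nu^j(D_t)=0$, and both vector-valued measures therefore vanish. The main obstacle is exactly this last bridge: mutual singularity alone permits $\nu_0$ to concentrate on a $\gamma_t$-null (yet $\nu_0$-large) subset of $D_t$, so passing from Lemma~\ref{lem: gamma_t wedge nu} to the vanishing $\nu_0(D_t)=0$ genuinely requires the positive-density input. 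The real work is thus verifying that $\gamma_t$ has strictly positive density on $D_t$, which rests on the invertibility of $T_t$ for $t\in(0,1)$ and the change-of-variables formula.
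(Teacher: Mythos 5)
Your reduction to showing $\nu_0(D_t)=\nu^j(D_t)=0$ is sound, and you correctly identify the crux: mutual singularity from Lemma~\ref{lem: gamma_t wedge nu} alone does not preclude $\nu_0$ from concentrating on a $\gamma_t$-null subset of $D_t$. But the bridge you build across that gap does not hold under the stated hypotheses. You assume the reference and target are absolutely continuous with respect to Lebesgue measure; this is not among the assumptions of Proposition~\ref{pro: opt dynamic}, which only requires that $\gamma^*\in\Gamma_\leq^*(\mu^0,\mu^j)$ be induced by a measurable map. The paper's principal use of this lemma is precisely the discrete setting (the LOPT embedding via barycentric projections), where $\mu^0$, $\nu_0$, $\gamma_t$ are purely atomic and your chain $\nu_0|_{D_t}\ll \text{Lebesgue} \ll \gamma_t|_{D_t}$ collapses. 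Even granting absolute continuity, the positive-density claim needs more care: $D_t$ is the $T_t$-image of the portion of $\supp(\gamma_0^*)$ where $T\neq\mathrm{id}$, and the support may be strictly larger than the set where the density of $\gamma_0^*$ is positive, so $\gamma_t$ need not have strictly positive density a.e.\ on $D_t$; moreover $T$ is only assumed measurable, so the Jacobian/change-of-variables computation is not available without importing extra structure (e.g.\ $T=\nabla\phi$ with $\phi$ convex and Alexandrov differentiability).

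The paper closes the gap by a different, regularity-free device: $v_t$ is determined by its defining formula only up to $\gamma_t$-null sets, so one takes the decomposition $D_t=D_1\cup D_2$ with $\gamma_t(D_1)=0=\nu_0(D_2)$ furnished by the mutual singularity of Lemma~\ref{lem: gamma_t wedge nu} and chooses the representative of $v_t$ that vanishes on $D_1$; the integral of $\Phi\cdot v_t$ against $\nu_0$ then splits over $\Omega\setminus D_t$, $D_1$, and $D_2$, and each piece vanishes. In other words, the statement is established for a suitable (and for the purposes of the continuity equation and the cost, equivalent) representative of the velocity field, rather than by showing that $\gamma_t$ charges every part of $D_t$ --- which is exactly the assertion your argument needs and does not justify in the generality required.
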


\begin{proof}
    We prove this for $v_t\cdot\nu_0$. The other case is analogous.
    We recall that the measure $v_t\cdot\nu_0$ is determined by 
    $$\int_\Omega \Phi(x)\cdot v_t(x) \, d\nu_0(x) $$
    for all measurable functions $\Phi:\Omega\to\mathbb{R}^d$, where $\Phi(x)\cdot v_t(x)$ denotes the usual dot product in $\mathbb{R}^d$ between the vectors $\Phi(x)$ and $ v_t(x)$.
    
    From Lemma \ref{lem: gamma_t wedge nu} we know that  $\gamma_t \wedge \nu_0 \equiv 0$ over $D_t$. Therefore, $\gamma_t$ and $\nu_0$ are mutually singular in that set. This implies that we can decompose $D_t$ into two disjoint sets $D_1$, $D_2$ such that $\gamma_t(D_1) = 0 = \nu_0(D_2)$. Since $v_t$ is a function defined $\gamma_t$--almost everywhere (up to sets of null measure with respect to $\gamma_t$), we can assume without loss of generality that $v_t\equiv 0$ on $D_1$. 
    
    Let $\Phi:\Omega\to\mathbb{R}^d$ be a measurable vector-valued function over $\Omega$. Using that $v_t \equiv 0 $ in $\Omega\setminus D_t$ and in $D_1$, and that $\nu_0(D_2) = 0$, we obtain
    \begin{equation*}
        \int_\Omega \Phi(x) \cdot v_t(x) \, d\nu_0(x) =\int_{\Omega\setminus D_t} \Phi(x) \cdot v_t(x) \, d\nu_0(x) + \int_{D_1} \Phi(x) \cdot v_t(x) \, d\nu_0(x) + \int_{D_2} \Phi(x) \cdot v_t(x) \, d\nu_0(x) = 0.
    \end{equation*}
    Since $\Phi$ was arbitrary, we can conclude that $v_t \cdot \nu_0\equiv 0$.
\end{proof}

\begin{lemma}\label{lem: particular solution}
    Using the notation and hypothesis of Proposition \ref{pro: opt dynamic}, the measure $\overline{\gamma}_t = (1-t)\nu_0 + t\nu^1$ satisfies the equation 
    \begin{equation}\label{eq: gamma bar}
        \begin{cases}
        \partial_t \overline{\gamma} + \nabla \cdot \overline{\gamma} v  = \zeta\\
        \overline{\gamma}_0 = \nu_0, \quad \overline{\gamma}_1 = \nu^1.
        \end{cases}
    \end{equation}
\end{lemma}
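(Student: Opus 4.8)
The plan is to verify directly that the triple $(\overline{\gamma}, v, \zeta)$ solves the forced continuity equation \eqref{eq: continuity equation opt} in the weak (distributional) sense recalled above, where $\zeta = \nu^1 - \nu_0$ as in \eqref{eq: opt zeta}. The boundary conditions $\overline{\gamma}_0 = \nu_0$ and $\overline{\gamma}_1 = \nu^1$ are immediate upon substituting $t=0$ and $t=1$ into the definition $\overline{\gamma}_t = (1-t)\nu_0 + t\nu^1$, so the entire content of the lemma lies in checking the PDE against arbitrary test functions.

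First I would fix a test function $\psi:\Omega\to\mathbb{R}$, continuously differentiable with compact support, and evaluate the two terms on the left-hand side of the weak formulation
\[
\partial_t\Big(\int_\Omega \psi \, d\overline{\gamma}_t\Big) - \int_\Omega \nabla\psi \cdot v_t \, d\overline{\gamma}_t.
\]
For the first term, the linearity of $\overline{\gamma}_t$ in $t$ gives $\int_\Omega \psi \, d\overline{\gamma}_t = (1-t)\int_\Omega \psi \, d\nu_0 + t\int_\Omega \psi \, d\nu^1$, whose time derivative is exactly $\int_\Omega \psi \, d(\nu^1 - \nu_0) = \int_\Omega \psi \, d\zeta$. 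This already reproduces the right-hand side of the weak equation, so it remains only to show that the transport term vanishes.

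For the second term, I would expand $\int_\Omega \nabla\psi \cdot v_t \, d\overline{\gamma}_t = (1-t)\int_\Omega \nabla\psi \cdot v_t \, d\nu_0 + t\int_\Omega \nabla\psi \cdot v_t \, d\nu^1$ and invoke Lemma \ref{lem: null vector field}: since the vector-valued measures $v_t\cdot\nu_0$ and $v_t\cdot\nu^1$ are identically zero for $t\in(0,1)$, choosing $\Phi = \nabla\psi$ in that lemma forces both integrals to vanish. Combining the two computations, the left-hand side of the weak formulation equals $\int_\Omega \psi \, d\zeta$, which is precisely its right-hand side, establishing that $\overline{\gamma}_t$ is the desired particular solution.

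The point worth emphasizing is that essentially no obstacle remains at this stage. The only delicate issue---that the transport velocity $v_t$ must not act on the portion of mass being created or destroyed---has already been isolated and resolved in Lemma \ref{lem: null vector field}, which itself rests on the disjointness of supports from Lemma \ref{lem: gamma_t wedge nu}. Thus the proof collapses to the two short computations above, and the linear interpolant $\overline{\gamma}_t$ is exhibited as the particular solution carrying the forcing term $\zeta$, completing the decomposition strategy announced before the proposition.
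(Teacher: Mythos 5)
Your proof is correct and follows essentially the same route as the paper's: both arguments kill the transport term $\nabla\cdot\overline{\gamma}v$ by invoking Lemma \ref{lem: null vector field} (taking $\Phi=\nabla\psi$) and obtain the forcing term $\zeta=\nu^1-\nu_0$ from the linearity of $\overline{\gamma}_t$ in $t$. The only difference is presentational---you spell out the weak formulation against test functions explicitly, whereas the paper states the same identities at the level of distributions.
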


\begin{proof}
    From Lemma \ref{lem: null vector field} we have that $v_t\cdot\overline{\gamma}_t = (1-t)v_t\cdot \nu_0 + tv_t\cdot\nu^1 = 0$, then $\nabla \cdot \overline{\gamma}v =0$. It is easy to see that $\overline{\gamma}$ satisfies the boundary conditions by replacing $t=0$ and $t=1$ in its definition. Also, $\partial_t\overline{\gamma} =  \nu^1-\nu_0$. Then, since $\zeta_t = \nu^1-\nu_0$ for every $t$, we get $\partial_t\overline{\gamma}+\nabla \cdot \overline{\gamma} v  =  \nu^1  -\nu_0 + 0 = \zeta_t $.
\end{proof}

\begin{proof}[Proof of Proposition \ref{pro: opt dynamic}]\
 


First, we address that the $v:[0,1]\times \Omega\to\mathbb{R}^d$ is well defined. Indeed, by Lemma \ref{lem: opt solve solves ot}, we have that $\gamma^*=(\mathrm{id}\times T)_\# \gamma_0^*$ is optimal for $OT(\gamma_0^*,\gamma_1^*)$. Thus, for each $(t,x)\in[0,1]\times\Omega$, by so-called \textit{cyclical monotonicity property} of the support of classical optimal transport plans, there exists at most one $x_0\in\supp(\gamma_0^*)$ such that $T_t(x_0)=x$, see \cite[Lemma 5.29]{sant}. So, $v_t(x)$ in \eqref{eq: opt v} is well defined by understanding its definition as 
\begin{equation*}
v_t(x)=  \begin{cases}T(x_0)-x_0
            &\text{if }x=T_t(x_0),         \quad \text{ for } x_0\in \supp(\gamma_0^*)\\ 
            0 &\text{elsewhere}.
        \end{cases}  
\end{equation*}

Now, we check that $(\rho,v,\zeta)$ defined in  \eqref{eq: opt rho}, \eqref{eq: opt v}, \eqref{eq: opt zeta} is a solution for \eqref{eq: continuity equation opt}.
In fact, 
$$\rho_t=\gamma_t+\overline{\gamma}_t$$
where $\gamma_t$ is given by \eqref{eq: gamma_t}, and $\overline{\gamma}_t$
is as in Lemma \ref{lem: particular solution}. Then, from section \ref{sec: ot dynamic}, $(\gamma,v)$ solves the continuity equation \eqref{eq: continuity eq} since $\gamma^*=(I\times T)_\#\gamma_0^*\in \Gamma^*(\gamma_0^*,\gamma_1^*)$ (Lemma \ref{lem: opt solve solves ot}), and from its  definition $\gamma_t=(T_t)_\#\gamma_0^*$ coincides with \eqref{eq: ot rho} in this context, as well as $v_t$ coincides with \eqref{eq: ot v} (the support of $v_t$ lies on the support of $\gamma_0^*$). On the other hand,  from Lemma \ref{lem: particular solution}, $\overline{\gamma}_t$ solves \eqref{eq: gamma bar}.
Therefore, by linearity,
\begin{align*}
    \partial_t\rho_t+\nabla\cdot \rho_t v_t=\underbrace{\partial_t\gamma_t+\nabla\cdot \gamma_t v_t}_{0}+\underbrace{\partial_t\overline{\gamma}_t+\nabla\cdot \overline{\gamma} _tv_t}_{\zeta_t}=\zeta_t
\end{align*}

Finally, by plugging in  $(\rho,v,\zeta)$ into the objective function in \eqref{eq: opt dynamic}, we have: 
 \begin{align}
    \int_{[0,1]\times\Omega}\|v\|^2 d\rho+\lambda|\zeta|&=\int_{[0,1]\times\Omega}\|v\|^2 \, d\gamma_t \, dt+\lambda|\zeta|\nonumber\\
    &=\int_\Omega \|T(x^0)-x^0\|^2 \, d\gamma_0^*(x^0)+\lambda (|\nu_0|+|\nu^1|) \nonumber\\
     &=\int_{\Omega^2} \|x^0-x^j\|^2 \, d\gamma^*(x^0,x^j)+\lambda (|\mu^0-\gamma_0^*|+|\mu^j-\gamma_1^*|) \nonumber\\
     &=OPT_\lambda(\mu^0,\mu^j)\nonumber
 \end{align}
 since $\gamma^*\in\Gamma_{\leq}^*(\mu^0,\mu^j)$. So, this
 shows that $(\rho,v,\zeta)$ is minimum for \eqref{eq: opt dynamic}.
 
 The `moreover' part holds from the above identities, using that
 \begin{align*}
     &\int_\Omega \|v_0\|^2d\gamma_0^*(x_0)+\lambda (|\nu_0|+|\nu^1|)=\int_\Omega \min(\|v_0\|^2,2\lambda)d\gamma_0^*(x_0)+\lambda (|\nu_0|+|\nu^1|) \\
     & \text{ for }  v_0(x^0)=T(x^0)-x^0,
 \end{align*}
since $\gamma^*$ is such that $ \|x^0-x^j\|^2<2\lambda$ for all $ (x^0,x^j)\in \supp(\gamma)$  \cite[Lemma 3.2]{bai2022sliced}. 



\end{proof}

\section{Proof of Theorem \ref{thm: opt barycentric projection}}

The proof will be similar to that of Lemma \ref{lem: ot barycentric projection}. 
To simplify the notation, in this proof, we use $\mu^1$ (and $\hat \mu^1$) to replace $\mu^j$ (and $\hat \mu^j$). 

\subsection{Notation setup}
We choose $\gamma^1\in\Gamma_{\leq}^*(\mu^0,\mu^1)$. We will understand  the second marginal distribution $\pi_{1\#}\gamma^1$ induced by $\gamma^1$, either as the vector $\gamma^1_1:=(\gamma^1)^T 1_{N_0}$ or as  the measure 
$\sum_{m=1}^{N_1}(\gamma^1_1)_m\delta_{x_m^1}$, and, by abuse of notation, we will write $\pi_{1\#}\gamma^1=(\gamma^1)^T1_{N_0}=\gamma_1^1$ (analogously for $\pi_{0\#}\gamma^1$).

Let $\hat\gamma^1:=\diag(\hat p_1^1,\ldots,\hat p_{N_0}^1)$ denote the transportation plan induced by mapping $x_k^0\mapsto \hat x_k^1$.

Let 
$$D:=\{k\in\{1,\dots N_0\}: \hat p_k^1>0\}.$$
We select $\hat\gamma\in\Gamma_{\leq}(\mu^0,\hat\mu^1)$.  
With a slight abuse of notation, we define 
$$\gamma:=\hat\gamma (\hat{\gamma}^1)^{-1}\gamma^j$$
where we mean that 
$(\hat{\gamma}^1)^{-1}\in \mathbb{R}^{N_0\times N_0}$ is a digonal matrix with: 
$$(\hat{\gamma}^1)_{kk}=\begin{cases}
    \frac{1}{\hat{p}_k^1}&\text{if }\hat{p}_k^1>0 \nonumber\\
    0 &\text{elsewhere} 
\end{cases},\forall k\in[1:N_0].$$

\textbf{The goal is to show  
\begin{equation}\label{eq: compare cost}
C(\hat\gamma^j;\mu^0,\hat\mu^j,\lambda)\leq C(\hat\gamma;\mu^0,\hat\mu^j,\lambda).  
\end{equation}}

\subsection{Connect $OPT_\lambda(\mu^0,\mu^1)$ and $OPT_\lambda(\mu^0,\hat{\mu}^1).$}
Note, 
First, we claim $\gamma\in\Gamma_\leq (\mu^0,\gamma_1^1)\subset \Gamma_\leq (\mu^0,\mu^1)$. Indeed, we have 
\begin{align}
\gamma 1_{N_1}&=\hat\gamma (\hat \gamma^1)^{-1}\gamma ^j1_{N_1}=\hat\gamma(\hat\gamma^1)^{-1}\hat p^j= \hat\gamma 1_D=\hat\gamma_0 \leq p^0 \label{eq: aux1} \\ 
\gamma^T 1_{N_0}&=(\gamma^1)^T(\hat\gamma^1)^{-1}\hat\gamma ^T1_{N_0}\leq (\gamma^1)^T(\hat\gamma^1)^{-1} \hat p^1=(\gamma^1)^T 1_D=\gamma^1_1 \label{eq: aux2},
\end{align}
where $1_D\in \mathbb{R}^{N_0}$ is defined with $(1_D)_{k}=1$ if $k\in D$ and $(1_D)_k=0$ elsewhere. 

In \eqref{eq: aux1}, the equality $\hat\gamma 1_D=\hat \gamma_0$ follows from the following: $(\hat \gamma^1)^T1_{N_0}\leq \hat p^1$, we have $\hat{\gamma}_{k,k'}=0,\forall k'\notin D, k\in\{1,\ldots,N_0\}$.
In \eqref{eq: aux2}, the inequality follows from the fact $\hat \gamma\in \Gamma_{\leq}(\mu^0,\hat\mu^1)$.

Note, from \eqref{eq: aux1}, we also have $\gamma^1_0=\hat\gamma^1_0$. 

We compute the transportation costs induced by $\gamma^1,\gamma,\hat\gamma^1,\hat\gamma$: 
\begin{align}
&C(\gamma^1;\mu^0,\mu^1,\lambda)\nonumber\\
&=\sum_{i=1}^{N_1}\sum_{k\in D}\|x_k^0-x_i^1\|^2\gamma_{k,i}^1+\lambda(|p^0|+|p^1|-2|\gamma^1|)\nonumber\\
&=\sum_{k\in D}\|x^0_k\|^2(\gamma_0^1)_k+\sum_{i=1}^{N_1}\|x_i^1\|^2(\gamma_1^1)_i-2\sum_{k\in D}\sum_{i=1}^{N_1}x_k^0\cdot x_i^1 \, \gamma_{k,i}^1+\lambda(|p^0|+|p^1|-2|\gamma^1|)\nonumber
\end{align}
Similarly, 
\begin{align}
&C(\hat\gamma^1;\mu^0,\hat\mu^1,\lambda)\nonumber\\
&=\sum_{k\in D}\|x^0_k\|^2(\hat\gamma_0^1)_k
+\sum_{k'\in D}\|\hat x_{k'}^1\|^2(\hat\gamma^1_1)_{k'}-2\sum_{k\in D}\sum_{k'\in D}x_k^0\cdot \hat x_{k'}^1 \, \hat\gamma_{k,k'}^1+\lambda(|p^0|+|\hat p^1|-2|\hat\gamma^1|)\nonumber \\
&=\sum_{k\in D}\|x^0_k\|^2(\hat\gamma_0^1)_k
+\sum_{k'\in D}\|\hat x_{k'}^1\|^2(\hat\gamma^1_1)_{k'}
-2\sum_{k\in D}\sum_{k'\in D}\sum_{i=1}^{N_1}x_k^0\cdot x_i^1 \, \gamma^1_{k',i}\frac{\hat\gamma^1_{k,k'}}{\hat\gamma^1_{k',k'}}+\lambda(|p^0|+|\hat{p}^1|-2|\hat\gamma^1|)\nonumber\\
&=\sum_{k\in D}\|x^0_k\|^2(\hat\gamma_0^1)_k
+\sum_{k'\in D}\|\hat x_{k'}^1\|^2(\hat\gamma^1_1)_{k'}
-2\sum_{k\in D}\sum_{i=1}^{N_1}x_k^0\cdot x_{i}^1 \, \gamma^1_{k,i}
+\lambda(|p^0|+|\hat{p}^1|-2|\hat\gamma^1|)\nonumber
\end{align}
Therefore, we obtain 
\begin{align}
C(\hat\gamma^1;\mu^0,\hat\mu^1,\lambda)=C(\gamma^1;\mu^0,\gamma_1^1,\lambda)-\sum_{i=1}^{N_1}\| x_i^1\|^2(\gamma^1_1)_i+\sum_{k'\in D}\|\hat{x}_{k'}^1\|^2(\hat\gamma^1_1)_{k'}+\lambda(|\hat{p}^1|-|p^1|) \label{pf: opt C(gamma_hat *)}
\end{align}

And also,
\begin{align}
&C(\gamma;\mu^0,\mu^1,\lambda)\nonumber\\
&=\sum_{k=1}^{N_0}\|x_k^0\|^2(\gamma_0)_k+\sum_{i=1}^{N_1}\|x^1_i\|^2(\gamma_1)_i-2\sum_{k=1}^{N_0}\sum_{i=1}^{N_1}\sum_{k'\in D}x_k^0\cdot x_i^1 \, \hat\gamma_{k,k'}\frac{\gamma^1_{k',i}}{\hat\gamma_{k',k'}^1}+\lambda(|p^0|+|p^1|-2|\gamma|), \nonumber
\end{align}
\begin{align}
&C(\hat\gamma;\mu^0,\hat\mu^1,\lambda) \nonumber\\ 
&=\sum_{k=1}^{N_0}\|x_k^0\|^2(\hat\gamma_0)_k+\sum_{k'\in D}\|\hat x_{k'}^1\|^2(\hat\gamma_1)_{k'}-2\sum_{k=1}^{N_0}\sum_{k'\in D}\sum_{i=1}^{N_0}x_k^0\cdot x_i^1 \, \hat\gamma_{k,k'}^1\frac{\gamma^1_{k',i}}{\hat \gamma^1_{k',k'}}+\lambda(|p^0|+|\hat p^1|-2|\hat\gamma|)\nonumber 
\end{align}
Thus we obtain 
\begin{align}
C(\hat\gamma;\mu^0,\hat\mu^1,\lambda)&=C(\gamma;\mu^0,\gamma_1^1,\lambda)-\sum_{i=1}^{N_1}\|x_i^1\|^2(\gamma_1)_i+\sum_{k'\in D}\|\hat{x}^1_{k'}\|^2(\hat\gamma_1)_{k'}+\lambda(|\hat p^1|-|p^1|) \label{pf: opt C(gamma_hat)}    
\end{align}

Combining with \eqref{pf: opt C(gamma_hat *)} and \eqref{pf: opt C(gamma_hat)}, we have 
\begin{align}
&C(\hat\gamma^1;\mu^0,\hat\mu^1,\lambda)-C(\hat\gamma;\mu^0,\hat\mu^1,\lambda) \nonumber \\
&=C(\gamma^1;\mu^0,\gamma^1_1,\lambda)-C(\gamma;\mu^0,\gamma^1_1,\lambda)
+\sum_{i=1}^{N_1}\|x_i^1\|^2((\gamma_1)_i-(\gamma^1_1)_i)
+\sum_{k\in D}|\hat{x}_k^1|^2
((\hat\gamma^1_1)_k-(\hat\gamma_1)_k) \nonumber \\
&=\sum_{i=1}^{N_1}\|x_i^1\|^2((\gamma_1)_i-(\gamma^1_1)_i)
-\sum_{k\in D}\|\hat{x}_k^1\|^2
((\hat\gamma_1)_k-(\hat\gamma^1_1)_k)\label{pf: C(gamma_hat)-C(gamma)}
\end{align}
where the inequality holds sine $\gamma^1$ is optimal for $OPT_{\lambda}(\mu^0,\gamma^1_1)$. 

\subsection{Verification of the inequality} 
It remains to show \eqref{pf: C(gamma_hat)-C(gamma)} is less than $0$.  By Jensen's inequality, for each $k\in D$, we obtain 
$$\|\hat x^1_k\|^2\leq \sum_{i=1}^{N_1}\frac{\gamma^1_{k,i}}{\hat{p}^1_k}\|x_i^1\|^2.$$ Combined with the fact $\hat{\gamma}_1\leq \hat\gamma^1_1$, we obtain: 
\begin{align}
\eqref{pf: C(gamma_hat)-C(gamma)}
&\leq \sum_{i=1}^{N_1}\|x_i^1\|^2((\gamma_1)_i-(\gamma^1_1)_i)
-\sum_{k\in D}\sum_{i=1}^{N_1}\frac{\gamma^1_{k,i}}{\hat{p}^1_k}\|x_i^1\|^2((\hat{\gamma}_1)_k-(\hat{\gamma}^1_1)_k)\nonumber\\
&=\sum_{i=1}^{N_1}\|x_i^1\|^2\left((\gamma_1)_i-(\gamma^1_1)_i-\sum_{k\in D}\frac{\gamma_{k,i}^1((\hat\gamma_1)_k-\hat{p}^1_k)}{\hat{p}^1_k}\right)\label{pf:diff_bound}
\end{align}

Pick $i\in [1:N_0]$, we have: 
\begin{align}
&(\gamma_1)_i-(\gamma^1_1)_i-\sum_{k\in D}\frac{\gamma_{k,i}^1((\hat\gamma_1)_k-\hat{p}^1_k)}{\hat{p}^1_k}\nonumber\\
&=(\gamma_1)_{i}-\hat{p}_i^1-\sum_{k\in D}\frac{\gamma_{k,i}^1((\hat\gamma_1)_k)}{\hat{p}^1_k}+\hat{p}_i^1\nonumber\\
&=\sum_{k'=1}^N\gamma_{k',i}-\sum_{k\in D}\frac{\gamma_{k,i}^1((\hat\gamma_1)_k)}{\hat p^1_k}\nonumber\\
&=\sum_{k'=1}^{N_0}\sum_{k\in D}\frac{\hat{\gamma}_{k',k}\gamma^1_{k,i}}{\hat{p}_{k}^1}-\sum_{k\in D}\sum_{k'=1}^{N_0}\frac{\gamma^1_{k,i}\hat{\gamma}_{k',k}}{\hat{p}_k^1}\label{pf:gamma_ki}\\
&=0\nonumber 
\end{align}
where \eqref{pf:gamma_ki} holds by the fact: for each $k'\in [1:N_0], i\in [1:N_1]$, we have  
\begin{align}
\gamma_{k',i}
&=(\hat\gamma(\hat\gamma^1)^{-1}[k',:]) \gamma^1[:,i]\nonumber\\
&=\left[\hat\gamma_{k',1} \frac{1}{\hat p^1_1},\ldots \hat\gamma_{k',N^0}\frac{1}{\hat{p}^j_{N_0}}\right]^T[\gamma^1_{1,i},\ldots \gamma^1_{N_0,i}]\nonumber\\
&=\sum_{k\in D}\frac{\hat{\gamma}_{k',k}\gamma^1_{k,i}}{\hat{p}_k^1} \nonumber 
\end{align}
Thus \eqref{pf:diff_bound} is $0$. Therefore, \eqref{pf: C(gamma_hat)-C(gamma)} is upper bounded by $0$, and we complete the proof. 

\section{Proof of Theorem \ref{thm: barycentric projection recover opt}}

\begin{proof}

Without loss of generality, we assume that $\gamma^j\in\Gamma_\leq^*(\mu^0,\mu^j)$ is induced by a 1-1 map. We can suppose this since the trick is that we will end up with an $N_0$-point representation of $\mu^j$ when we get $\hat\mu^j$. For example, if two $x_n^0$ and $x_m^0$ are mapped to the same point $T(x_n^0)=T(x_m^0)$, when performing the barycentric, we can split this into two different labels with corresponding labels. (The moral is that the labels are important, rather than where the mass is located.)
Therefore, $\gamma^j\in \mathbb{R}^{N_0\times N_j}$ is such that in each row and column there exists at most one positive entry, and all others are zero. Let  $\hat\gamma:= \diag(\hat p_1^j,\ldots, \hat p_{N_0}^j)$. Then, 
by the definition of $\hat{\mu}^j$, we have $C(\gamma^j;\mu^0,\mu^j,\lambda)=C(\hat\gamma;\mu^0,\hat\mu^j,\lambda)$. 
Thus,  
\begin{align}
OPT_{\lambda}(\mu^0,\mu^j)&=C(\gamma^j;\mu^0,\mu^j,\lambda)\nonumber\\
&=C(\gamma^j;\mu^0,\gamma^j_1,\lambda)+\lambda (|\mu^j|-|\gamma_1^j|)\nonumber\\
&=C(\hat\gamma;\mu^0,\hat\mu^j,\lambda)+\lambda(|\mu^j|-|\hat\mu^j|)\nonumber \\
&=OPT_\lambda(\mu^0,\hat\mu^j)+\lambda(|\mu^j|-|\hat\mu^j|),\nonumber
\end{align}
where the last equality holds from Theorem \ref{thm: opt barycentric projection}. This concludes the proof.

Moreover, from the above identities (for discrete measure) we can express
\begin{equation*}
    OPT_\lambda(\mu^0,\mu^j)
=\sum_{k=1}^{N_0}\hat{p}_k\|x_k^0-\hat x_k^j\|^2+\lambda|p^0-\hat{p}_k^j|+\lambda(|p^j|-|\hat p^j|).
\end{equation*}
\end{proof}

\section{Proof of Proposition \ref{pro: lopt dist barycentric}}
\begin{proof} 
The result follows from \eqref{eq: lopt embedding discrete}  
and \eqref{eq: LOPT general}. Indeed, we have
\begin{align*}
    &LOPT_{\lambda,\mu_0}(\hat\mu^j,\hat\mu^j)=\|u^i-u^j\|_{\hat{p}^i\wedge\hat{p}^j,2\lambda}+\lambda(|\hat p^i-\hat p^j|)\qquad \text{and}\\
    &LOPT_{\lambda,\mu_0}(\mu^j,\mu^j)=\|u^i-u^j\|_{\hat{p}^i\wedge\hat{p}^j,2\lambda}+\lambda(|\hat p^i-\hat p^j|) +\lambda(|\nu^i|+|\nu^j|), 
\end{align*}
since the LOPT barycentric projection of $\mu^i$ is basically $\mu^i$ without the mass that needs to be created from the reference (analogously for $\mu^j$).
\end{proof}

\section{Applications}\label{sec: appendix app}

For completeness, we will expand on the experiments and discussion presented in Section \ref{sec: applications}, as well as on Figure \ref{fig: HK and OPT} which contrasts the HK technique with OPT from the point of view of interpolation of measures.

First, we recall that similar to LOT \cite{wang2013linear} and \cite{moosmuller2020linear}, the goal of LOPT is not exclusively to approximate OPT, but to propose new transport-based metrics (or discrepancy measures) that are computationally less expensive and easier to work with than OT or OPT, specifically when many measures must be compared.  

Also, one of the main advantages of the 
\textit{linearization} step is that it allows us to embed sets of probability (resp. positive finite) measures into a linear space ($\mathcal{T}_{\mu^0}$ space). Moreover, it does it in a way that allows us to use the $\mathcal{T}_{\mu^0}$-metric in that space as a proxy (or replacement) for more complicated transport metrics while preserving the natural properties of transport theory. As a consequence, data analysis can be performed using Euclidean metric in a simple vector space. 

\subsection{Approximation of OPT Distance}

For a  better understanding of the errors plotted in Figure \ref{fig: error_vs_lambda}, the following Figure \ref{fig: hist} shows the histograms of the relative errors for different values of  $\lambda$ and each number of measures $K=5,10,15$.

\begin{figure}[t]
    \centering
    \includegraphics[width=1\textwidth]{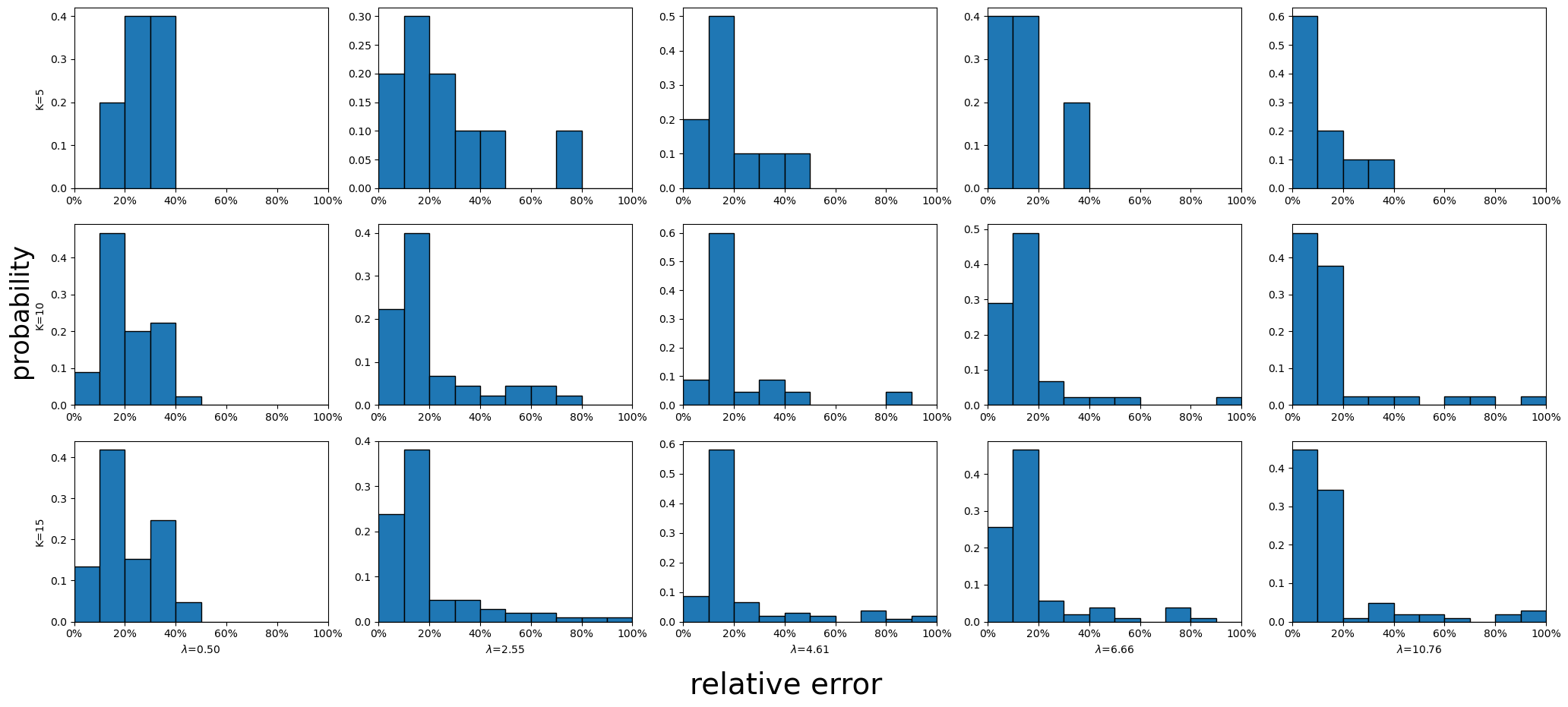}
    \caption{Histogram of relatives errors (depicted in Figure \ref{fig: error_vs_lambda}) between $OPT_\lambda$ and $LOPT_{\lambda,\mu_0}$.
    (Number of samples $N=500$. Number of repetitions $=10$.  Dimension $=2$. --measures on $\mathbb{R}^2$--.)}
    \label{fig: hist}
\end{figure}

As said in Section \ref{sec: applications}, we recall that for these experiments, we created $K$ point sets of size $N$
for $K$ different Gaussian distributions in $\mathbb{R}^2$. In particular, $\mu^i\sim \mathcal{N}(m^i,I)$, where $m^i$ is randomly selected such that $\|m ^i\| = \sqrt{3}$ for $i=1,...,K$. For the reference, we picked an $N$ point representation of $\mu^0\sim \mathcal{N}(\overline{m},I)$ with $\overline{m}=\sum m^i/K$. 
For figures \ref{fig: error_vs_lambda} and \ref{fig: hist}, the sample size $N$ was set equal to $500$. 

In what follows, we include 
tests for $N=200,250, 300, 350,...,900,950, 1000$ and $K=2,4$. For each $(N,K)$, we repeated each experiment $10$ times. The relative errors are shown in Figure \ref{fig: accuracy}. For large $\lambda$, most mass is transported and $OT(\mu^i,\mu^j)\approx OPT_{\lambda}(\mu^i,\mu^j)$, the performance of LOPT is close to that of LOT, and the relative error is small. For small $\lambda$, almost no mass is transported, $OPT_{\lambda}(\mu^i,\mu^j)\approx \lambda(|\mu^i|+|\mu^j|)\approx  \lambda(|\mu^i| - |\hat\mu^i| +|\mu^j| -|\hat\mu^j|)\approx{LOPT}_{\mu^0,\lambda}(\mu^i,\mu^j)$, and we still have a small error. In between, e.g., $\lambda=5$, we have the largest relative error. Similar results were obtained by setting the reference as the OT barycenter.

\begin{figure}
    \centering
    \includegraphics[width=0.45\textwidth]{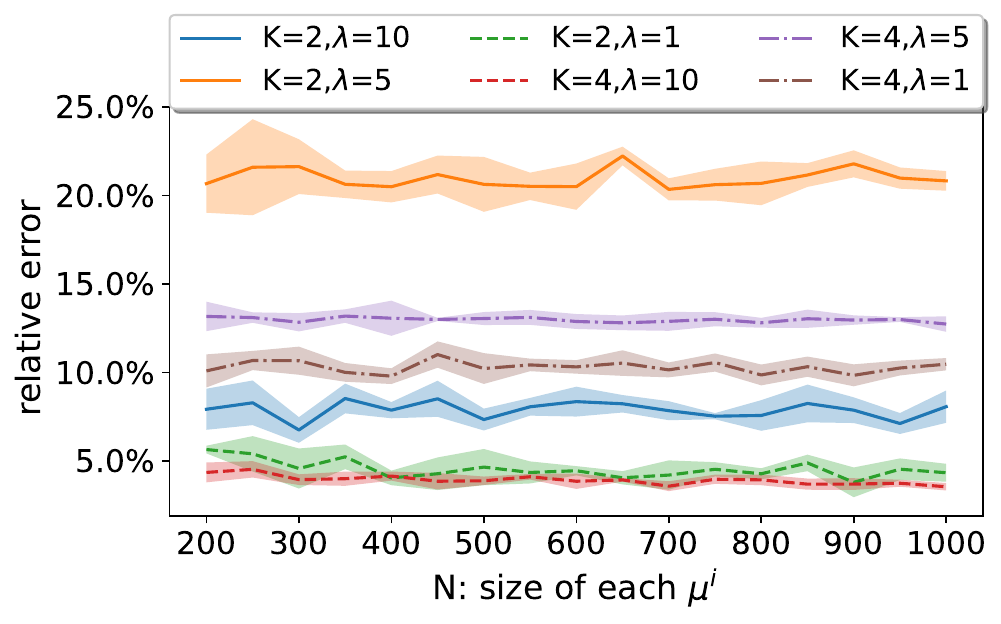}
    \caption{The average relative error between $OPT_\lambda$ and $LOPT_{\lambda,\mu_0}$ between all pairs of discrete measures $\mu^i$ and $\mu^j$.
    }
    \label{fig: accuracy}
\end{figure} 

\subsection{PCA analysis}

For problems where doing pair-wise comparisons between $K$ distributions is needed, in the classical optimal (partial) transport setting we have to solve $\binom{K}{2}$ OT (resp. OPT) problems. In the LOT (resp. LOPT) framework, however, one only needs to perform $K$ OT (resp. OPT) problems (matching each distribution with a reference measure).

One very ubiquitous example is to do clustering or classification of measures. For this case, the number of target measures $K$ representing different samples is usually very large. For the particular case of data analysis on Point Cloud MNIST, after using PCA in the embedding space (see Figure \ref{fig: pca result}), it can be observed that the LOT framework is a natural option for separating different digits, but the equal mass requirement is too restrictive in the presence of noise. LOPT performs better since it does not have this restriction. This is one example where the number of measures $K=900$
 is much larger than 2.

\subsection{Point Cloud Interpolation}

Here, we add the results of the experiments mentioned in Section \ref{sec: applications} which complete Figure \ref{fig: geodesic1}. In the new Figure \ref{fig: interpolation_09_01}, in fact, Figure \ref{fig: geodesic1} corresponds to the subfigure \ref{fig: 09_0.5}. We conducted experiments using three digits (0, 1, and 9) from the PointCloud MNIST dataset, with 300 point sets per digit. We utilized LOPT embedding to calculate and visualize the interpolation between pairs of digits. We chose the barycenter between 0, 1, and 9 as the reference for our experiment. However, to avoid redundant results, in the main paper, we only demonstrated the interpolation between 0 and 9 in our main paper. The remaining plots for the other digit pairs using different levels of noise $\eta$ are included here for completeness. Later, in Section \ref{sec: HK vs OPT}, Figure \ref{fig: all comparisons} will add the plots for the HK technique and its linearized version.

\begin{figure}
    \centering
    \begin{subfigure}[b]{0.33\textwidth}
        \centering
        \includegraphics[width=1\textwidth]{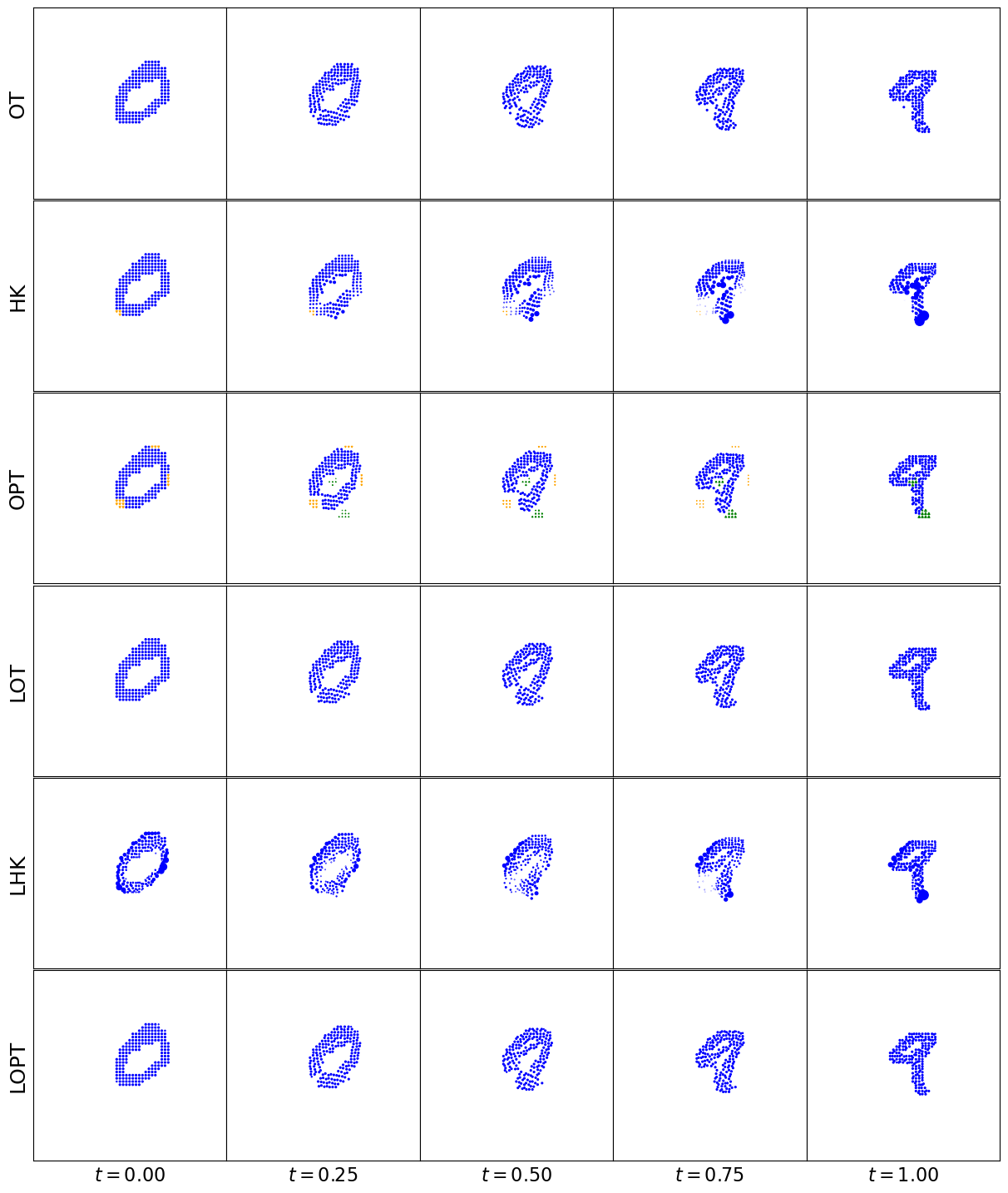} 
        \caption{$\eta=0$}
        \label{fig: 09_0}
    \end{subfigure}
    \begin{subfigure}[b]{0.33\textwidth}
        \centering
        \includegraphics[width=1\textwidth]{pic/geodesic_09_0.5.png}
        \caption{$\eta=0.5$}
        \label{fig: 09_0.5}
    \end{subfigure}
    \begin{subfigure}[b]{0.33\textwidth}
        \centering
        \includegraphics[width=1\textwidth]{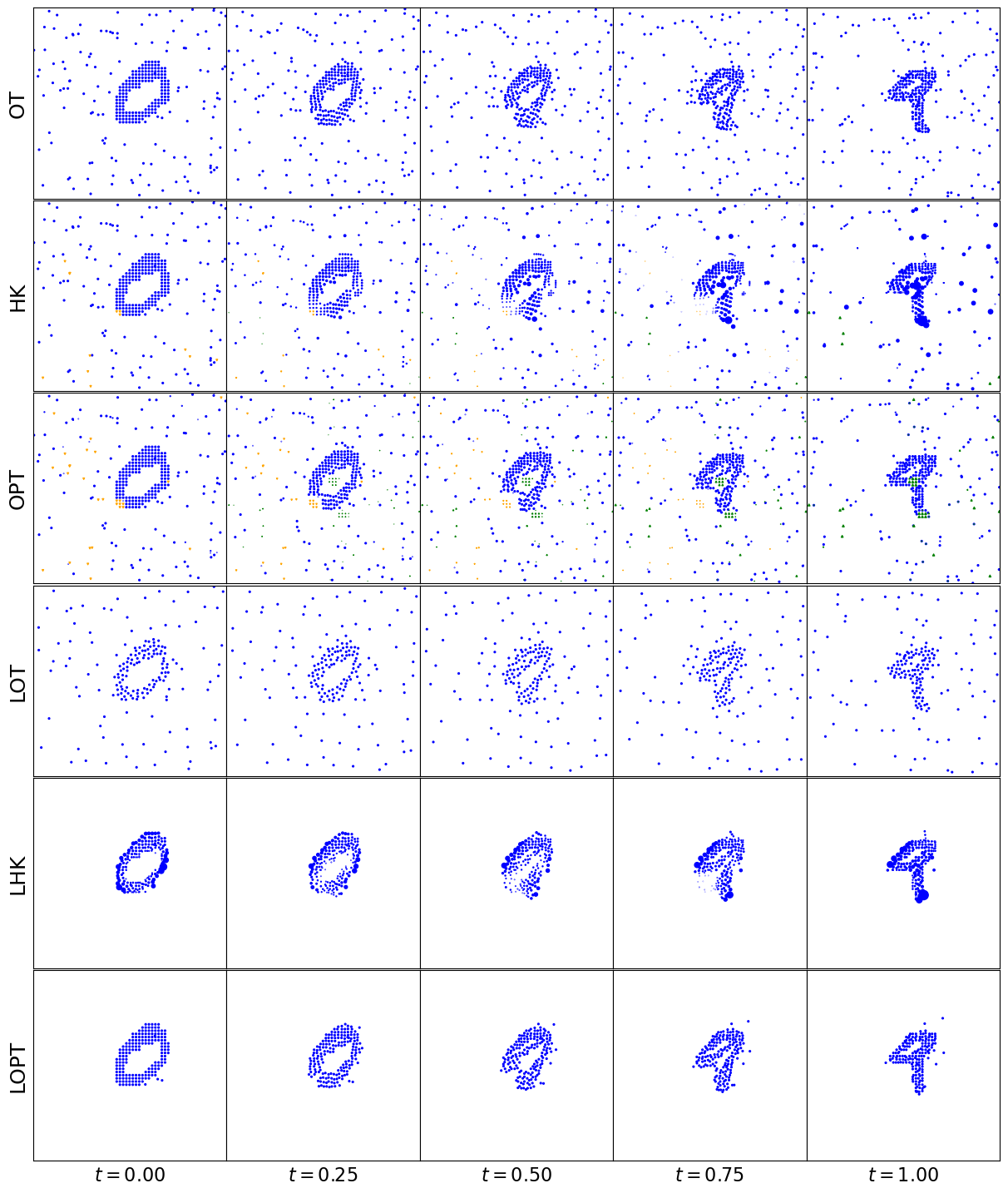}
        \caption{$\eta=0.75$}
        \label{fig: 09_0.75}
    \end{subfigure}
    \begin{subfigure}[b]{0.33\textwidth}
        \centering
        \includegraphics[width=1\textwidth]{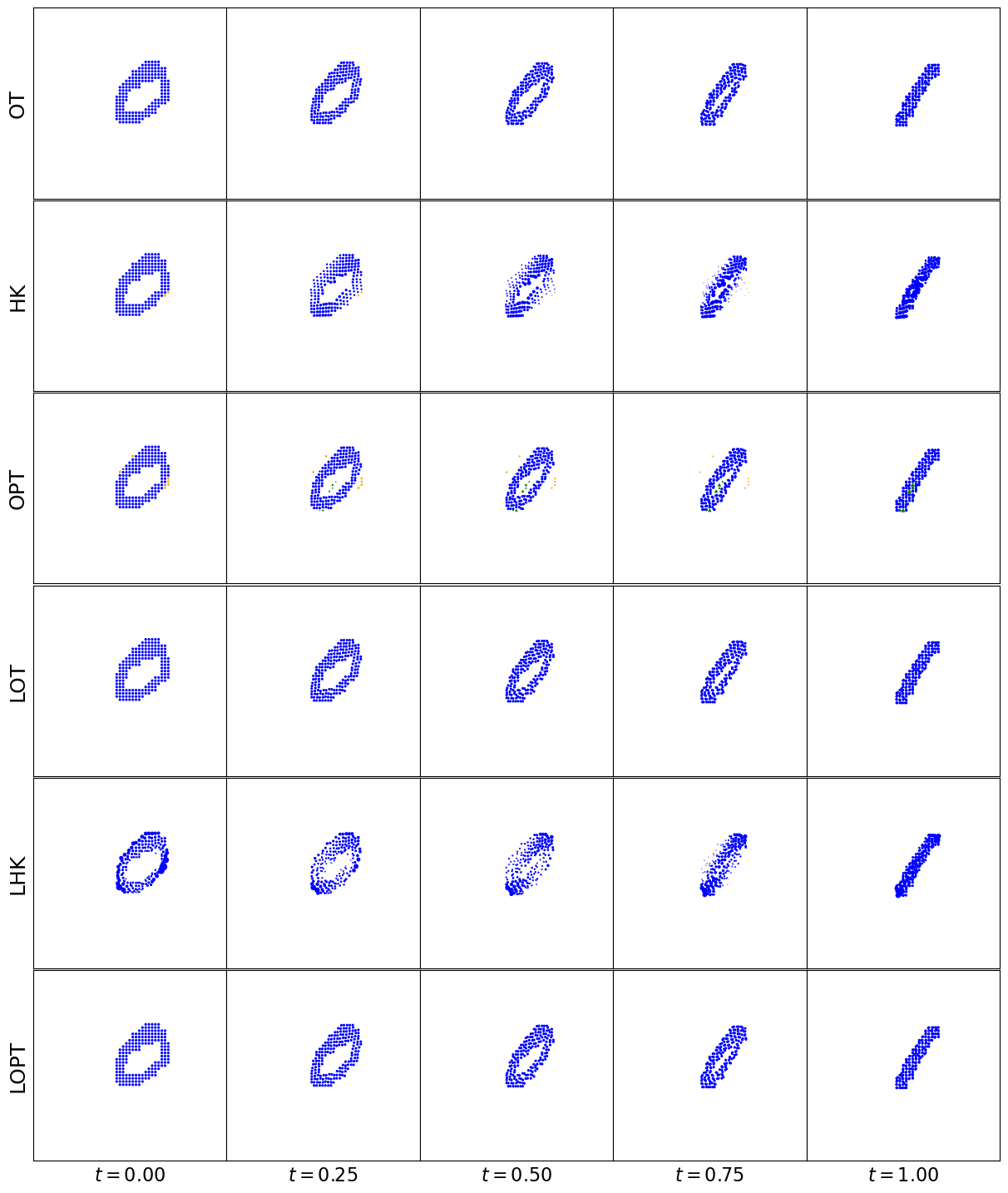} 
       \caption{$\eta=0$}
        \label{fig: 01_0}
    \end{subfigure}
    \begin{subfigure}[b]{0.33\textwidth}
        \centering
        \includegraphics[width=1\textwidth]{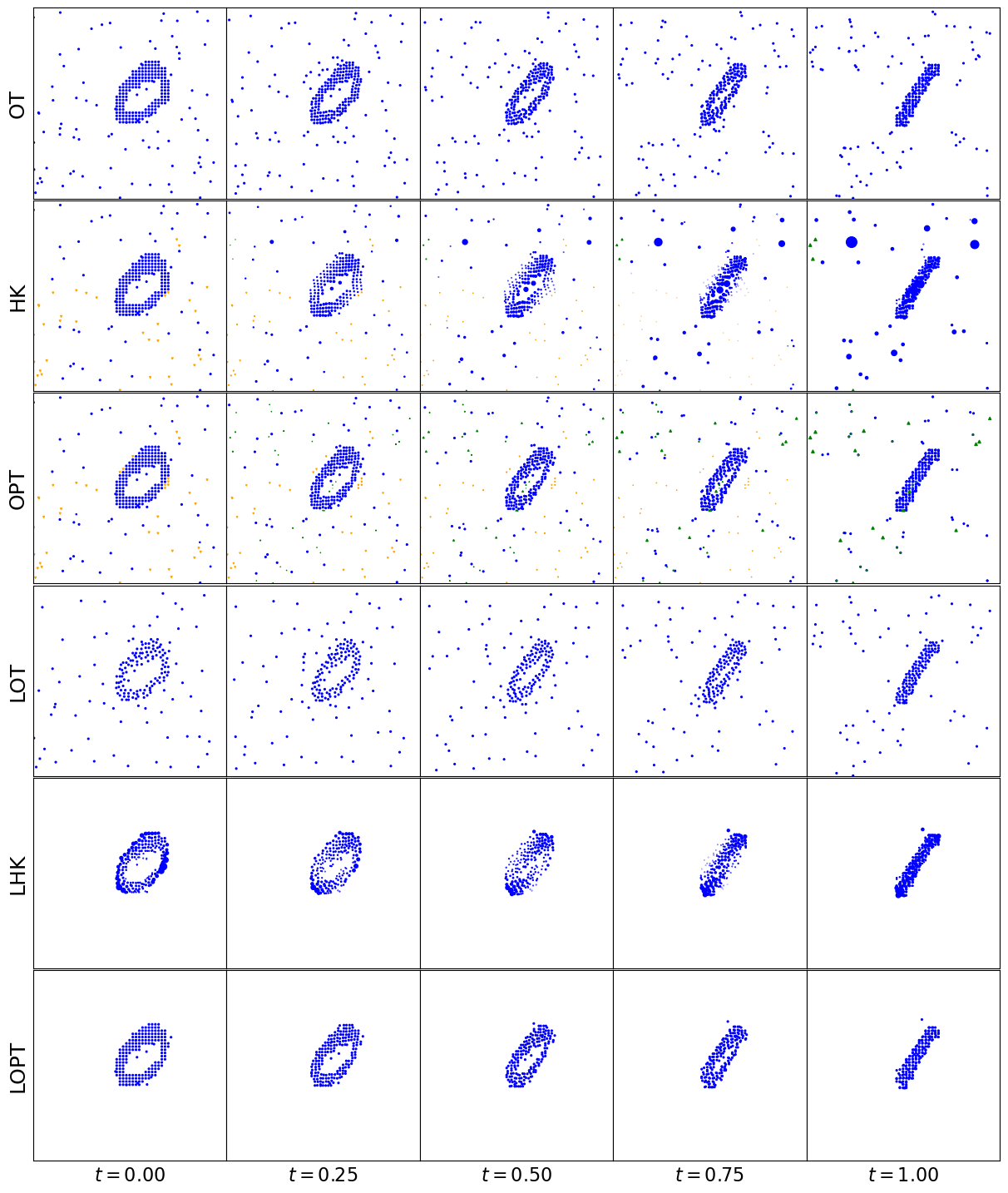}
        \caption{$\eta=0.5$}
        \label{fig: 01_0.5}
    \end{subfigure}
    \begin{subfigure}[b]{0.33\textwidth}
        \centering
        \includegraphics[width=1\textwidth]{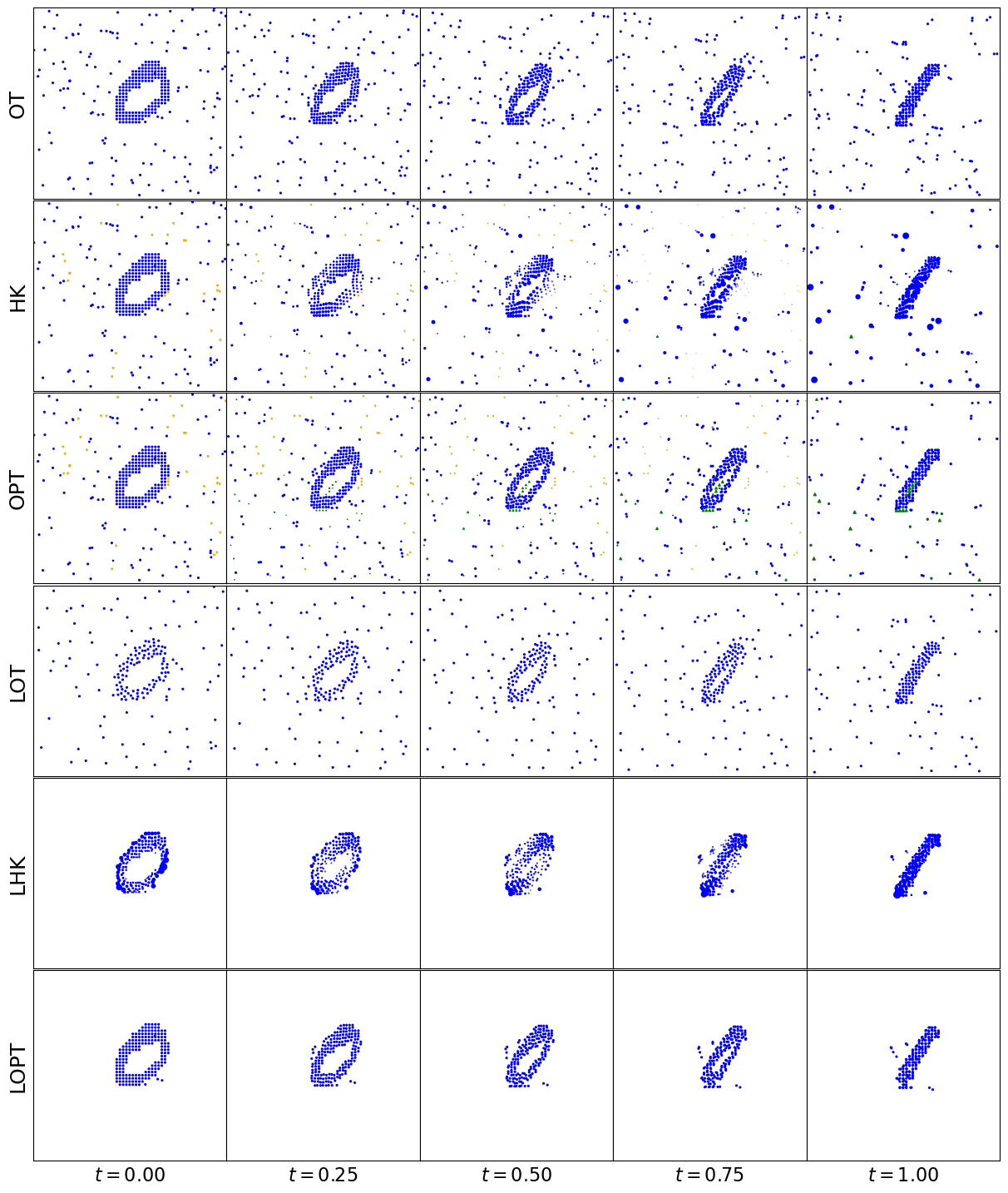}
        \caption{$\eta=0.75$}
        \label{fig: 01_0.75}
    \end{subfigure}
    
    \caption{Interpolation between two point-clouds at different times $t\in\{0, 0.25, 0.5,0.75,1\}$. Different values of noise $\eta$ were considered for the different interpolation approaches (OT, LOP, OPT, LOPT). For LOT and LOPT the reference measure is the barycenter between the PointClouds $0$, $1$, and $9$ with no noise. Top row: Interpolation between digits $0$ and $9$. Bottom row: Interpolation between digits $0$ and $1$.}
    \label{fig: interpolation_09_01}
\end{figure}

\subsection{Preliminary comparisons between LHK and LOPT}\label{sec: HK vs OPT}

The contrasts between the Linearized Hellinger-Kantorovich (LHK) \cite{cai2022linearized} and the LOPT approaches come from the differences between HK (Hellinger-Kantorovich) and OPT distances. 

The main issue is that for HK transportation between two measures, the transported portion of the mass does not resemble the OT-geodesic where mass is preserved. In other words, HK changes the mass as it transports it, while OPT preserves it. 

This issue is depicted in Figure \ref{fig: HK and OPT}. In that figure, both the initial (blue) and final (purple) distributions of masses are two unit-mass delta measures at different locations. Mass decreases and then increases while being transported for HK, while it remains constant for OPT. For HK, the transported portion of the mass does not resemble the OT-geodesic where mass is preserved. In other words, HK changes the mass as it transports it, while OPT preserves it. 

To illustrate better this point, we incorporate here Figure \ref{fig: Dirac_HK_OPT} which is the 
two-dimensional analog of Figure \ref{fig: HK and OPT}.

\begin{figure}
    \centering
    \includegraphics[width=0.75\textwidth]{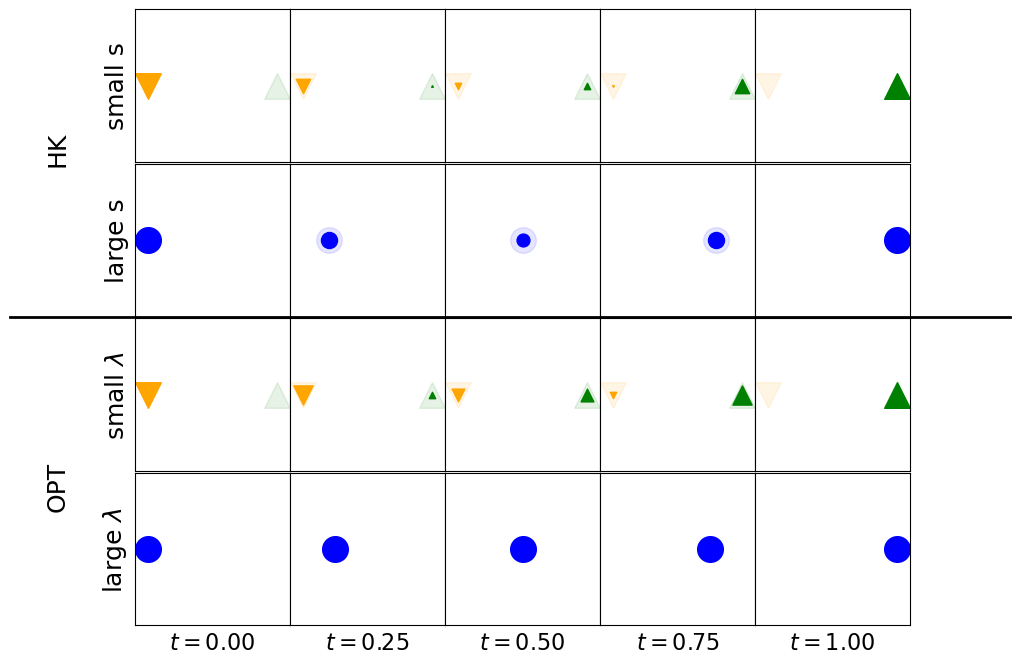}
    \caption{HK vs. OPT interpolation between two delta measures of unit mass located at different positions. Two scenarios are shown for each transport framework varying the respective parameters ($s$ for HK, and $\lambda$ for OPT).  Blue circles stand for the cases when the geodesic transports mass. Triangular shapes represent destruction and creation of mass. Triangles pointing down in orange indicate the locations where mass will be destroyed. Triangles pointing up in green indicate the locations where mass will be created. On top of that shadows are added to emphasize the change of mass from initial and final configurations. The Top two plots exhibit two extreme cases when performing HK geodesic. When $s$ is small, everything is created/destroyed, when s is large everything is transported \textit{without mass-preservation}. On the other side, the Bottom two plots show the two analogous cases for OPT geodesics. When $\lambda$ is small we observe creation/destruction, when $\lambda$ is large we have \textit{mass-preserving transportation}. The intermediate cases are treated in the following.}
    \label{fig: Dirac_HK_OPT}
\end{figure}

In addition, in Figure \ref{fig: trig config} we not only compare HK and OPT, but also LHK and LOPT. The top subfigure \ref{fig: trig config a} shows the measures $\mu_1$ (blue dots) and $\mu_2$ (orange crosses) to be interpolated with the different techniques in the next subfigures \ref{fig: trig config b} and \ref{fig: trig config c}. Also, in \ref{fig: trig config a} we plot the reference $\mu_0$ (green triangles) which is going to be used only in experiment \ref{fig: trig config c}. 
The measures $\mu_1$ and $\mu_2$ are interpreted as follows. The three masses on the interior, forming a triangle, can be considered as the signal information and the two masses on the corners can be considered noise. That is, we can assume they are just two noisy point cloud representations of the same distribution shifted. We want to transport the three masses in the middle without affecting their mass and relative positions too much. 
Subfigure \ref{fig: trig config b} shows HK and OPT interpolation between the two measures $\mu_1$ and $\mu_2$. In the HK cases, we notice not only how the masses vary, but also how their relative positions change obtaining a very different configuration at the end of the interpolation. OPT instead returns a more natural interpolation because of the mass preservation of the transported portion and the decoupling between transport and destruction/creation of mass. 
Finally, subfigure \ref{fig: trig config c} shows the interpolation of $\mu_1$ and $\mu_2$ for LHK and LOPT. The reference measure $\mu_0$ is the same for both and we can see the \textit{denoising} effect due to the fact that, for the three measures, the mass is concentrated in the triangular region in the center. However, while mass and structure-preserving transportation can be seen for LOPT, for LHK the shape of the configuration changes.

On top of that, as is often the case on quantization, point cloud representations of measures are given as samples with uniform mass. OPT/LOPT interpolation will not change the mass of each transported point. Therefore, the intermediate steps of an algorithm using OPT/LOPT transport benefit from conserving the same type of representation. That is, as a uniform set of points.

\begin{figure}
\centering
\begin{subfigure}{0.75\textwidth}
\centering
\includegraphics[width=0.5\textwidth]{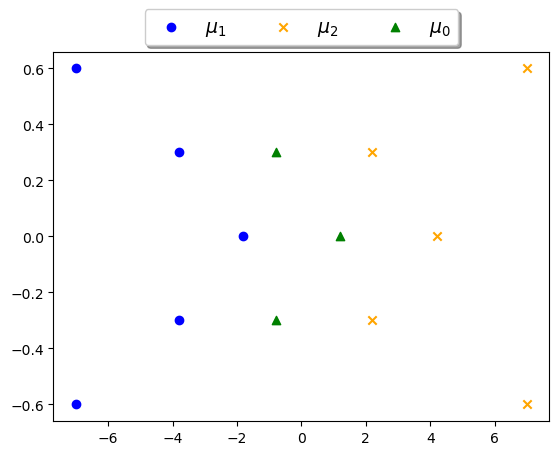}
        \caption{Plot of two measures ($\mu_1$ as blue circles and $\mu_2$ as orange crosses, where each point has mass one), and a reference measure $\mu_0$ (concentrated on the three green triangular locations, unit mass each).}
        \label{fig: trig config a} 
\end{subfigure}
\begin{subfigure}{0.75\textwidth}
        \centering
\includegraphics[width=0.8\textwidth]{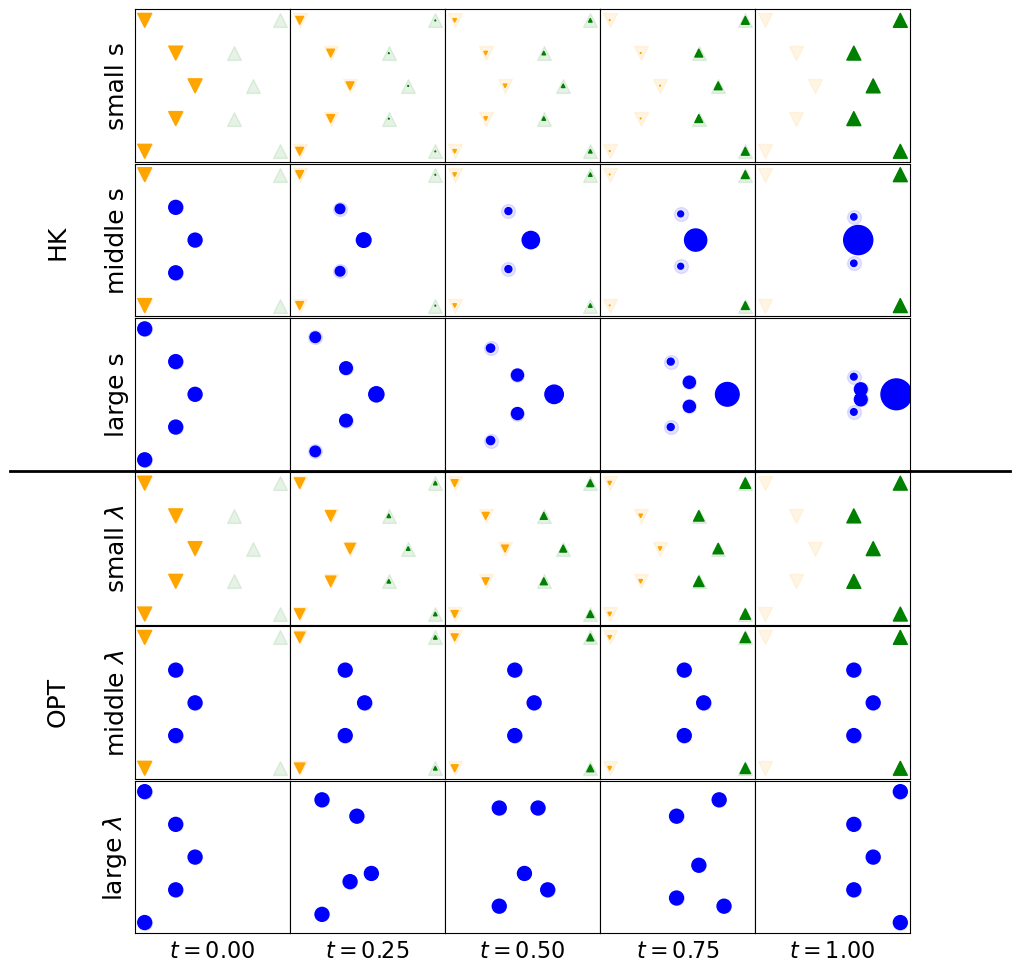}
        \caption{HK and OPT interpolation between the two measures $\mu_1$ and $\mu_2$ at different times. The color code is the same as for Figure \ref{fig: Dirac_HK_OPT}.}
        \label{fig: trig config b} 
    \end{subfigure}\hspace{1em}
\begin{subfigure}{0.75\textwidth}
\centering
\includegraphics[width=0.8\textwidth]{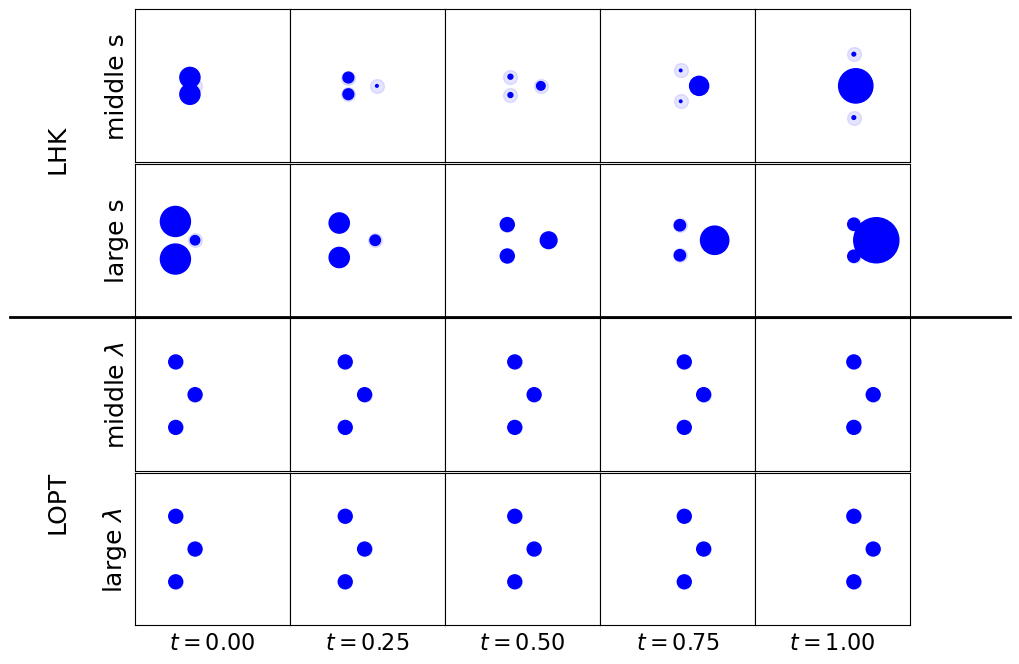}
        \caption{LHK and LOPT interpolation between the two measures $\mu_1$ and $\mu_2$ at different times using for both cases the same reference $\mu_0$.}
        \label{fig: trig config c} 
    \end{subfigure}   
    \caption{HK vs OPT and LHK vs LOPT}
        \label{fig: trig config}  
\end{figure}

For comparison on PointCloud interpolation using MNIST, we include Figure \ref{fig: all comparisons} that illustrates OPT, LOPT, HK, and LHK interpolation for digit pairs (0,1) and (0,9). In the visualizations, the size of each circle is plotted according to amount of the mass at each location.

\begin{figure}
    \centering
    \begin{subfigure}[b]{1\textwidth}
        \centering
        \includegraphics[width=0.3\textwidth]{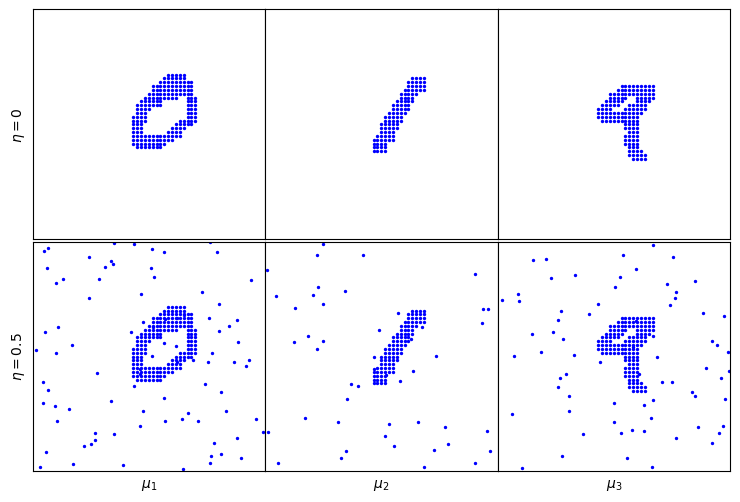}
        \caption{Samples of digits 0, 1, and 9 from MNIST Data Set. Top row: clean point cloud. Bottom row: $50\%$ of noise.}
        \label{fig: data 0 1 9}        
    \end{subfigure}
        \begin{subfigure}[b]{0.49\textwidth}
        \centering
        \includegraphics[width=1\textwidth]{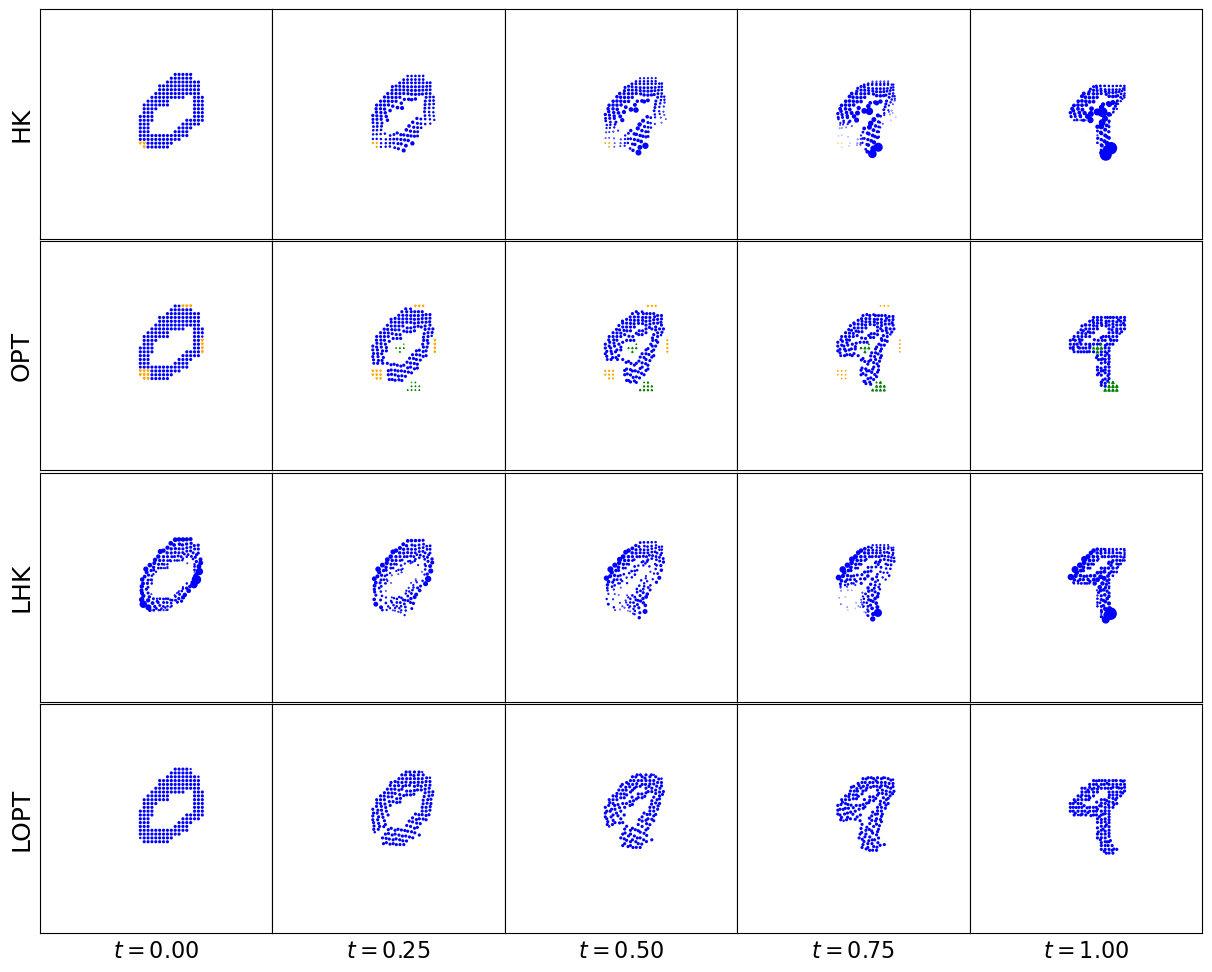}
        \caption{Noise level $\eta=0$.}
        \label{fig: 0-9.clean.weight}        
    \end{subfigure}
        \begin{subfigure}[b]{0.49\textwidth}
        \centering
        \includegraphics[width=1\textwidth]{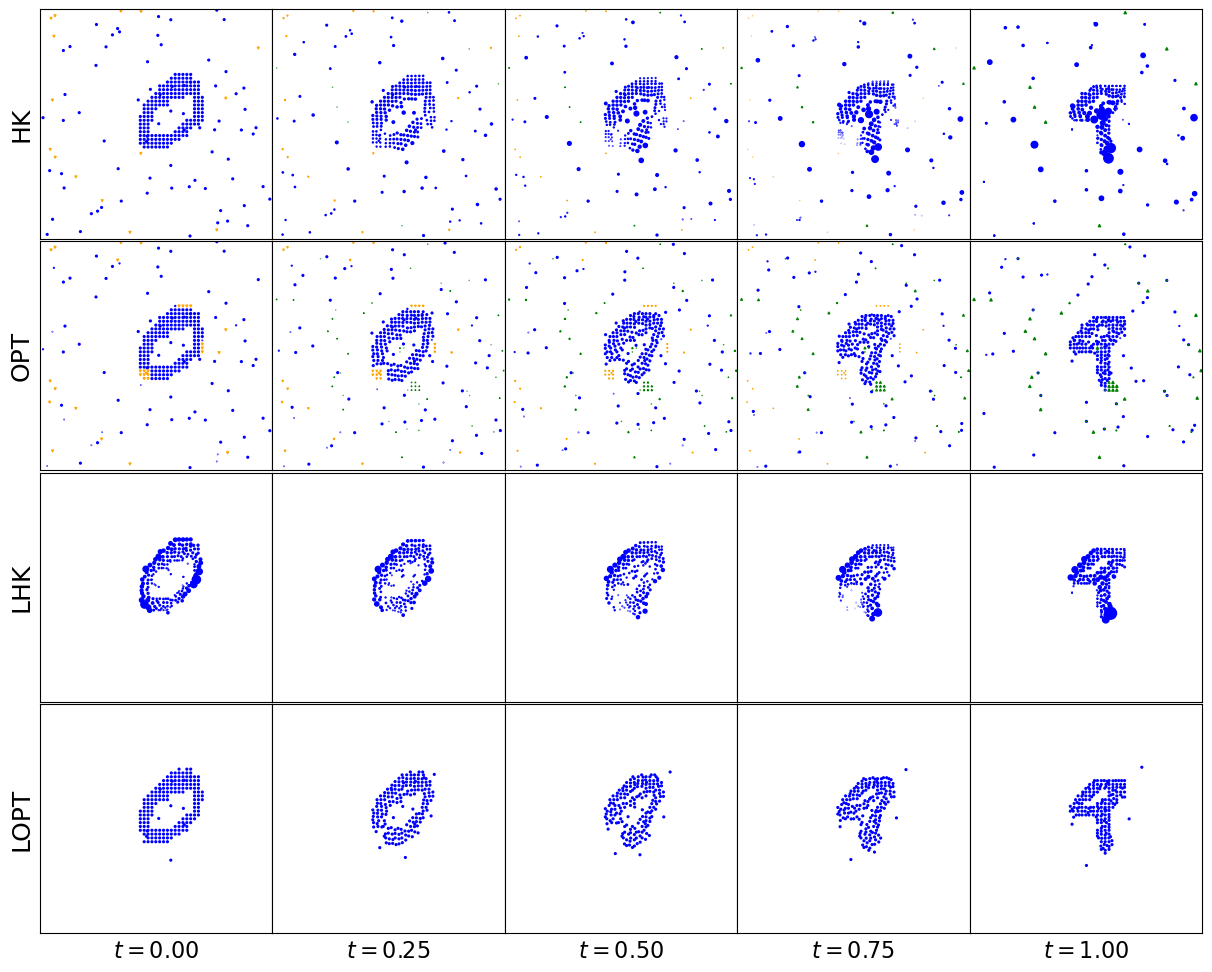}
        \caption{Noise level $\eta=0.5$}
        \label{fig: 0-9.noise.weight}        
    \end{subfigure}
    \begin{subfigure}[b]{0.49\textwidth}
        \centering
        \includegraphics[width=1\textwidth]{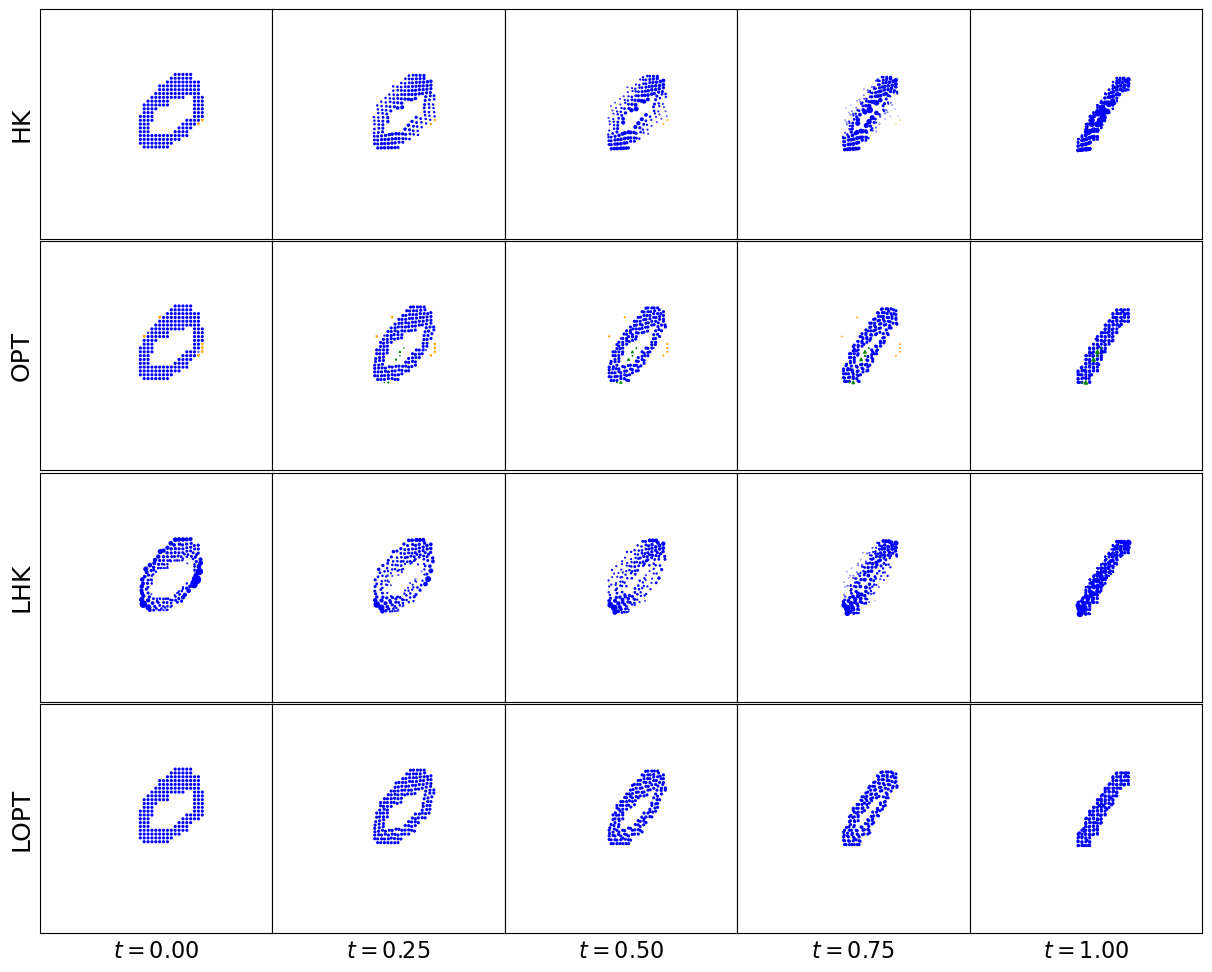}
        \caption{Noise level $\eta=0.$}
        \label{fig: 0-1.clean.weight}        
    \end{subfigure}
        \begin{subfigure}[b]{0.49\textwidth}
        \centering
        \includegraphics[width=1\textwidth]{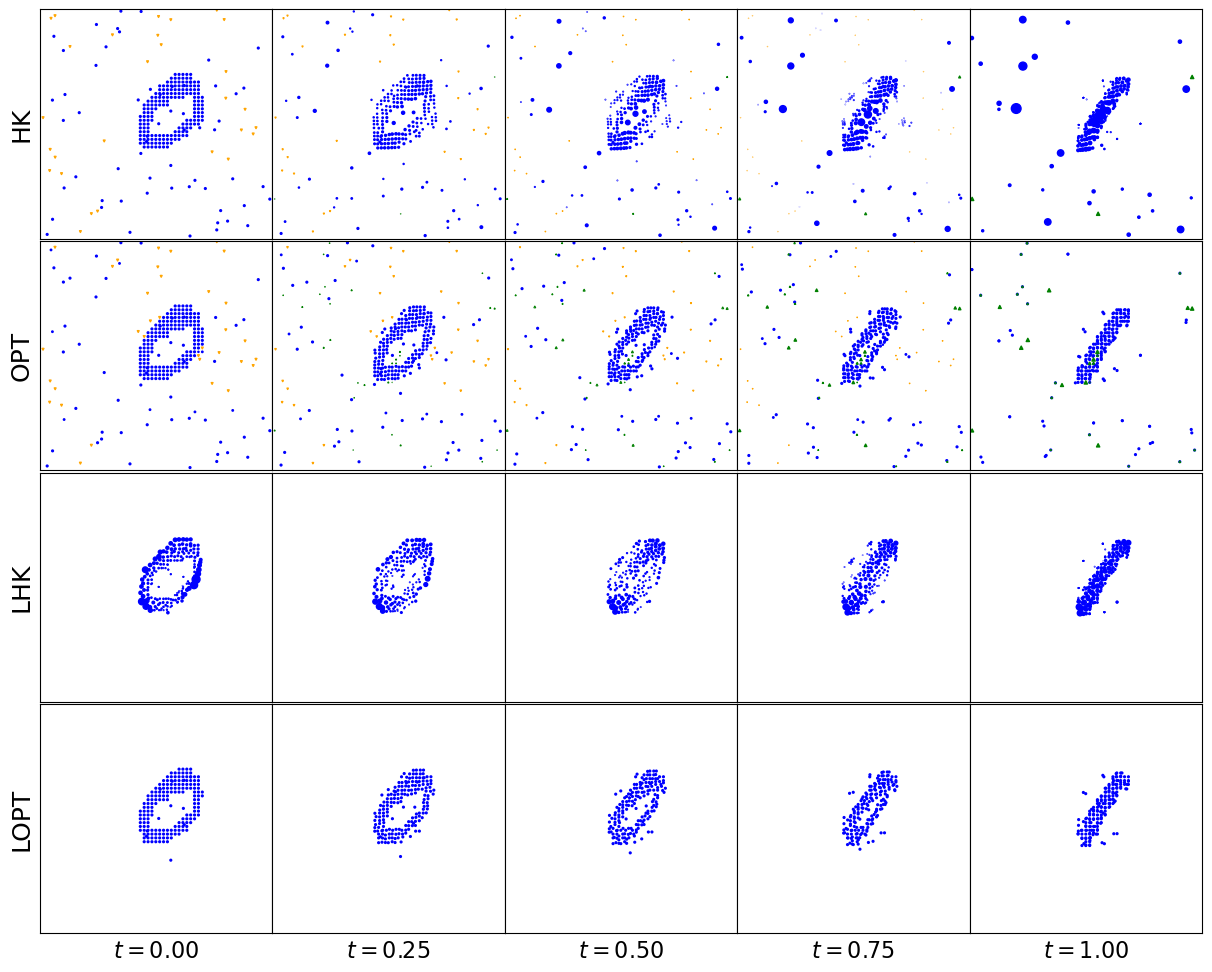}
        \caption{Noise level $\eta=0.5$.}
        \label{fig: 0-1.noise.weight}        
    \end{subfigure}
    \caption{Point cloud interpolation using all the techniques unbalanced transportation HK, OPT, LHK, and LOPT.}
    \label{fig: all comparisons}
\end{figure}

However, we do not claim the presented LOPT tool to be a one size fits all kind of tool.
We are working on the subtle differences between LHK and LOPT and expect to have a complete and clear picture in the future. The aim of this article was to present a new tool with an intuitive introduction and motivation so that the OT community would benefit from it.

\subsection{Barycenter computation}

Can the linear embedding technique be used to compute the barycenter of a set of measures, e.g., by computing the barycenter of the embeddings and performing an inversion to the original space?

One can first calculate the mean of the embedded measures, and recover a measure from this mean. The recovered measure is not necessarily the OPT barycenter, however, one can repeat this process and obtain better approximations of the barycenter. Similar to LOT, we have numerically observed that such a process will converge to a measure that is close to the barycenter. However, there are several technical considerations that one needs to pay close attention. For instance, the choice of $\lambda$ and the choice of the initial reference measure are critical in this process.

\begin{figure}
    \centering
    \includegraphics[width=1\textwidth]{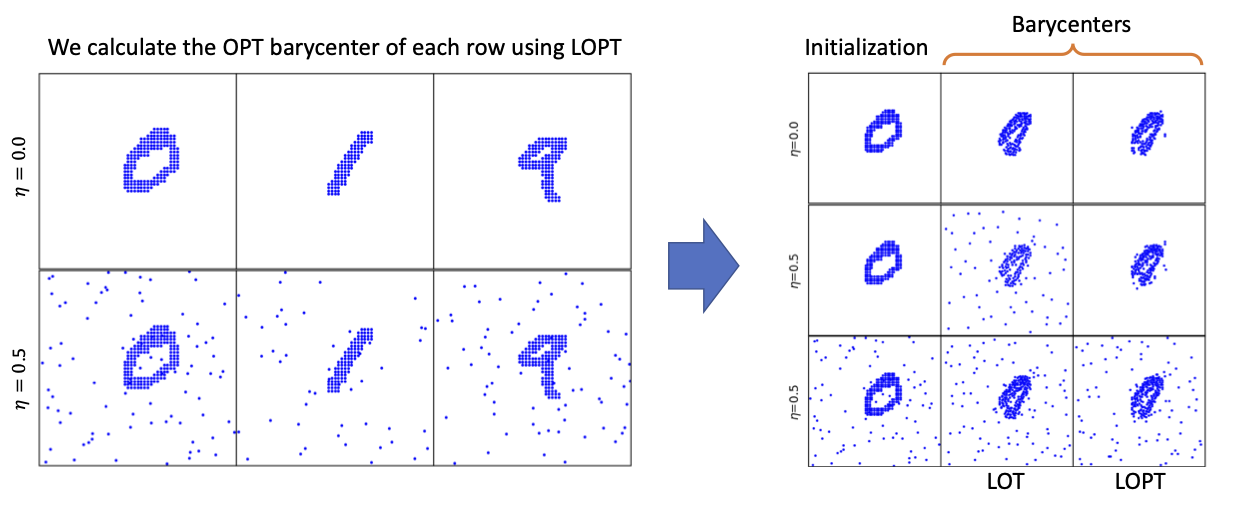}
    \caption{
    The depiction of barycenters between digits $0$ and $1$, and between $0$ and $9$ using the LOPT technique. 
    Left panel: original measures (point-clouds) $\mu_1$ (digit $0$), $\mu_2$ (digit $1$), and $\mu_3$ (digit $9$). We considered them with no noise on the top left panel ($\eta=0$), and with corrupted under noise on the bottom left panel (level of noise $\eta=0.5$). 
    Right panel: The first row is the result that both initial measure and data $\mu_1,\mu_2,\mu_3$ are clean data; the second row is the result of clean initial measure and noise corrupted data; the third row is the result for noise corrupted initial measure and data.}
    \label{fig: baricenter_new_computation}
\end{figure}

\end{document}